\newtheorem{definition}{Definition}
\newtheorem{assumption}{Assumption}
\newtheorem{proof}{Proof}
\newtheorem{lemma}{Lemma}
\newtheorem{corollary}{Corollary}
\newtheorem{theorem}{Theorem}
\newcommand{\LB}{LB^{u}}
\newcommand{\UB}{UB^{u}}
\title{Inject Machine Learning into Significance Test for Misspecified Linear Models}
\author{%
Jiaye Teng\\
Shanghai University of Finance and Economics \\
\texttt{2016110299@live.sufe.edu.cn}\\
\And
Yang Yuan *\\
Tsinghua University\\
\texttt{yuanyang@tsinghua.edu.cn}\\
}
\begin{document}

\maketitle

\begin{abstract}
Due to its strong interpretability, linear regression is widely used in social science, from which significance test provides the significance level of models or coefficients in the traditional statistical inference.
However, linear regression methods rely on the linear assumptions of the ground truth function, which do not necessarily hold in practice. 
As a result, even for simple non-linear cases, linear regression may fail to report the correct significance level. 

In this paper, we present a simple and effective assumption-free method for linear approximation in both linear and non-linear scenarios. 
First, we apply a machine learning method to fit the ground truth function on the training set and calculate its linear approximation.
Afterward, we get the estimator by adding adjustments based on the validation set.
We prove the concentration inequalities and asymptotic properties of our estimator, which leads to the corresponding significance test. 
Experimental results show that our estimator significantly outperforms linear regression for non-linear ground truth functions, indicating that our estimator might be a better tool for the significance test. 
\end{abstract}

\section{Introduction}

Linear regression is commonly used in practice because it can provide explanations for important features by significance test, but the required linear assumptions (e.g. linearity and normality) on the ground truth function are easily violated in practice \citep{osborne2002four,casson2014understanding}. 
This problem is critical: 
when linear assumptions are violated, 
one may ignore important features or focus on unimportant features due to the \emph{fake significance level}. 
One potential solution is considering all kinds of ground truth functions by removing the linear assumption, but this triggers another problem: since there are numerous types of non-linear functions, how can we learn the feature importance without knowing the exact type of the ground truth function?

The answer is 
simply applying linear regression to the unknown non-linear ground truth functions, i.e. using \emph{misspecified linear models}~\citep{fahrmexr1990maximum,hainmueller2014kernel,grunwald2017inconsistency,markiewicz2019linear}. 
Apart from the fake significance level, 
misspecified linear models can also address other problems of linear regression, such as bias, inefficiency, and incorrect inferences (e.g. \citet{king2006the}). Indeed, 
as we will show in Section~\ref{Sec: Experiment}, traditional significance test based on linear regression fails even for the simple non-linear ground truth functions like square function.

The most common approach for misspecification is introducing high-order terms and interactions (e.g. \citet{friedrich1982defense,brambor2006understanding}), but this only works for the prescribed types and usually cannot find the correct functional form.
Another line of work \citep{white1980using,berk2013valid,mackinnon1985some,buja2015models,bachoc2020uniformly} tries to do the significance test directly based on the least square estimation, and derive the consistent estimators of its variance. 
The downside of this approach is that the corresponding estimators contain inevitable system errors and bias due to wrong model selections (see Section~\ref{Sec: Unbiaedness}).

In this work, we introduce machine learning methods into the misspecified linear models, where we do not need to know the correct functional form and also effectively avoid system errors.
We first use a machine learning method to fit the ground truth function in the training step and estimate the corresponding linear approximation.
Afterward, we correct the mistakes made by the machine learning methods in the validation step.
We show a positive correlation between the performance of the underlying machine learning method and the performance of our new estimator (see Theorem~\ref{Thm: MSE}).
Moreover, we prove the concentration inequalities (see Theorem~\ref{Thm: fillingTheBias}) and asymptotic properties (see Theorem~\ref{Thm: Asymptotic}) of the newly proposed estimator, which can be further applied into the significance test.

Several experiments are conducted to show that this newly proposed estimator works well in both non-linear and linear scenarios. 
Especially, our newly proposed estimator can significantly outperform the traditional linear regression (see Table~\ref{Tab: NonLinearCorrectness}) when considering the Kolmogorov-Smirnov statistic in the non-linear scenarios (square function).
This indicates that we make fewer mistakes in the significance test. 
For example, as we will show in Section~\ref{Sec: Exp_Nonlinear}, 
in the non-linear scenario, 
our method makes mistakes with probability $2.15\%$, while for the traditional linear regression, the number is $7.63\%$.

\section{Related Work}
The research on misspecified linear models can be broadly divided into \emph{Conformal Prediction}, which focuses on the inference of prediction, and \emph{Parameter Inference}, which focuses on the inference of the linear approximation parameter of ground truth function.

\textbf{Conformal Prediction}
is a framework pioneered by \citet{DBLP:journals/sigact/Law06}, which uses past experience to determine precise levels of confidence in new predictions. Conformal prediction \citep{DBLP:journals/jmlr/ShaferV08,DBLP:journals/corr/PapadopoulosVG14,barber2019conformal,2020arXiv200410181C,zeni2020conformal} mainly focuses on the confidence interval for predictions, so it cannot provide explanations for feature importance.
Our work can be regarded as a parallel line of conformal prediction that focuses on the assumption-free parameter estimation confidence interval.

\textbf{Parameter Inference} can be dated back to \citet{white1980using,mackinnon1985some}, where sandwich-type estimators for variance are proposed. 
Furthermore, \citet{buja2015models} introduce M-of-N Bootstrap techniques to improve the estimation of variance. 
\citet{hainmueller2014kernel} reduce this misspecification bias from a kernel-based point of view.
Some other techniques, e.g. LASSO \citep{lee2016exact}, least angle regression \citep{taylor2014exact} are introduced in the post-selection inference.
And some works \citep{rinaldo2016bootstrapping,buhlmann2015high} focus on a high-dimensonal reversions.
More discussions can be found in \citet{berk2013valid,bachoc2020uniformly}.
However, this line of works relies on the direct misspecification of linear models, which means system errors are inevitable when the ground-truth function is non-linear.
Furthermore, this type of estimator based on the least square estimation contains much bias, which will be further discussed in Section~\ref{Sec: Unbiaedness}.
In this paper, we use a machine learning based estimator instead of least square estimation, which contains less bias as we will see later in Section~\ref{Sec: Unbiaedness}.

\section{Preliminaries}
In this section, we define the basic notations, starting from the definition of function norm and function distance.
\begin{definition}[Function Norm and Function Distance]
\label{Def: FuncNorm}
Given a functional family $\mathcal{H}$ defined on domain $\mathbb{D}$, 
for any  $f \in \mathcal{H}$ and
a probability distribution $\mathbb{P}$
on $\mathbb{D}$ with density $\mu(x)$, 
the function $2$-norm of $f$ with respect to 
$\mathbb{P}$ is defined as 
\begin{equation*}
    \|f\|_{\mathbb{P}}\triangleq
\left(\int_{\mathbb{D}}[f(x)]^{2}  \mathrm{d} \mathbb{P} \right)^{1 / 2}=
\left(\int_{\mathbb{D}}[f(x)]^{2} \mu(x) \mathrm{d} x \right)^{1 / 2}
\end{equation*}

When the context is clear, we simply use $\|f\|$ instead. 
Moreover, the function distance between $f \in \mathcal{H}$ and $g \in \mathcal{H}$ is defined as 
\begin{equation*}
    d(f,g) \triangleq \|f-g\|=\left(\int_{\mathbb{D}}[f(x)-g(x)]^{2} \mathrm{d} \mathbb{P}\right)^{1 / 2}
\end{equation*}
\end{definition}

Based on function distance, we can define the least square linear approximation, or simply  linear approximation.

\begin{definition}[Linear Approximation]
For a given function $f(x)$, its least square linear approximation $g(x)$ is defined as
$$
g(x) \triangleq \operatorname{argmin}_{g \in \mathcal{G}}\left\|g-f\right\|
$$
where $\mathcal{G}$ is the linear functional family.
\end{definition}

The traditional linear regression uses a single dataset to compute the parameters, but our method splits the dataset into two parts, a training set, and a validation set, as defined below. 
\begin{definition}[Training and Validation Set]
\label{Def: Split}
Given a dataset $S=\left\{\left(x^{(i)}, y^{(i)}\right)\right\}_{i=1}^{N}$, we randomly split $S$ into training set $S_1$ and validation set $S_2$, where $S_1 \cap S_2 = \emptyset$, $S_1 \cup S_2 = S$, and $|S_1| = N_1$, $|S_2|=N_2$.
\end{definition}

In some scenarios, 
we may have additional $N_u$ unlabeled data points, therefore in total $N_t=N+N_u$ data points. 
Usually, unlabeled data are easier to get than labeled data, 
which can be used to calculating the linear approximation of the machine learning model, and help to improve the estimation of $\mathbb{E}(xx^T)$.
As we will discuss in Section~\ref{Sec: Experiment},
our analysis still applies without unlabeled data, given that the machine learning model is the linear form, and $\mathbb{E}(xx^T)$ is estimated from only the validation set. 
But more unlabeled data could help enrich the patterns of our choices for machine learning models.

In order to evaluate the performance of our model on the population distribution, we need to estimate the upper and lower bounds of a given function $u$ (will be defined later). 
\begin{definition}[Upper and Lower Bounds]
\label{Def: EmpiricalBound}
Given a function $u(x,y)$ defined on $\mathbb{Z}$, the upper and lower bounds of $u$ is
\begin{align*}
&L B^{u}=\inf_{(x,y)\sim \mathbb{Z}} \left\{u\left(x,y\right)\right\}\\
&U B^{u}=\sup_{(x,y)\sim \mathbb{Z}} \left\{u\left(x,y\right)\right\}
\end{align*}
Similarly, given a dataset $S=\left\{\left(x^{(i)}, y^{(i)}\right)\right\}_{i=1}^{N}$, where $\left(x^{(i)}, y^{(i)}\right) \sim \mathbb{Z}$, 
the empirical lower bound and empirical upper bound of $u\left(x, y\right)$ on set $S$ is defined by
$$\begin{aligned}
&L B_{S}^{u}=\inf _{\mathrm{i}}\left\{u\left(x^{(i)}, y^{(i)}\right)\right\}\\
&U B_{S}^{u}=\sup _{\mathrm{i}}\left\{u\left(x^{(i)}, y^{(i)}\right)\right\}
\end{aligned}$$
\end{definition}

While $\LB$ and $\UB$ are hard to get, we may assume that $u$ is at least loosely bounded. 
\begin{assumption}
\label{assump:bounded}
Given a loss function $u(x,y)$ defined on $\mathbb{Z}$, we assume $u\in [0,1]$.
\end{assumption}
Notice that for linear regression, Assumption~\ref{assump:bounded} usually holds, as we may assume that the domain of input $x$ is bounded, and the weight $w$ is also bounded. After applying proper scaling, we get $u\in [0,1]$.

As we will see in Theorem~\ref{Thm: MSE}, our analysis depends on  $R\triangleq\UB-\LB$, and smaller $R$ gives more accurate estimations. 
If Assumption~\ref{assump:bounded} holds, we immediately have $R\leq 1$. 
However, with additional prior knowledge on $\LB$ and $\UB$, we may get tighter bounds of $R$  using Bayesian methods, as discussed in Lemma~\ref{Lem: BoundForLoss}.



\begin{lemma}[Tighter Estimation on $R$]
\label{Lem: BoundForLoss}
Given a validation dataset $S = \{(x_i, y_i)\}_{i=1}^{N}$ where data points are randomly sampled from $\mathbb{Z}$, and a function $u$ defined on $\mathbb{Z}$ with bounds $\LB$ and $\UB$ where $\LB$ and $\UB$ are unknown, and $u \sim U(\LB, \UB)$. Let $R=\UB-\LB$, and assume the prior $R \sim U(0,1)$, where $U$ represents uniform distribution. For any $\delta_0 \in (0,1)$,
if $N \geq 2$, we have
$$ \mathbb{P}_{S}\left(UB^u - LB^u \leq (UB^u_{S} - LB^u_{S}) \delta_0^{-\frac{1}{N-1}}\right) \geq 1-\delta_0
$$
where $LB_{S}^u, UB_{S}^u$ are the empirical bounds of $u$ on set $S$. 
\end{lemma}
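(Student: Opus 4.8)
The plan is to read this as a Bayesian statement and to prove the stronger fact that the bound holds conditionally on the realized validation set $S$; the stated inequality then follows by averaging. Write $R=\UB-\LB$ for the true range and $R_S=\eUB-\eLB$ for the empirical range of the loss values $u(x^{(i)},y^{(i)})$. First I would exploit the scale structure of the model: since, given $(\LB,\UB)$, the values $u(x^{(i)},y^{(i)})$ are i.i.d.\ $U(\LB,\UB)$, subtracting the lower end and dividing by $R$ turns them into $N$ i.i.d.\ $U(0,1)$ variables, so the ratio $R_S/R$ has a distribution that does not depend on $R$. Treating $R$ as the unknown scale parameter (with the lower endpoint as a fixed reference, as is natural for a nonnegative loss), the data contributes a likelihood proportional to $R^{-N}$ on the region $\{R\ge R_S\}$ forced by the observations — each rescaled point must land in $[0,1]$, contributing a factor $1/R$.

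Next I would apply Bayes' rule with the prior $R\sim U(0,1)$. Multiplying the prior density $\mathbf{1}[0\le R\le 1]$ by the likelihood $\propto R^{-N}\mathbf{1}[R\ge R_S]$ gives a posterior density proportional to $R^{-N}$ on $[R_S,1]$, where we use $R_S\le 1$ (true under the model, since $R_S\le R\le 1$). Since $N\ge 2$, the normalizing constant is $\int_{R_S}^{1}R^{-N}\,\mathrm dR=(R_S^{1-N}-1)/(N-1)$, so for $R_S\le r\le 1$ the posterior c.d.f.\ is
\[
\mathbb{P}(R\le r\mid S)=\frac{R_S^{1-N}-r^{1-N}}{R_S^{1-N}-1},
\]
and it equals $1$ for $r\ge 1$.

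Then I would substitute $r=R_S\,\delta_0^{-1/(N-1)}$. If this $r$ exceeds $1$ the event $\{R\le r\}$ has posterior probability $1\ge 1-\delta_0$ and we are done; otherwise $r^{1-N}=R_S^{1-N}\delta_0$, so the displayed c.d.f.\ equals $R_S^{1-N}(1-\delta_0)/(R_S^{1-N}-1)$. Because $R_S\le 1$ and $N\ge 2$ give $R_S^{1-N}\ge 1$, the factor $R_S^{1-N}/(R_S^{1-N}-1)$ is at least $1$, hence $\mathbb{P}(R\le R_S\delta_0^{-1/(N-1)}\mid S)\ge 1-\delta_0$. Since this holds for every realization of $S$, the unconditional bound claimed in the lemma follows immediately.

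The step I expect to be the most delicate is the very first one — justifying that the effective likelihood is exactly $\propto R^{-N}$. If one insists on treating both $\LB$ and $\UB$ as unknown and integrates the location out, the marginal likelihood of $R_S$ picks up an extra $(R-R_S)$ factor, the posterior tail becomes heavier, and the simple estimate above no longer goes through. The clean $R^{-N}$ form is obtained by regarding one endpoint as a known reference, so that $R_S$ plays the role of the maximum of $N$ i.i.d.\ $U(0,R)$ observations; I would make that modeling reduction explicit at the start, after which the rest of the argument is the elementary computation above.
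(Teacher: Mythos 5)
Your proposal is correct and follows essentially the same route as the paper's proof: a Bayesian update of the prior $R\sim U(0,1)$ against the likelihood $\propto R^{-N}$ supported on $\{R\ge R_S\}$, yielding the posterior c.d.f. $\bigl(R_S^{1-N}-r^{1-N}\bigr)/\bigl(R_S^{1-N}-1\bigr)$, after which the paper picks the exact $(1-\delta_0)$-quantile and enlarges it to $R_S\,\delta_0^{-1/(N-1)}$ while you substitute that value directly and verify the probability is at least $1-\delta_0$ — the same computation. Your closing remark about the likelihood being $R^{-N}$ only if one endpoint is treated as a known reference (otherwise an extra $(R-R_S)$ factor appears from marginalizing the location) is a genuine subtlety that the paper's proof silently assumes as well, so flagging it is a point in your favor rather than a deviation.
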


The following Assumption~\ref{assump: Concentration} and Assumption~\ref{assump: Bounded Covariance} mainly focus on the explanatory variables $x$.

\begin{assumption}[Concentration of Explanatory Variables]
\label{assump: Concentration}
Let $x$ be a random vector in $\mathbb{R}^d$, and assume that there exists a constant $K\geq 1$, such that $$\Vert x \Vert_2 \leq K(\mathbb{E}(\Vert x \Vert_2^2))^{\frac{1}{2}}$$ almost surely. 
\end{assumption}

\begin{assumption}[Bounded Second Moment]
\label{assump: Bounded Covariance}
Let $x$ be a random vector in $\mathbb{R}^d$, and assume $\mathbb{E}(xx^T)$ is invertible.
Denote $A \triangleq [\mathbb{E}(xx^T)]^{-1} $ and $\hat{A}_n \triangleq [\frac{1}{n} \sum_{i=1}^{n} x_i x_i^T]^{-1}$.
We assume that 
$$m I_{d} \preccurlyeq A, \hat{A}_n \preccurlyeq M I_{d}$$
\end{assumption}

In the following statements, we always use all data, including labeled and unlabeled data, to estimate $A$, leading to the estimator $\hat{A}_{N_t}$.

\section{Estimation}
In this section, we study the problem of linear approximation $g^*$ of the oracle model $f^*(x)=y$, based on a machine learning framework. 
Specifically, we show the following: 
a) how the performance of our machine learning model affects the linear approximation estimator. 
b) after adding a bias term (we call it the residual term), one can get an estimator with better guarantees.
c) how to run the hypothesis test (coefficient significance) based on the asymptotic distribution of our estimator.
We defer all the proofs to Appendix~\ref{appendix: proof}.

\subsection{Approach Based on MSE}

In this subsection, we study the relationship between the linear approximation functions $g^1$ and $g^*$ given that $f^1$ is close to $f^*$. 
We use mean squared error (MSE) to measure the performance of machine learning models. 

\begin{theorem}[Performance of Machine Learning Models]
\label{Thm: MSE}
For $x \in \mathbb{R}^d$, given oracle model $f^*$ and machine learning model $f^1$, where their linear approximations are $g^{*}= w^{*}x $ and $g^{1}= w^{1}x$, respectively.  
The labeled data is randomly split following Definition~\ref{Def: Split}. 
Denote the loss function as $l (x, y) \triangleq \left(f^{1}(x)-y\right)^{2}$, 
the population loss is then $L = \int l(x,y) \mathrm{d} \mathbb{Z}$. 
Under Assumption~\ref{assump: Bounded Covariance},
for any $\delta \in(0,1)$, the following inequality holds $\text{w.p.} \geq 1-\delta$:
\begin{equation*}
\begin{split}
\left\|g^{*}-g^{1}\right\|_{2} &\leq \sqrt{d} \frac{M}{m} \epsilon\\
\left\|w^{*}-w^{1}\right\|_{2} &\leq \sqrt{d} \frac{M}{\sqrt{m}} \epsilon 
\end{split}
\end{equation*}
where $\epsilon = \sqrt{\frac{1}{N_{2}} \sum_{i=1}^{N_{2}} l^{(i)}+(U B^{l}-L B^{l}) \sqrt{\frac{\log \left(\frac{1}{\delta}\right)}{2 N_{2}}}}$, $l^{(i)}=l(x^{(i)}, y^{(i)})$ is the $i_{th}$ sample loss defined on validation set.
\end{theorem}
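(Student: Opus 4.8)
The plan is to chain four elementary steps. A Hoeffding bound on the validation set upgrades the empirical loss into a high-probability upper bound on the population loss $L$; because $f^{*}$ is the oracle (so $f^{*}(x)=y$), $L$ equals the squared function distance $\|f^{1}-f^{*}\|^{2}$; and then two linear-algebra estimates, using only the eigenvalue bounds of Assumption~\ref{assump: Bounded Covariance}, translate between the function distance $\|f^{1}-f^{*}\|$, the coefficient distance $\|w^{1}-w^{*}\|_{2}$, and the function distance of the linear approximations $\|g^{1}-g^{*}\|_{2}$.

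First I would condition on the training set $S_{1}$, so that the learned model $f^{1}$, and hence its population loss $L=\mathbb{E}_{\mathbb{Z}}[(f^{1}(x)-y)^{2}]$, is fixed. The validation losses $l^{(1)},\dots,l^{(N_{2})}$ are then i.i.d.\ copies of $l(x,y)$, each lying in $[LB^{l},UB^{l}]$ with mean $L$. Applying Hoeffding's inequality to $\frac{1}{N_{2}}\sum_{i}l^{(i)}$ and solving $\exp\!\big(-2N_{2}t^{2}/(UB^{l}-LB^{l})^{2}\big)=\delta$ for $t$ gives, with probability at least $1-\delta$ over the validation draw for every fixed $S_{1}$ (hence unconditionally), $L\le\epsilon^{2}$, with $\epsilon$ exactly the quantity in the statement. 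This is the step where the train/validation split is essential: $f^{1}$ must be independent of the points used to estimate its own population loss, otherwise the $l^{(i)}$ are neither i.i.d.\ nor of mean $L$. Finally, since $f^{*}(x)=y$ we have $l(x,y)=(f^{1}(x)-f^{*}(x))^{2}$, so $L=\|f^{1}-f^{*}\|^{2}$ and therefore $\|f^{1}-f^{*}\|\le\epsilon$ on the good event.

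Next I would move from the function distance to the coefficients. Writing $h=f^{*}-f^{1}$ and $A=[\mathbb{E}(xx^{T})]^{-1}$, the first-order optimality condition for the least-squares linear approximation gives $w=A\,\mathbb{E}[x\,f(x)]$ for any $f$, hence $w^{*}-w^{1}=A\,\mathbb{E}[x\,h(x)]$. Then $\|w^{*}-w^{1}\|_{2}\le\|A\|_{\mathrm{op}}\,\|\mathbb{E}[x\,h(x)]\|_{2}\le M\,\|\mathbb{E}[x\,h(x)]\|_{2}$ by Assumption~\ref{assump: Bounded Covariance}, and coordinatewise Cauchy--Schwarz gives $\|\mathbb{E}[x\,h]\|_{2}^{2}=\sum_{j}\big(\mathbb{E}[x_{j}h]\big)^{2}\le\sum_{j}\mathbb{E}[x_{j}^{2}]\,\|h\|^{2}=\operatorname{tr}\!\big(\mathbb{E}(xx^{T})\big)\,\|h\|^{2}\le\frac{d}{m}\,\|h\|^{2}$, using $\mathbb{E}(xx^{T})=A^{-1}\preccurlyeq\frac{1}{m}I_{d}$. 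Combining with $\|h\|=\|f^{*}-f^{1}\|\le\epsilon$ yields $\|w^{*}-w^{1}\|_{2}\le M\sqrt{d/m}\,\epsilon=\sqrt{d}\,\frac{M}{\sqrt{m}}\,\epsilon$, the second claimed bound. For the first bound, $g^{*}-g^{1}=(w^{*}-w^{1})x$ gives $\|g^{*}-g^{1}\|_{2}^{2}=(w^{*}-w^{1})^{T}\mathbb{E}(xx^{T})(w^{*}-w^{1})\le\|\mathbb{E}(xx^{T})\|_{\mathrm{op}}\,\|w^{*}-w^{1}\|_{2}^{2}\le\frac{1}{m}\,\|w^{*}-w^{1}\|_{2}^{2}$, again by $\mathbb{E}(xx^{T})\preccurlyeq\frac{1}{m}I_{d}$; taking square roots and substituting gives $\|g^{*}-g^{1}\|_{2}\le\sqrt{d}\,\frac{M}{m}\,\epsilon$.

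I do not expect a deep obstacle here; the one point requiring care is the concentration step — being explicit that conditioning on $S_{1}$ is what makes the validation losses i.i.d.\ with mean equal to the true population loss of the learned $f^{1}$, so Hoeffding applies with range $UB^{l}-LB^{l}$. The rest is bookkeeping with the operator-norm bounds of Assumption~\ref{assump: Bounded Covariance} and the identity $\operatorname{tr}(\mathbb{E}(xx^{T}))=\mathbb{E}\|x\|_{2}^{2}$. One could alternatively bound $\|g^{*}-g^{1}\|_{2}\le\|f^{*}-f^{1}\|\le\epsilon$ directly from non-expansiveness of the $L^{2}(\mathbb{P})$ projection onto $\mathcal{G}$, which is sharper than the stated function-distance bound but does not yield the coefficient bound; so the route through $w^{*}-w^{1}$ above is the natural one for establishing both inequalities together.
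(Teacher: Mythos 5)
Your proposal is correct and follows essentially the same route as the paper: a Hoeffding bound on the validation losses (made valid by conditioning on the training set) gives $L=\|f^{1}-f^{*}\|^{2}\le\epsilon^{2}$, and then the closed form $w=[\mathbb{E}(xx^{T})]^{-1}\mathbb{E}[xf(x)]$ together with coordinatewise Cauchy--Schwarz and the eigenvalue bounds of Assumption~\ref{assump: Bounded Covariance} yields exactly the paper's two estimates. The only difference is presentational (the paper packages the same steps as an independence lemma plus two auxiliary lemmas), and your closing observation about the non-expansiveness of the $L^{2}$ projection is a correct aside that the paper does not make.
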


\textbf{Remark:} 
Theorem~\ref{Thm: MSE} shows that 
$\epsilon$ controls the approximation quality of $g^1$ and $w^1$, which depends on both the validation set size and the validation loss.
In other words, 
if the machine learning model $f^1$ generalizes well, we get good estimations of $g^*$ and $w^*$. 

The term $\epsilon$ depends on $R=UB^l-LB^l$, which is bounded by $1$ based on Assumption~\ref{assump:bounded}. Although this term shrinks as the validation set grows, below we show that it can be bounded more accurately using Lemma~\ref{Lem: BoundForLoss}. 

\begin{corollary}
\label{Cor: MSE}
Under the assumptions of Lemma~\ref{Lem: BoundForLoss} and Theorem~\ref{Thm: MSE}, 
by replacing $u$ in Lemma~\ref{Lem: BoundForLoss} by loss function $l$ and plugging it into Theorem~\ref{Thm: MSE},
we have $w.p. \geq 1 - \delta - \delta_0$
\begin{equation*}
\begin{split}
\left\|g^{*}-g^{1}\right\|_{2} &\leq \sqrt{d} \frac{M}{m} \epsilon_{S_2}\\
\left\|w^{*}-w^{1}\right\|_{2} &\leq \sqrt{d} \frac{M}{\sqrt{m}} \epsilon_{S_2} 
\end{split}
\end{equation*}

where $\epsilon_{S_2} = \sqrt{\frac{1}{N_{2}} \sum_{i=1}^{N_{2}} l^{(i)}+(U B_{S_2}^{l}-L B_{S_2}^{l})\delta_0^{-\frac{1}{N_2-1}} \sqrt{\frac{\log \left(\frac{1}{\delta}\right)}{2 N_{2}}}}$
\end{corollary}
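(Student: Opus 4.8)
The plan is to chain Theorem~\ref{Thm: MSE} with Lemma~\ref{Lem: BoundForLoss} and pay for the two failure probabilities with a single union bound. Theorem~\ref{Thm: MSE} already delivers the exact inequalities we want, except that the quantity $\epsilon$ appearing there contains the unobservable range $R = UB^{l} - LB^{l}$; the entire content of the corollary is to trade this for the observable empirical range $UB^{l}_{S_2} - LB^{l}_{S_2}$, which is precisely what Lemma~\ref{Lem: BoundForLoss} supplies.

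First I would apply Lemma~\ref{Lem: BoundForLoss} with the function $u$ taken to be the loss $l$ and the dataset taken to be the validation set $S_2$ (so the ``$N$'' of the lemma is $N_2$, which is why $N_2 \ge 2$ is needed). This yields an event $\mathcal{E}_1$, measurable with respect to $S_2$, with $\mathbb{P}_{S_2}(\mathcal{E}_1) \ge 1 - \delta_0$, on which $UB^{l} - LB^{l} \le (UB^{l}_{S_2} - LB^{l}_{S_2})\,\delta_0^{-1/(N_2-1)}$. Separately, Theorem~\ref{Thm: MSE} gives an event $\mathcal{E}_2$ with $\mathbb{P}(\mathcal{E}_2) \ge 1 - \delta$ on which both norm bounds hold with the original $\epsilon$. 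A union bound then gives $\mathbb{P}(\mathcal{E}_1 \cap \mathcal{E}_2) \ge 1 - \delta - \delta_0$.

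The remaining step is to observe that on $\mathcal{E}_1$ one may safely inflate $\epsilon$ to $\epsilon_{S_2}$. Indeed, $R$ enters $\epsilon$ only through the summand $R\sqrt{\log(1/\delta)/(2N_2)}$ sitting inside the outer square root, and that summand carries a nonnegative coefficient, so $\epsilon$ is nondecreasing in $R$; replacing $R$ by its upper bound from $\mathcal{E}_1$ therefore gives $\epsilon \le \epsilon_{S_2}$. Substituting this into the two inequalities from $\mathcal{E}_2$ yields $\|g^{*}-g^{1}\|_2 \le \sqrt{d}\,\frac{M}{m}\,\epsilon_{S_2}$ and $\|w^{*}-w^{1}\|_2 \le \sqrt{d}\,\frac{M}{\sqrt{m}}\,\epsilon_{S_2}$ on $\mathcal{E}_1 \cap \mathcal{E}_2$, which is the claim.

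I do not expect a genuine obstacle: the argument is pure bookkeeping once the two ingredients are in hand. The only points that need care are that $\mathcal{E}_1$ and $\mathcal{E}_2$ live on the same source of randomness (the draw of the validation set $S_2$), so that the union bound is legitimate and $\epsilon_{S_2}$ in the conclusion is itself an $S_2$-measurable random quantity; that the monotonicity of $\epsilon$ in $R$ is invoked in the correct direction; and that the distributional hypotheses of Lemma~\ref{Lem: BoundForLoss} on $l$ and the prior on $R$ are exactly those imported by the phrase ``under the assumptions of Lemma~\ref{Lem: BoundForLoss}'', so nothing new must be assumed.
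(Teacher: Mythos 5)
Your proposal is correct and follows essentially the same route as the paper: apply Theorem~\ref{Thm: MSE}, invoke Lemma~\ref{Lem: BoundForLoss} with $u=l$ on the validation set to replace the population range $UB^{l}-LB^{l}$ by $(UB^{l}_{S_2}-LB^{l}_{S_2})\delta_0^{-1/(N_2-1)}$ (using that $\epsilon$ is nondecreasing in this range), and then combine the two failure probabilities. The only difference is in the bookkeeping: you use a union bound to get $\mathbb{P}(\mathcal{E}_1\cap\mathcal{E}_2)\ge 1-\delta-\delta_0$ directly, whereas the paper writes the success probability as $(1-\delta)(1-\delta_0)$ --- implicitly treating the two events as independent, which need not hold since both depend on the same draw of $S_2$ --- before relaxing to $1-\delta-\delta_0$; your union bound is the cleaner and unconditionally valid way to reach the same stated guarantee.
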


Intuitively, 
using $g^1$ (the best linear approximation of $f^1$) to approximate $g^*$ seems to the optimal choice. However, as we will show below, this is not true, as $f^1$ may contain bias in the linear setting. 


\subsection{Filling the Bias}
\label{Sec: FillingtheBias}
In this subsection, we will jump out of the restrictions of MSE, and improve our estimation by adding a bias term. 
We first present Lemma~\ref{Lem: covariance} that focuses on the estimation of the second-moment matrix of explanatory variables $\mathbb{E}(x x^T)$. 

\begin{lemma}[The Second Moment Concentration]
\label{Lem: covariance}
Under Assumption~\ref{assump: Concentration} and Assumption~\ref{assump: Bounded Covariance}, 
for every integer n, the following inequality holds $w.p.\geq 1-\delta/2$:
$$
\Vert \hat{A}_n - A \Vert \leq C \frac{M^2}{m} \left( \sqrt{\frac{K^2 d \log{(4d/\delta)}}{n}} +\frac{K^2 d \log{(4d/\delta)}}{n} \right) 
$$
where $C$ is a constant, $A, \hat{A}_n$ are defined in Assumption~\ref{assump: Bounded Covariance}.
\end{lemma}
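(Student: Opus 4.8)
The plan is to reduce the statement to a concentration bound on the \emph{sample second-moment matrix} and then invoke a matrix Bernstein inequality. Throughout, write $\Sigma \triangleq \mathbb{E}(xx^T)$ and $\hat{\Sigma}_n \triangleq \frac{1}{n}\sum_{i=1}^{n} x_i x_i^T$, so that $A = \Sigma^{-1}$ and $\hat{A}_n = \hat{\Sigma}_n^{-1}$ (the inverse being well defined by Assumption~\ref{assump: Bounded Covariance}, which also gives $\Vert A\Vert \le M$ and $\Vert \hat{A}_n\Vert \le M$, with $\Vert\cdot\Vert$ the spectral norm). The first step is the resolvent identity
\begin{equation*}
\hat{A}_n - A \;=\; \hat{\Sigma}_n^{-1} - \Sigma^{-1} \;=\; \hat{\Sigma}_n^{-1}\,(\Sigma - \hat{\Sigma}_n)\,\Sigma^{-1},
\end{equation*}
which, by submultiplicativity of the operator norm, yields $\Vert \hat{A}_n - A\Vert \le \Vert \hat{A}_n\Vert\, \Vert \hat{\Sigma}_n - \Sigma\Vert\, \Vert A\Vert \le M^2\, \Vert \hat{\Sigma}_n - \Sigma\Vert$. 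This already isolates the $M^2$ factor; it remains to produce the bracketed term, including the $1/m$, from $\Vert \hat{\Sigma}_n - \Sigma\Vert$.

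Next I would extract the elementary consequences of the two assumptions that feed the variance and uniform-boundedness parameters. From $A \succcurlyeq m I_d$ we get $\Sigma \preccurlyeq \frac1m I_d$, hence $\Vert\Sigma\Vert \le 1/m$ and $\mathbb{E}\Vert x\Vert_2^2 = \mathrm{tr}(\Sigma) \le d\,\Vert\Sigma\Vert \le d/m$; combining this with Assumption~\ref{assump: Concentration} gives the almost-sure bound $\Vert x\Vert_2^2 \le K^2\,\mathrm{tr}(\Sigma) \le K^2 d/m$. Then apply matrix Bernstein to the i.i.d., self-adjoint, mean-zero summands $Z_i \triangleq \frac1n(x_i x_i^T - \Sigma)$. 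The uniform bound is $\Vert Z_i\Vert \le \frac1n(\Vert x_i\Vert_2^2 + \Vert\Sigma\Vert) \le \frac{2K^2 d}{mn}$ (using $K,d\ge 1$), and for the variance proxy, $\mathbb{E}[(xx^T-\Sigma)^2] \preccurlyeq \mathbb{E}[\Vert x\Vert_2^2\, xx^T] \preccurlyeq K^2\,\mathrm{tr}(\Sigma)\,\Sigma \preccurlyeq \frac{K^2 d}{m^2} I_d$, so $\bigl\Vert \sum_i \mathbb{E} Z_i^2 \bigr\Vert \le \frac{K^2 d}{m^2 n}$. Matrix Bernstein then gives, with probability at least $1-\delta/2$,
\begin{equation*}
\Vert \hat{\Sigma}_n - \Sigma\Vert \;\le\; C_0\!\left( \sqrt{\frac{K^2 d \log(4d/\delta)}{m^2 n}} + \frac{K^2 d \log(4d/\delta)}{mn} \right) \;=\; \frac{C_0}{m}\!\left( \sqrt{\frac{K^2 d \log(4d/\delta)}{n}} + \frac{K^2 d \log(4d/\delta)}{n} \right),
\end{equation*}
and multiplying by $M^2$ as in the first step closes the proof with $C = C_0$.

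The main obstacle I anticipate is bookkeeping rather than anything deep: one must state the matrix Bernstein inequality in the precise form that produces the $\log(4d/\delta)$ factor (the $\delta/2$ is what leaves room for the companion failure event elsewhere in the argument) and absorb all absolute constants into $C$, while correctly passing between the ``intrinsic dimension'' quantity $\mathrm{tr}(\Sigma)/\Vert\Sigma\Vert$ that naturally appears and the ambient dimension $d$ used in the statement — the inequality $\mathrm{tr}(\Sigma) \le d\Vert\Sigma\Vert$ is what makes this clean. A minor point to flag explicitly is that invertibility of $\hat{\Sigma}_n$ (so that $\hat{A}_n$ is defined at all) is built into Assumption~\ref{assump: Bounded Covariance}, so no separate event needs to be controlled for that.
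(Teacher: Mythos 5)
Your proposal is correct and follows essentially the same route as the paper: the paper likewise writes $\hat{A}_n - A = \hat{A}_n(\hat{A}_n^{-1}-A^{-1})A$ to pull out the $M^2$ factor, bounds $\Vert A^{-1}\Vert \le 1/m$, and then invokes the second-moment concentration bound from Vershynin (2018) for the remaining term. The only difference is that you re-derive that cited concentration result from scratch via matrix Bernstein (which is how it is proved in the reference anyway), so your argument is a self-contained version of the same proof.
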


By adding a small bias, we can derive the following Theorem~\ref{Thm: fillingTheBias}, which mainly focuses on the coordinate wise bound. Here we denote $x_j$ as the $j_{th}$ feature of $x$, and $A^j$ as the $j_{th}$ row of matrix $A$.

\begin{theorem}[Adding a bias term]
\label{Thm: fillingTheBias}
For $x \in \mathbb{R}^d$, given oracle model $f^*$ and machine learning model $f^1$, where the linear approximation is $g^{*}= w^{*}x$ and $g^{1}= w^{1}x$, respectively.  
The labeled data is randomly split based on Definition~\ref{Def: Split}. 
Assume that  Assumption~\ref{assump: Concentration} and Assumption~\ref{assump: Bounded Covariance} hold.
Denote $z=x\left(f^{*}(x)-f^{1}(x)\right)$.
Then, for any $\delta \in(0,1)$, the following inequality holds $\text{w.p.} \geq 1-\delta$:
\begin{equation*}
\begin{split}
\left|w^{*}_{j}-w^{1}_{j}-\frac{1}{N_{2}} \sum_{i=1}^{N_2} \hat{A}^{j}_{N_t} z^{(i)}\right| \leq & (U B^{A^j z}-L B^{A^j z}) \sqrt{\frac{\log \frac{4}{\delta}}{2 N_{2}}} \\
& + C \frac{M^2}{m} \Vert \frac{1}{N_2} \sum_{i=1}^{N_2} z^{(i)}\Vert  \left( \sqrt{\frac{K^2 d \log{(4d/\delta)}}{N_t}} +\frac{K^2 d \log{(4d/\delta)}}{N_t} \right)
\end{split}    
\end{equation*}
where $z^{(i)}$ is the realization of $z$ defined on validation set.
\end{theorem}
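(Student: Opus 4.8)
The plan is to reduce $w^*_j - w^1_j$ to a population expectation via the normal equations of least-squares linear approximation, and then control the deviation of the proposed plug-in estimator from that expectation by splitting the error into a Hoeffding part and a covariance-estimation part. For Step 1, since $g(x)=wx$ minimizes $\mathbb{E}[(wx-f(x))^2]$ over linear functions, the first-order condition reads $\mathbb{E}(xx^T)w=\mathbb{E}(xf(x))$, i.e. $w=A\,\mathbb{E}(xf(x))$. Applying this to $f^*$ and $f^1$ and subtracting, and using that $A$ is deterministic, $w^*-w^1=A\,\mathbb{E}\!\left(x(f^*(x)-f^1(x))\right)=\mathbb{E}(Az)$, so in the $j$-th coordinate $w^*_j-w^1_j=\mathbb{E}(A^j z)$, where $A^j$ is the $j$-th row of $A$ and the expectation is over $(x,y)\sim\mathbb{Z}$, conditionally on the training set $S_1$ (which fixes $f^1$).

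For Step 2, write
$$w^*_j - w^1_j - \frac{1}{N_2}\sum_{i=1}^{N_2}\hat{A}^j_{N_t} z^{(i)} = \underbrace{\left(\mathbb{E}(A^j z) - \frac{1}{N_2}\sum_{i=1}^{N_2} A^j z^{(i)}\right)}_{T_1} + \underbrace{\frac{1}{N_2}\sum_{i=1}^{N_2}(A^j - \hat{A}^j_{N_t}) z^{(i)}}_{T_2}.$$
Conditionally on $S_1$, the $z^{(i)}$ over the validation set are i.i.d.\ copies of $z$, and $A^j z$ takes values in $[LB^{A^j z},UB^{A^j z}]$, so Hoeffding's inequality (two-sided tail $2\exp(-2N_2 t^2/(UB^{A^jz}-LB^{A^jz})^2)$ set to $\delta/2$) gives $|T_1|\le (UB^{A^j z}-LB^{A^j z})\sqrt{\log(4/\delta)/(2N_2)}$ with probability at least $1-\delta/2$. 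For $T_2$, Cauchy--Schwarz yields $|T_2|\le \|A^j-\hat{A}^j_{N_t}\|_2\cdot\|\frac{1}{N_2}\sum_i z^{(i)}\|_2\le \|A-\hat{A}_{N_t}\|\cdot\|\frac{1}{N_2}\sum_i z^{(i)}\|_2$, since a row's Euclidean norm is at most the operator norm; then Lemma~\ref{Lem: covariance} with $n=N_t$ bounds $\|A-\hat{A}_{N_t}\|$ by $C\frac{M^2}{m}\big(\sqrt{K^2 d\log(4d/\delta)/N_t}+K^2 d\log(4d/\delta)/N_t\big)$ with probability at least $1-\delta/2$. A union bound makes both events hold simultaneously with probability at least $1-\delta$, and adding the two bounds reproduces the claimed inequality; since the final bound carries no dependence on $S_1$, it also holds unconditionally.

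The delicate points are bookkeeping rather than conceptual. First, $\hat{A}_{N_t}$ is built from all $N_t$ covariate vectors, which overlap with the validation covariates appearing in $T_2$, so $A^j-\hat{A}^j_{N_t}$ cannot be treated as independent of the $z^{(i)}$; the Cauchy--Schwarz step is precisely what sidesteps this, at the price of the data-dependent factor $\|\frac{1}{N_2}\sum_i z^{(i)}\|_2$ on the right-hand side, and this is also why the norm of the empirical mean of $z$, not its expectation, appears in the theorem. Second, $f^1$ is random through $S_1$, handled by conditioning and noting the bound is $S_1$-free. Third, the failure probability must be split so that the $\log(4/\delta)$ from Hoeffding and the $\log(4d/\delta)$ from Lemma~\ref{Lem: covariance} emerge exactly as stated.
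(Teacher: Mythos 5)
Your proposal is correct and follows essentially the same route as the paper's proof: the normal-equation identity $w^*-w^1=A\,\mathbb{E}(z)$, the split into a Hoeffding term for $\mathbb{E}(A^jz)-\frac{1}{N_2}\sum_i A^jz^{(i)}$ and a covariance-estimation term controlled via $\Vert \hat{A}_{N_t}-A\Vert\cdot\Vert\frac{1}{N_2}\sum_i z^{(i)}\Vert$ using Lemma~\ref{Lem: covariance}, and a union bound over the two $\delta/2$ events. Your bookkeeping is in fact slightly more careful than the paper's (you correctly write the submultiplicative step as an inequality and make explicit the conditioning on $S_1$ and the non-independence of $\hat{A}_{N_t}$ from the validation covariates, which the paper glosses over).
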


Notice that $N_t = N_1 + N_2 +N_t > N_2$ is the total number of data. 
The first term in the bound is because we use samples in the validation set to estimate the population.
A tighter bound requires smaller fluctuation for $A^j z$ (which is $U B^{A^j z}-L B^{A^j z}$) and more samples in validation set ($N_2$).
The second term is because we use $\hat{A}_{N_t}$ to replace $A$. 
A smaller condition number $\kappa = M/m$ and a larger $N_t$ help tighten the bound.

Similar to Corollary~\ref{Cor: MSE}, we can use $LB_{S_2}^{A^jz}$ and $UB_{S_2}^{A^jz}$ to replace $LB^{A^jz}$ and $UB^{A^jz}$ under additional assumptions, see below.

\begin{corollary}
\label{Cor: fillingTheBias}
Under the assumptions of Lemma~\ref{Lem: BoundForLoss} and Theorem~\ref{Thm: fillingTheBias}, 
by replacing $u$ in Lemma~\ref{Lem: BoundForLoss} by $A^jz$ and plugging it into Theorem~\ref{Thm: fillingTheBias}, 
we have $w.p. \geq 1 - 3\delta - \delta_0$
\begin{equation*}
\left|w^{*}_{j}-w^{1}_{j}-\frac{1}{N_{2}} \sum_{i=1}^{N_2} \hat{A}^{j}_{N_t} z^{(i)}\right| \leq (U B_{S_2}^{\hat{A}^{j}_{N_t} z}-L B_{S_2}^{\hat{A}^{j}_{N_t} z} + 2b) \delta_0^{-\frac{1}{N_2-1}} \sqrt{\frac{\log \frac{4}{\delta}}{2 N_{2}}} + b 
\end{equation*}
where $b = C \frac{M^2}{m} \max_{S_2}\Vert z^{(i)}\Vert  \left( \sqrt{\frac{K^2 d \log{(4d/\delta)}}{N_t}} +\frac{K^2 d \log{(4d/\delta)}}{N_t} \right)$
\end{corollary}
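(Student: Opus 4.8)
The plan is to take the high-probability inequality of Theorem~\ref{Thm: fillingTheBias} as the starting point and then perform two replacements on its right-hand side: (i) trade the population fluctuation $UB^{A^jz}-LB^{A^jz}$ for an empirical fluctuation, exactly as Corollary~\ref{Cor: MSE} does for the loss $l$, and (ii) trade the unobservable quantity $A^jz$ for the computable quantity $\hat{A}^j_{N_t}z$, since $A$ is unknown in practice. Everything else in the bound is already empirical or controlled by the assumptions, so the argument reduces to quantifying the cost of these two swaps and then taking a union bound over the relevant good events.

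For (i), I would invoke Lemma~\ref{Lem: BoundForLoss} with the substitution $u=A^jz$ — its distributional hypotheses ($A^jz$ uniform on its range and a $U(0,1)$ prior on that range) are part of ``the assumptions of Lemma~\ref{Lem: BoundForLoss}''. This gives, on an event of probability at least $1-\delta_0$,
\begin{equation*}
UB^{A^jz}-LB^{A^jz}\ \leq\ \bigl(UB^{A^jz}_{S_2}-LB^{A^jz}_{S_2}\bigr)\,\delta_0^{-\frac{1}{N_2-1}} .
\end{equation*}
This is still not directly usable because the empirical range on the right involves $A^jz$ rather than $\hat{A}^j_{N_t}z$.

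For (ii), I would bound $A^jz$ against $\hat{A}^j_{N_t}z$ pointwise on $S_2$: for each $i$, $|A^jz^{(i)}-\hat{A}^j_{N_t}z^{(i)}|\le\|A-\hat{A}_{N_t}\|\,\|z^{(i)}\|\le\|A-\hat{A}_{N_t}\|\max_{S_2}\|z^{(i)}\|$, and Lemma~\ref{Lem: covariance} applied with $n=N_t$ bounds $\|A-\hat{A}_{N_t}\|$ by $C\frac{M^2}{m}(\sqrt{K^2d\log(4d/\delta)/N_t}+K^2d\log(4d/\delta)/N_t)$, so each sample of $A^jz$ differs from the corresponding sample of $\hat{A}^j_{N_t}z$ by at most $b$. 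Consequently the empirical maximum of $A^jz$ exceeds that of $\hat{A}^j_{N_t}z$ by at most $b$ and the empirical minimum falls below it by at most $b$, which yields $UB^{A^jz}_{S_2}-LB^{A^jz}_{S_2}\le UB^{\hat{A}^j_{N_t}z}_{S_2}-LB^{\hat{A}^j_{N_t}z}_{S_2}+2b$. Chaining this with (i) produces exactly the coefficient of $\sqrt{\log(4/\delta)/(2N_2)}$ that appears in the corollary. The remaining (covariance-error) term of Theorem~\ref{Thm: fillingTheBias} is handled by the triangle inequality $\|\frac{1}{N_2}\sum_{i=1}^{N_2}z^{(i)}\|\le\max_{S_2}\|z^{(i)}\|$, which turns it into precisely $b$. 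Substituting both bounds into Theorem~\ref{Thm: fillingTheBias} and taking a union bound over the good event of that theorem, the covariance event of Lemma~\ref{Lem: covariance}, and the event of Lemma~\ref{Lem: BoundForLoss} gives the claimed inequality with total failure probability at most $3\delta+\delta_0$ (the gap over the naive $\delta+\delta_0$ coming from accounting the covariance invocation conservatively rather than reusing the event already inside Theorem~\ref{Thm: fillingTheBias}).

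The step I expect to be the main obstacle is (ii): passing from the operator-norm bound on $\|A-\hat{A}_{N_t}\|$ to a pointwise perturbation of $A^jz$ that holds simultaneously over all of $S_2$, and checking that this perturbs the empirical range only additively (the ``$2b$'', not something worse) while keeping the error expressed through the observable $\max_{S_2}\|z^{(i)}\|$ instead of a population quantity. A secondary delicate point is the probability bookkeeping — ensuring the covariance event used for (ii) is compatible with the one invoked inside Theorem~\ref{Thm: fillingTheBias} so the budget stays at $3\delta+\delta_0$ — together with the fact that the uniformity and prior hypotheses of Lemma~\ref{Lem: BoundForLoss} for $A^jz$ are assumed rather than established.
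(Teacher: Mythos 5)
Your proposal matches the paper's own proof essentially step for step: the same invocation of Lemma~\ref{Lem: BoundForLoss} with $u=A^jz$, the same pointwise perturbation $|A^jz^{(i)}-\hat{A}^j_{N_t}z^{(i)}|\le\|A-\hat{A}_{N_t}\|\max_{S_2}\|z^{(i)}\|\le b$ via Lemma~\ref{Lem: covariance} to shift the empirical range by $2b$, the same bound $\|\frac{1}{N_2}\sum_i z^{(i)}\|\le\max_{S_2}\|z^{(i)}\|$ for the second term, and the same (conservative) union-bound accounting yielding $1-3\delta-\delta_0$. The caveats you flag — that the uniformity and prior hypotheses for $A^jz$ are assumed rather than verified — are likewise left implicit in the paper.
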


Therefore, we should use $w^{e}_{j} = w^{1}_{j} + \frac{1}{N_{2}} \sum_{i=1}^{N_2} \hat{A}^{j}_{N_t} z^{(i)}$ as the new estimator. 
Recall the bound in Theorem~\ref{Thm: MSE} (denoted as $B_1$) and the bound in Theorem~\ref{Thm: fillingTheBias} (denoted as $B_2$). We can see that as $N_2$ goes to zero, $B_2 \to 0$ while $B_1 \to \sqrt{d \bar{l}}M/m$, where $\bar{l}$ denote the average sample loss. 
This means that $g^1$ cannot be arbitrarily similar to $g^*$ given a fixed machine learning model even if we have infinite data for validation. 
That is to say, although Theorem~\ref{Thm: MSE} contains the frequently-used MSE as a measure, it causes some natural bias. And Theorem~\ref{Thm: fillingTheBias} filled this bias by adding a correction term.


Furthermore, as the standard practice in statistics, we will derive the asymptotic property of $w^e$ in Section~\ref{Sec: AsymptoticProperties}. 

\subsection{Asymptotic Properties}
\label{Sec: AsymptoticProperties}

In this section, we study the asymptotic property of estimator $w^e$, which gives us tighter and more practical guarantees. 
In the following analysis, we assume that $f^* \neq f^1$. This is without loss of generality because otherwise we can directly use $w^1$ to estimate $w^*$ without any loss. 

\begin{theorem}[Asymptotic Properties]
\label{Thm: Asymptotic}
Given oracle model $f^*$ and machine learning model $f^1$, with its corresponding linear approximation $g^{*}= w^{*}x$, $g^{1}= w^{1}x $, respectively.  
The labeled data is randomly split based on Definition~\ref{Def: Split}. 
Denote $z=x\left(f^{*}(x)-f^{1}(x)\right)$, 
and assume $w^*, w^e, z$ are bounded\footnote{This usually holds in practice as long as $x, y, f^1$ are all bounded.}, 
then under Assumption~\ref{assump: Bounded Covariance} 
, the following asymptotic property of $w^e = w^1 + \frac{1}{N_2} \sum_{i=1}^{N_2} \hat{A}_{N_t} z^{(i)}$ holds:
\begin{equation*}
\hat{\Sigma}_e^{-1} \left(w^e-w^*\right) \stackrel{d}\sim \mathcal{N}{(0, I_d)}
\end{equation*}
where $\mathcal{N}()$ represents normal distribution, $\hat{\Sigma}_{e}^{2}=\frac{1}{N_{2}} \hat{A}_{N_t} \hat{D} \hat{A}_{N_t}$, $\hat{D} = \frac{1}{N_2-d} \sum_{i=1}^{N_2} (z^{(i)} - \bar{z}) (z^{(i)}- \bar{z})^T $, $\bar{z} = \frac{1}{N_2} \sum_{i=1}^{N_2} z^{(i)} $. 
\end{theorem}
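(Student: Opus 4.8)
The plan is to decompose the error $w^e - w^*$ into pieces, identify the single asymptotically dominant stochastic term, apply a central limit theorem to it, and control the remainders. Write $w^e - w^* = (w^1 - w^*) + \frac{1}{N_2}\sum_{i=1}^{N_2} \hat{A}_{N_t} z^{(i)}$. Recalling the population identity behind Theorem~\ref{Thm: fillingTheBias} — namely $w^* - w^1 = A\,\mathbb{E}[z]$ (this is exactly why the correction term was introduced) — I would rewrite the error as
\begin{equation*}
w^e - w^* = \hat{A}_{N_t}\Bigl(\frac{1}{N_2}\sum_{i=1}^{N_2} z^{(i)} - \mathbb{E}[z]\Bigr) + (\hat{A}_{N_t} - A)\,\mathbb{E}[z].
\end{equation*}
The first term is the ``genuine'' fluctuation of a sample mean over the validation set (scale $1/\sqrt{N_2}$); the second is the error from estimating $A$ on the pooled set of $N_t$ points. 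First I would argue the second term is asymptotically negligible relative to the first: by Lemma~\ref{Lem: covariance}, $\|\hat{A}_{N_t}-A\| = O_p(\sqrt{d\log d / N_t})$, and since $N_t \geq N_2$ (indeed typically $N_t \gg N_2$ once unlabeled data are present), after multiplying by $\hat{\Sigma}_e^{-1}$ (which scales like $\sqrt{N_2}$) this contributes $o_p(1)$ under the stated boundedness of $z$ and $\mathbb{E}[z]$ together with Assumption~\ref{assump: Bounded Covariance}.

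Next I would handle the main term. Apply the multivariate Lindeberg–Lévy CLT to the i.i.d.\ bounded vectors $z^{(i)}$: $\sqrt{N_2}\bigl(\bar z - \mathbb{E}[z]\bigr) \stackrel{d}\to \mathcal{N}(0, D)$ where $D = \mathrm{Cov}(z)$. Then $\sqrt{N_2}\,(w^e - w^*) \stackrel{d}\to \mathcal{N}(0, A D A^T)$ by Slutsky, using $\hat{A}_{N_t} \stackrel{p}\to A$ (Lemma~\ref{Lem: covariance} again). It remains to show the plug-in variance is consistent: $\hat{D} \stackrel{p}\to D$ because $\hat D$ is the standard sample covariance of the $z^{(i)}$ with the $N_2 - d$ degrees-of-freedom correction (which is asymptotically irrelevant), and $\hat{A}_{N_t}\hat D \hat{A}_{N_t} \stackrel{p}\to A D A^T$ by the continuous mapping theorem. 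Hence $N_2\,\hat\Sigma_e^2 = \hat{A}_{N_t}\hat D \hat{A}_{N_t} \stackrel{p}\to A D A^T =: \Sigma$, and since $\Sigma \succ 0$ (invertible by Assumption~\ref{assump: Bounded Covariance} applied to $A$, assuming $D$ nondegenerate, which is the content of $f^*\neq f^1$), $\hat\Sigma_e^{-1}$ is well-defined for large $N_2$ and $\hat\Sigma_e^{-1}\sqrt{N_2}^{\,-1} \cdot \sqrt{N_2} \to \Sigma^{-1/2}$ in probability. Combining via Slutsky gives $\hat\Sigma_e^{-1}(w^e - w^*) \stackrel{d}\to \mathcal{N}(0, I_d)$, which is the claim.

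The main obstacle I anticipate is rigorously justifying that the $A$-estimation term is negligible \emph{and} that $\hat{A}_{N_t}$ interacts cleanly with the validation-set average: because the $z^{(i)}$ in the sum and the $x_i x_i^T$ used to form $\hat{A}_{N_t}$ may share the validation points, $\hat{A}_{N_t}$ and $\bar z$ are not independent, so one cannot naively factor the limiting distribution. The fix is to note the dependence is weak — $\hat{A}_{N_t}$ is built from $N_t$ points while only $N_2$ of them overlap with the sum, and more importantly $\hat{A}_{N_t}$ converges to the deterministic limit $A$ fast enough (rate $\sqrt{d\log d/N_t}$) that Slutsky's theorem applies regardless of the joint law. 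A secondary technical point is the Lindeberg condition, which is immediate from the assumed boundedness of $z$, so the delicate part really is the bookkeeping that shows every term other than $\sqrt{N_2}\,A(\bar z - \mathbb{E}[z])$ vanishes in probability after normalization.
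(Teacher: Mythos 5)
Your proposal is correct and follows essentially the same route as the paper: apply the multivariate CLT to the validation-set average of $z$, then use Lemma~\ref{Lem: covariance} and Slutsky's theorem to absorb the replacement of $A$ by $\hat{A}_{N_t}$ and of $D$ by $\hat{D}$; your decomposition $w^e-w^* = \hat{A}_{N_t}(\bar z-\mathbb{E}z)+(\hat{A}_{N_t}-A)\mathbb{E}[z]$ is algebraically equivalent to the paper's device of introducing the intermediate estimator $w^a = w^1+\frac{1}{N_2}\sum_i A z^{(i)}$ and showing $\hat{\Sigma}_e^{-1}(w^e-w^a)$ and $(\hat{\Sigma}_e^{-1}-\Sigma_e^{-1})(w^a-w^*)$ vanish in probability. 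If anything you are more explicit than the paper about the one genuine subtlety, namely that $\sqrt{N_2}\,(\hat{A}_{N_t}-A)\mathbb{E}[z]$ is negligible only when $N_2/N_t\to 0$, whereas the paper asserts boundedness of $\hat{\Sigma}_e^{-1}$ (which in fact grows like $\sqrt{N_2}$) and thereby glosses over this rate condition.
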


\textbf{Remark:}
Traditional asymptotic analysis is usually based on the assumptions for the ground truth function, but our analysis does not need such assumptions and instead relies on the training-validation framework.  
For example, 
in the traditional analysis, 
the claim that $w^{LSE}$ follows normal distribution is based on the assumption that the ground truth function is linear, and also the label noise follows a well-defined distribution. 




Now we can do the hypothesis test based on the asymptotic property of $w^e$, including model test and coefficient test under significance level $\delta$.
The details can be found in Appendix~\ref{appendix:test}.

\section{Experiments}
\label{Sec: Experiment}
In this section, we conduct experiments of the significance test derived in Section~\ref{Sec: AsymptoticProperties}. 
We will show that our method works in both linear and non-linear scenarios, while the traditional linear regression fails in non-linear scenarios even in a simple square case.
Due to space limitations, we show linear scenarios in Appendix~\ref{Sec: Exp_linear}.
More experimental details could be found in Appendix~\ref{Sec: Experiment Details}.
For each statistic, we repeat experiments 6 times and compute its $95\%$ confidence interval of its mean.

We choose two types of machine learning models: a three-layer Neural Network (labeled as Ours(NN)) and a Linear-form model (labeled as Ours(L)), respectively. 
Note that Ours(L) does not need unlabeled data, while Ours(NN) needs unlabeled data to calculate the linear approximation of machine learning methods.

\subsection{Metrics}

Two metrics are considered here, focusing on the \emph{correctness} and \emph{efficiency}, respectively.

Correctness is shown by the Kolmogorov-Smirnov statistic, which is defined in Equation~\ref{Eqn: KS}. 
Kolmogorov-Smirnov statistic measures how close the simulation results and theoretical results are. 
Smaller Kolmogorov-Smirnov statistic is better.
\begin{equation}
\label{Eqn: KS}
    KS \triangleq \max_x |F(x) - G(x)|
\end{equation}
where $F$ is the empirical CDF in simulation, and $G$ is the theoretical CDF.

Efficiency is shown by the average standard deviation ($\bar{\sigma}$) of the estimators. 
Efficiency measures how much uncertainty the new estimators have. 
Since we have removed the requirements of linear assumptions, more uncertainty appears in our newly proposed method.
A smaller $\bar{\sigma}$ means that we have more confidence in our estimators.

\subsection{Unbiasedness}
\label{Sec: Unbiaedness}
First of all, we would show that the traditional estimators based on least squares (LSE) contain more bias (see also \citep{rinaldo2016bootstrapping}), including the linear regression methods and the estimators proposed in \citet{white1980using,mackinnon1985some,lee2016exact,taylor2014exact,buhlmann2015high,bachoc2020uniformly}, etc.

\begin{table}[htbp]
\vspace{-15pt}
	\centering 
	\caption{Bias Comparison (Non-Linear)} 
	\label{Tab: Unbiaseness} 
	\begin{tabular}{c|c|c|c}  
		\hline 
		\  & LSE-based Estimator & Ours(NN) & Ours(L)  \\
		\hline
		$w_0$ & { -0.0048 ($\pm 0.0013$)} & -0.0010 ($\pm 0.0036$) & -0.0001 ($\pm 0.0020$)\\
		$w_1$ &0.0015 ($\pm 0.0020$)& 0.0048 ($\pm 0.0110$)& 0.0021 ($\pm 0.0037$) \\
		\hline
	\end{tabular}
\vspace{-10pt}
\end{table}

\begin{wrapfigure}{r}{0.5\textwidth}
\vspace{-5pt}
    \begin{center}
    \includegraphics[width=0.5\textwidth]{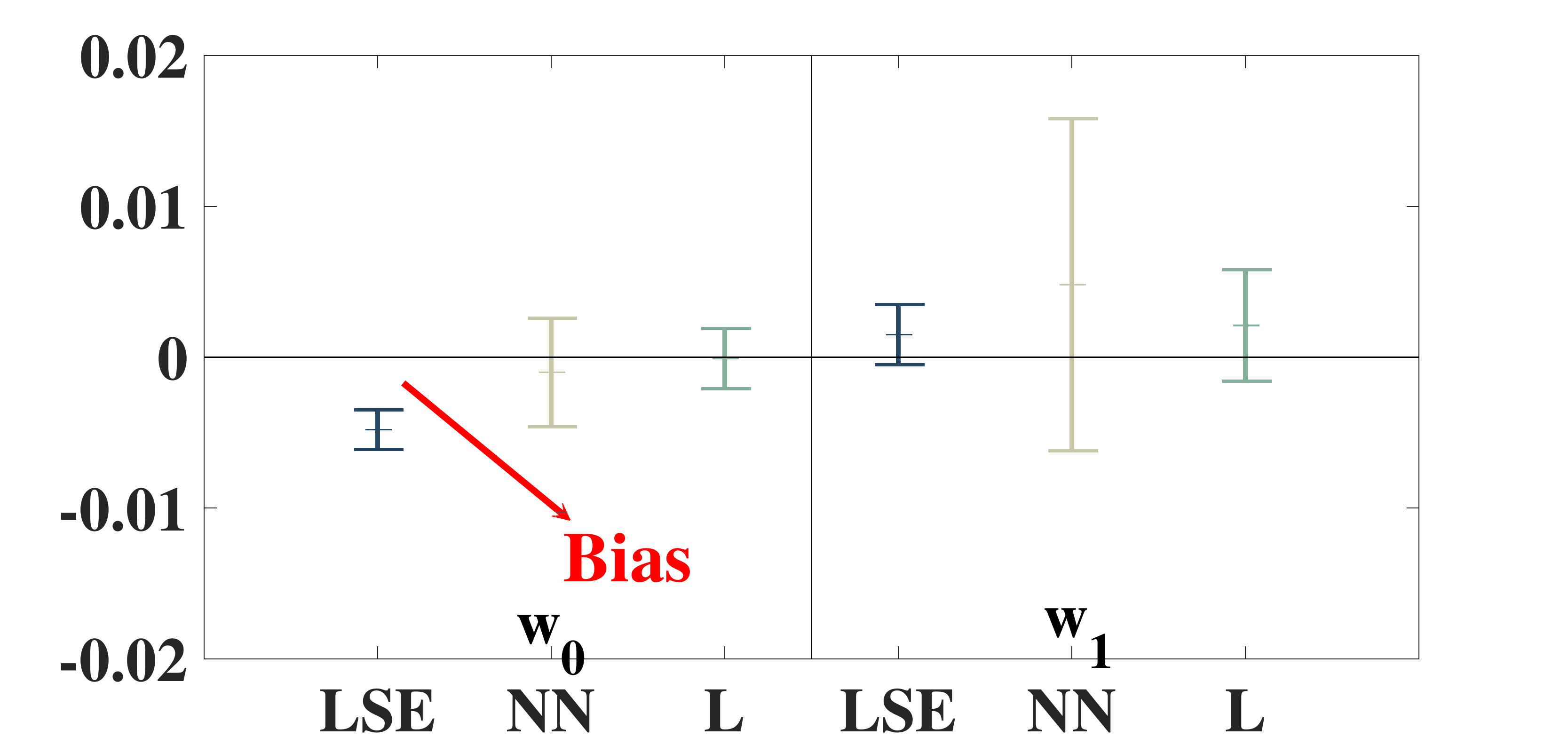}        
    \end{center}
    \caption{Confidence interval for each bias. LSE, NN, L is short for traditional LSE-based estimators, Ours(NN) and Ours(L), respectively.}
    \label{Fig: bias}
\vspace{-12pt}
\end{wrapfigure}

We test a simple square case, where $f^*(x) = x^2, x\sim U(0,1)$, our aim is to estimate its linear approximation.
We repeat the simulation 1000 times, and each time we calculate the mean of the estimators. 
Note that for better showing the bias, we use a smaller dataset (see Appendix~\ref{appendix: unbiased}).
Table~\ref{Tab: Unbiaseness} shows the difference between simulation results and the theoretical parameter with their $95\%$ confidence interval.

Figure~\ref{Fig: bias} shows that the traditional LSE-based estimators on $w_0$ are \emph{biased}, where the confidence interval of bias does not contain $0$.
Our proposed methods can outperform these LSE-based methods because the newly proposed methods have a smaller bias.
For simplification, we compare our proposed methods with only linear regression on their correctness and efficiency in Section~\ref{Sec: Exp_Nonlinear}, since linear regression is most widely used among these LSE-based methods in practice.

\subsection{Non-linear Scenarios}
\label{Sec: Exp_Nonlinear}
In this section, we focus on the performance of the linear regression method and our newly proposed method under a non-linear scenario. We focus on a simple non-linear ground truth model, which is
\begin{equation*}
f^*(x) = x^2, x\sim U(0,1)
\end{equation*}
with no randomness $\epsilon$. 
Its linear approach can be theoretically calculated by
\begin{equation*}
    g^*(x) = -\frac{1}{6}+x
\end{equation*}
Our aim is to estimate $w^* = (-\frac{1}{6}, 1)$. 
Thus the hypothesis test can be written as
\begin{equation*}
    H_0: w^*=(-\nicefrac{1}{6}, 1)\quad \text{v.s.} \quad H_1: w^* \not= (-\nicefrac{1}{6}, 1)
\end{equation*}

We repeat the simulation 1000 times, each time we calculate the statistic and plot them in Figure~\ref{Fig: test_x_2}. 
We also plot its theoretical distribution, which helps visualize how far the simulation results and theoretical results are.
It is visualized in Figure~\ref{Fig: test_x_2} that traditional linear regression fails in \emph{even} a simple square case, while our new estimator works well. We choose one group of the six here to show the figure.

\begin{figure}
\vspace{-50pt}
    \centering
\vspace{-10pt}
    \subfigure[Baseline: $w_0$]{
        \includegraphics[width=0.3\textwidth]{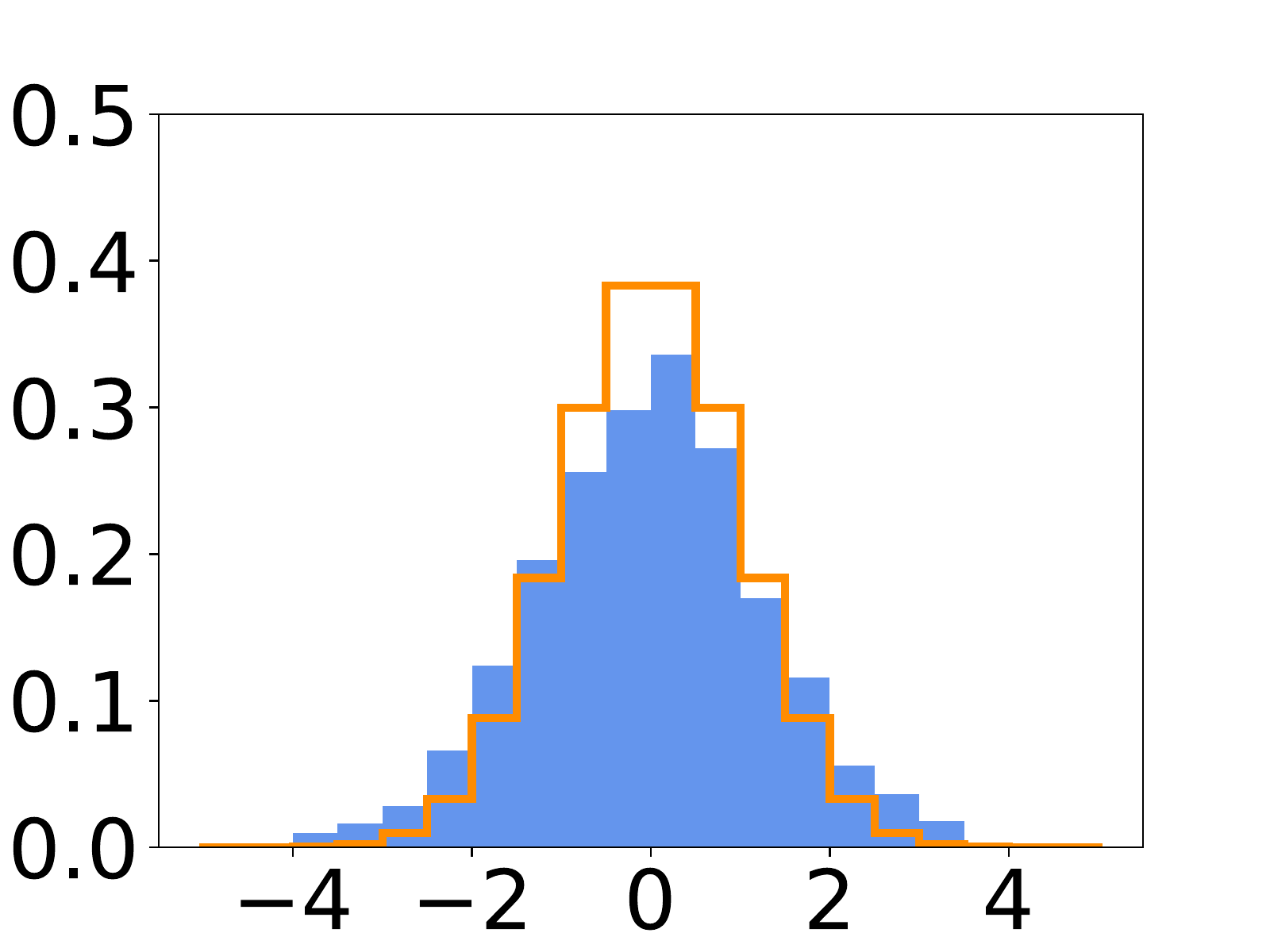}
    }
    \subfigure[Baseline: $w_1$]{
	\includegraphics[width=0.3\textwidth]{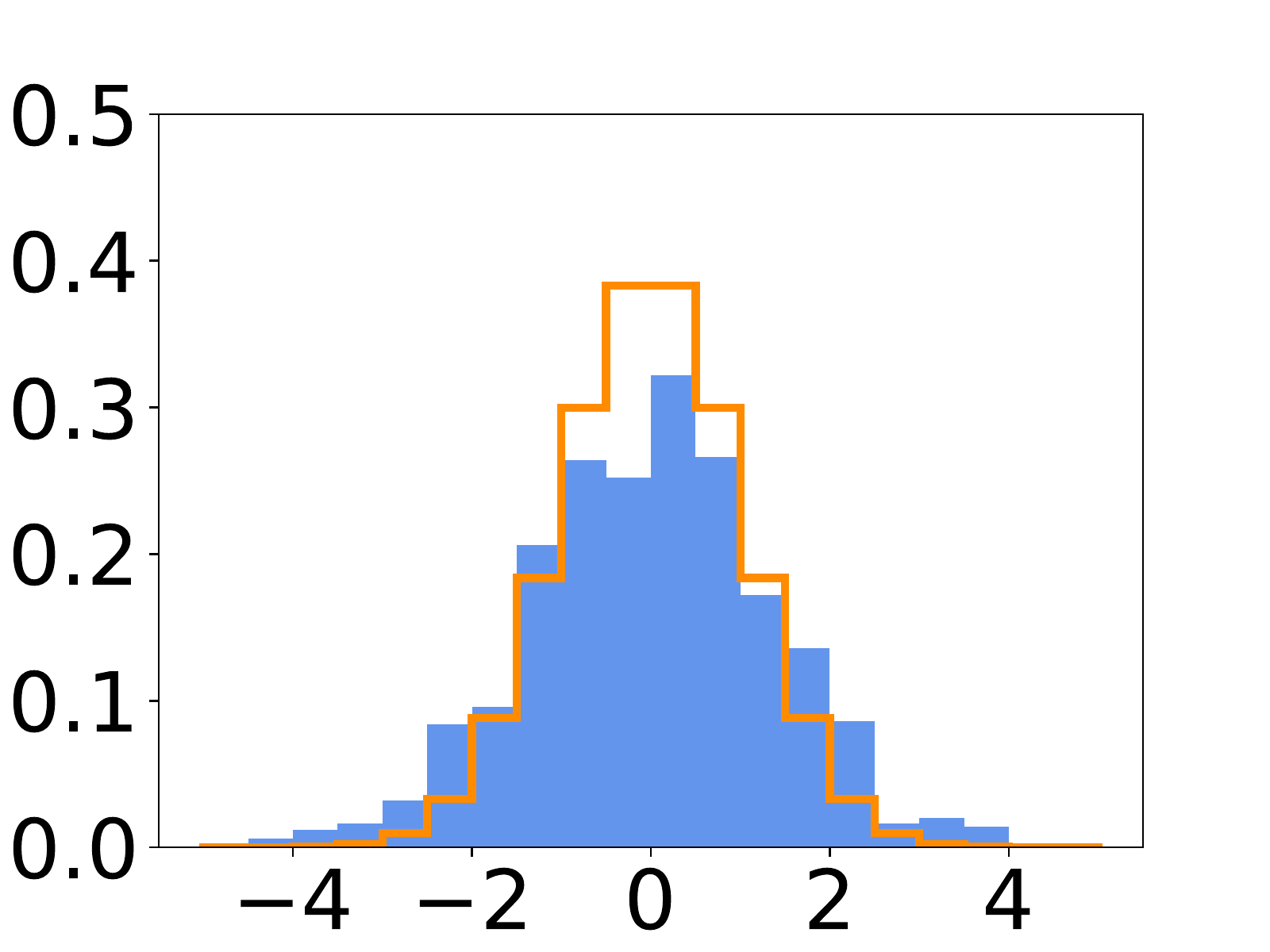}
}
    \subfigure[Baseline: $w$]{
	\includegraphics[width=0.3\textwidth]{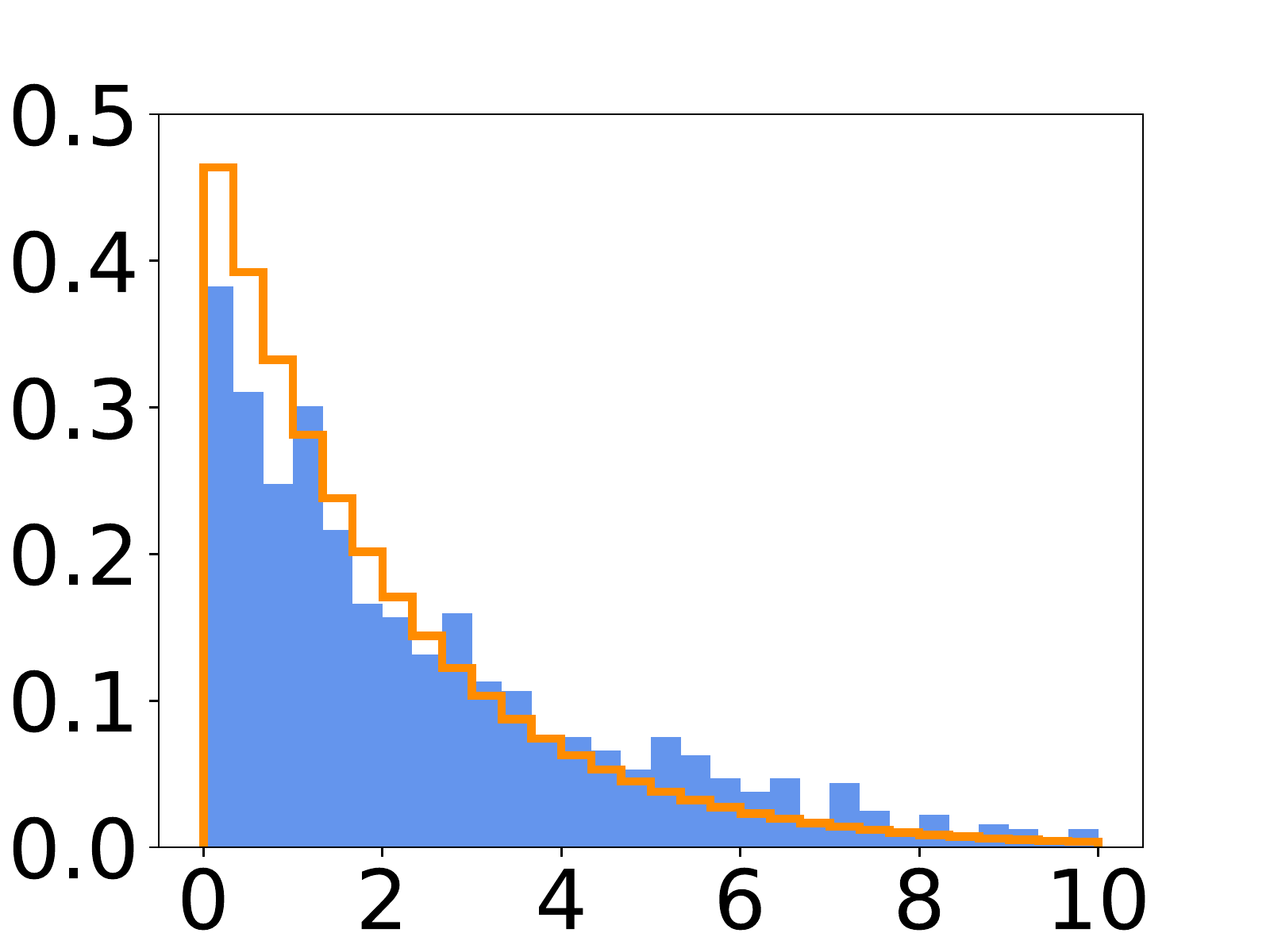}
}

\vspace{-11pt}
    \subfigure[Ours(NN): $w_0$]{
        \includegraphics[width=0.3\textwidth]{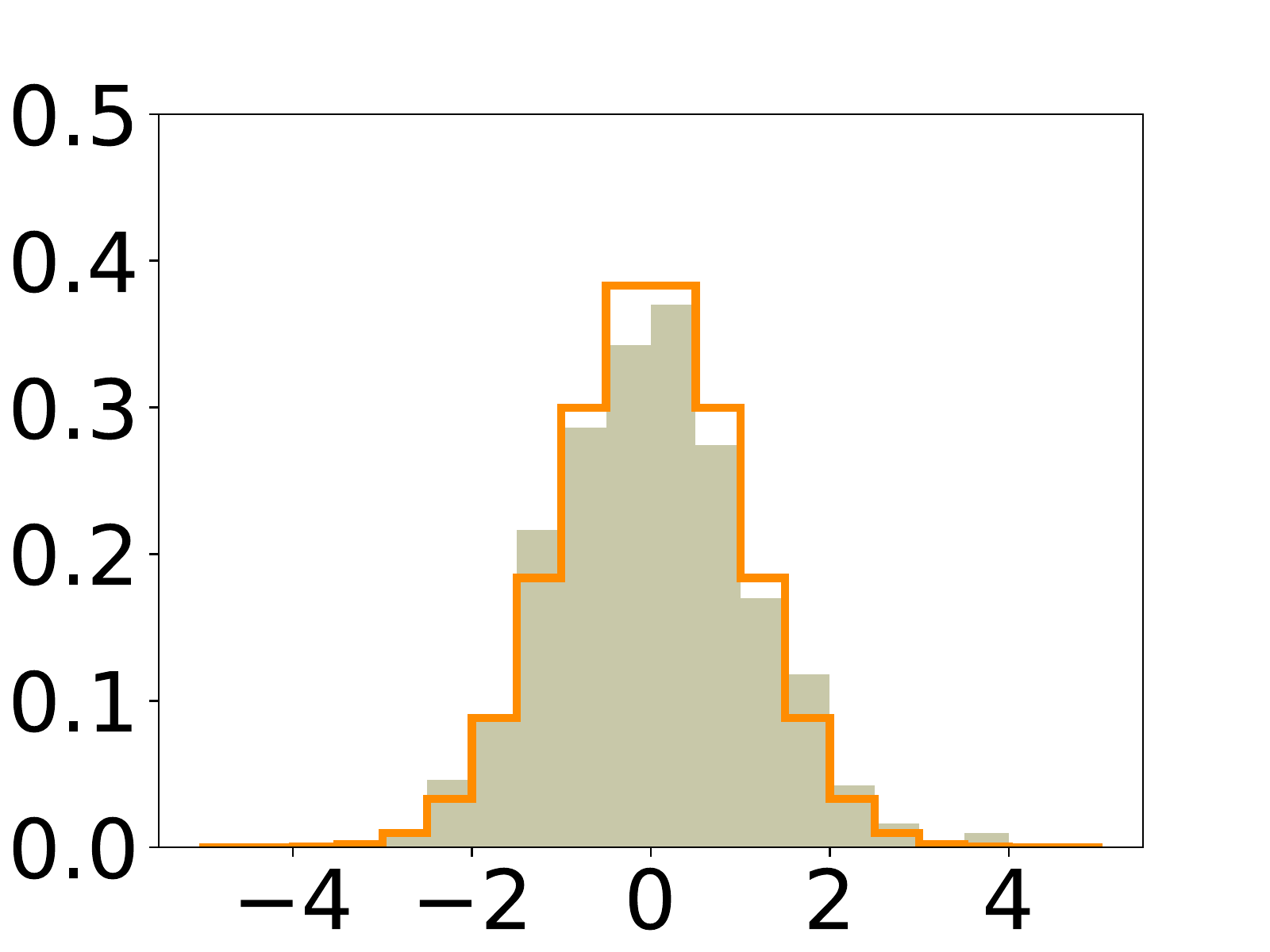}
    }
    \subfigure[Ours(NN): $w_1$]{
	\includegraphics[width=0.3\textwidth]{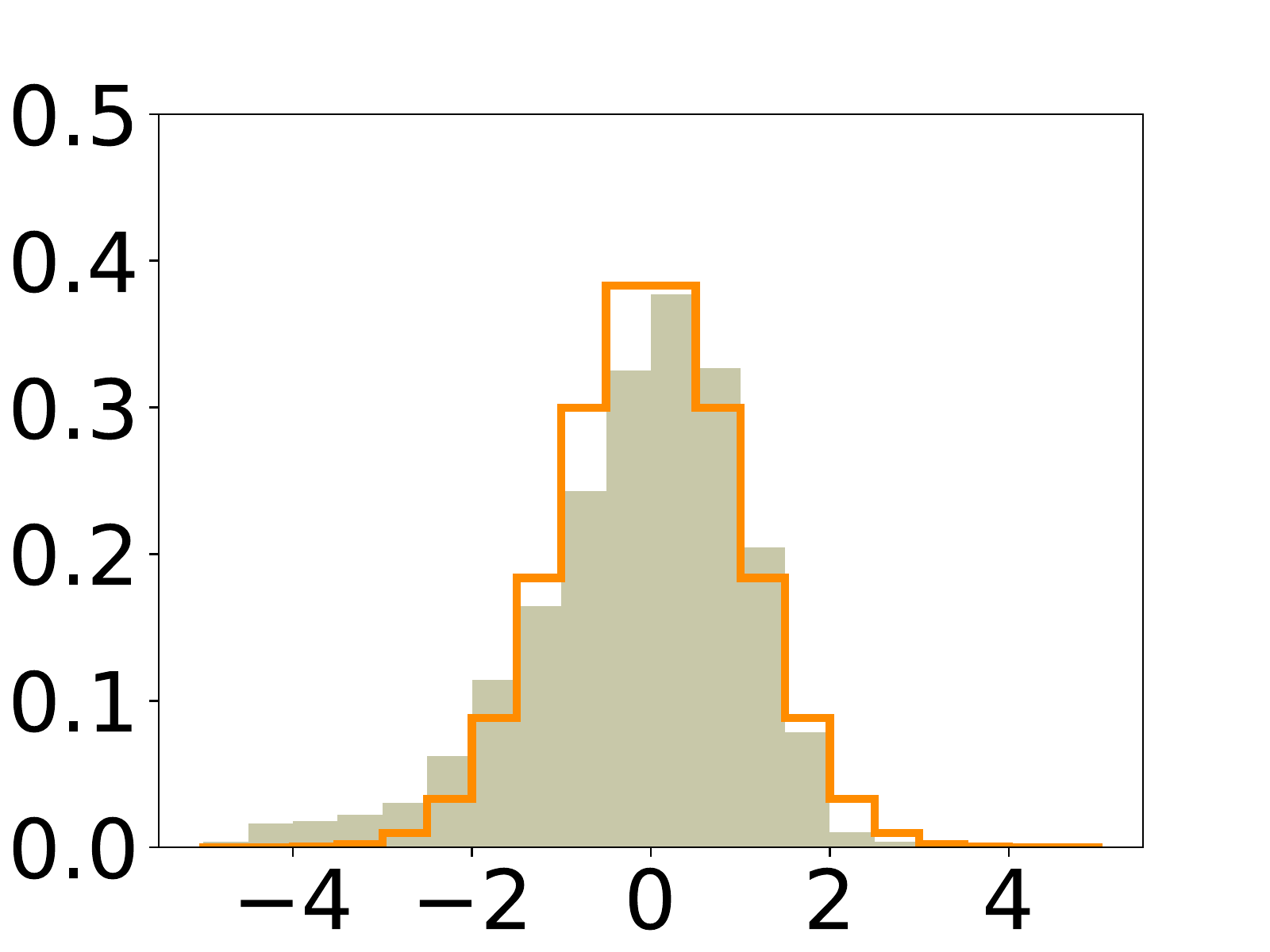}
}
    \subfigure[Ours(NN): $w$]{
	\includegraphics[width=0.3\textwidth]{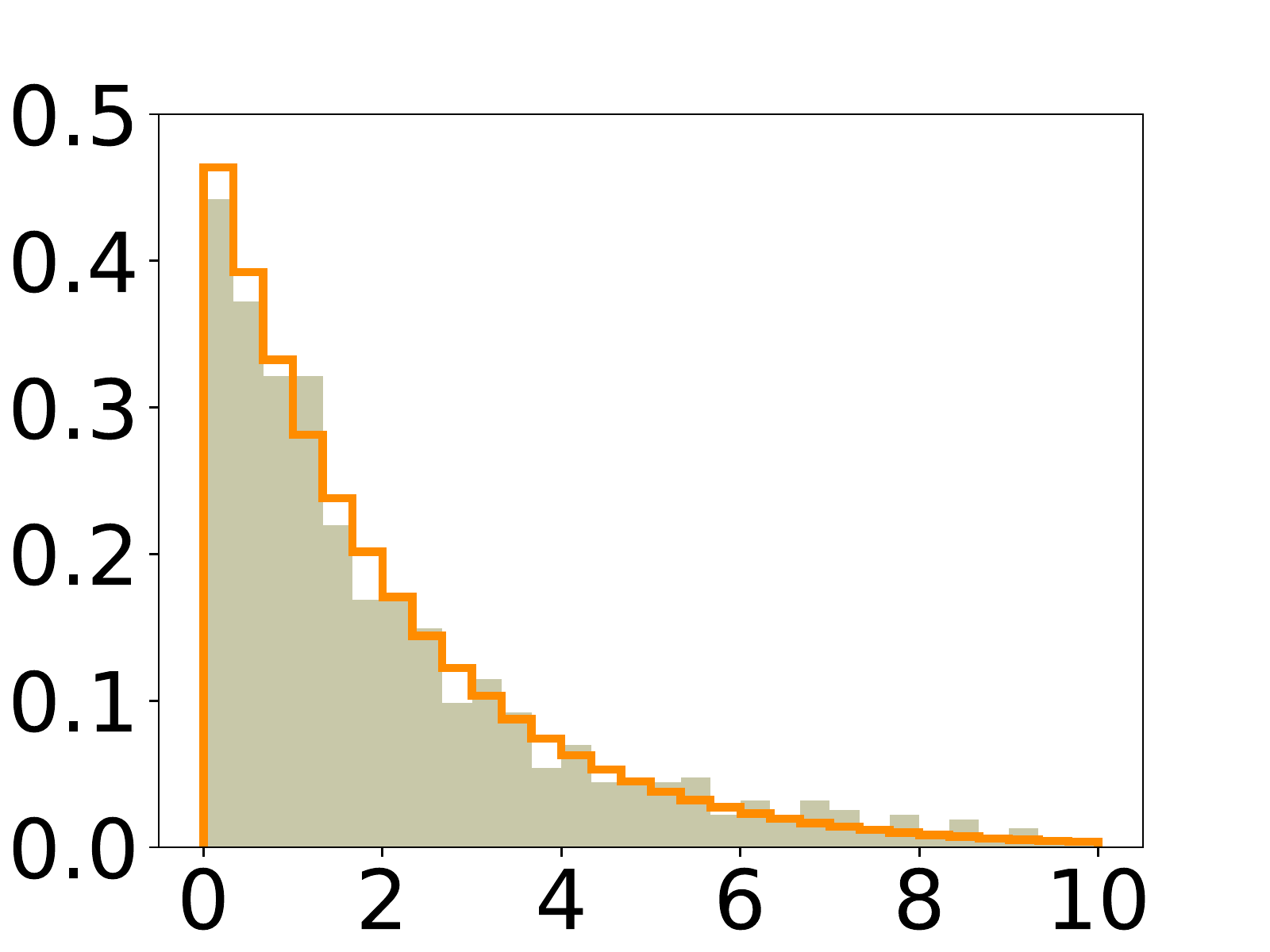}
}

\vspace{-11pt}
    \subfigure[Ours(L): $w_0$]{
        \includegraphics[width=0.3\textwidth]{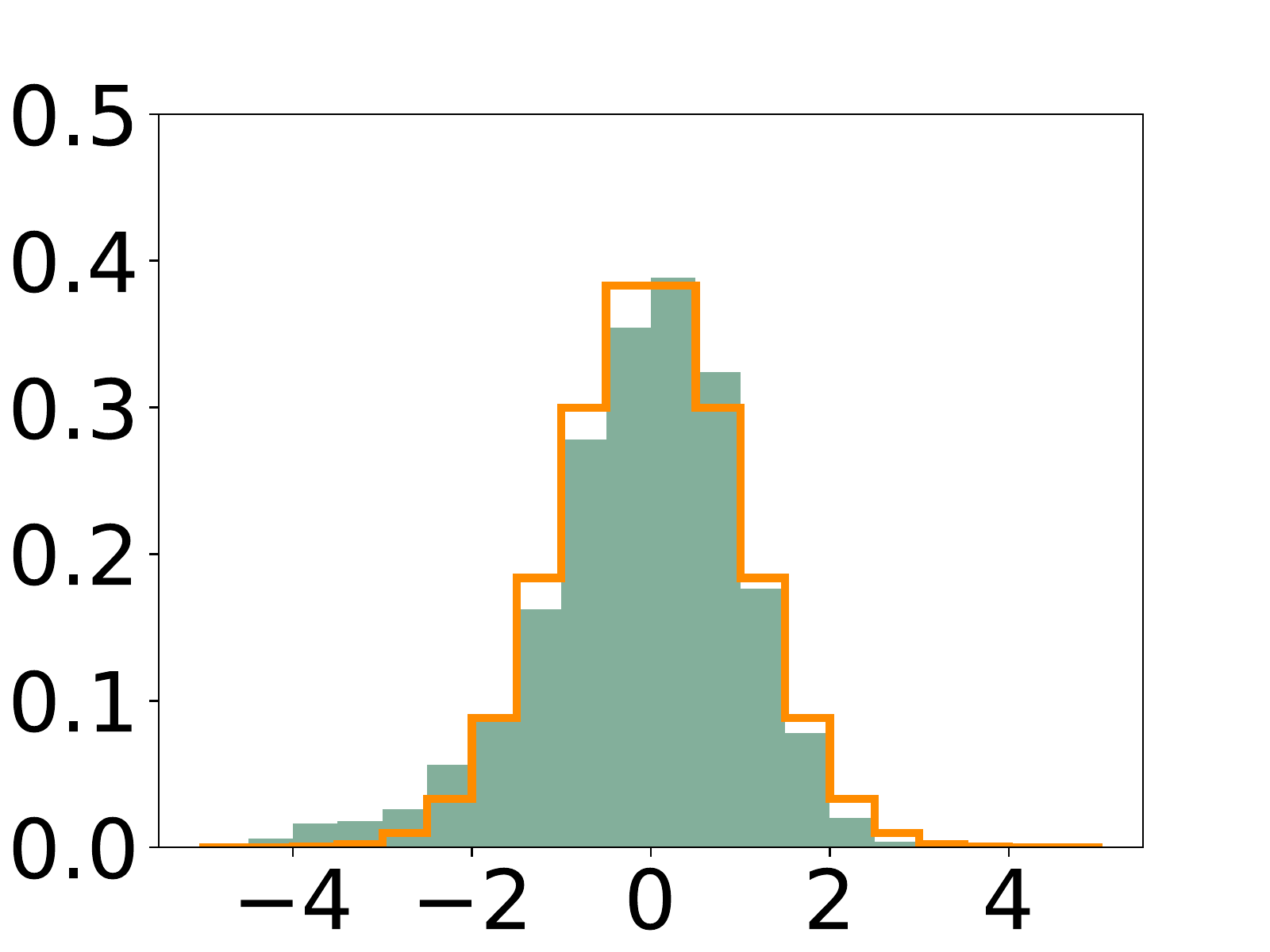}
    }
    \subfigure[Ours(L): $w_1$]{
	\includegraphics[width=0.3\textwidth]{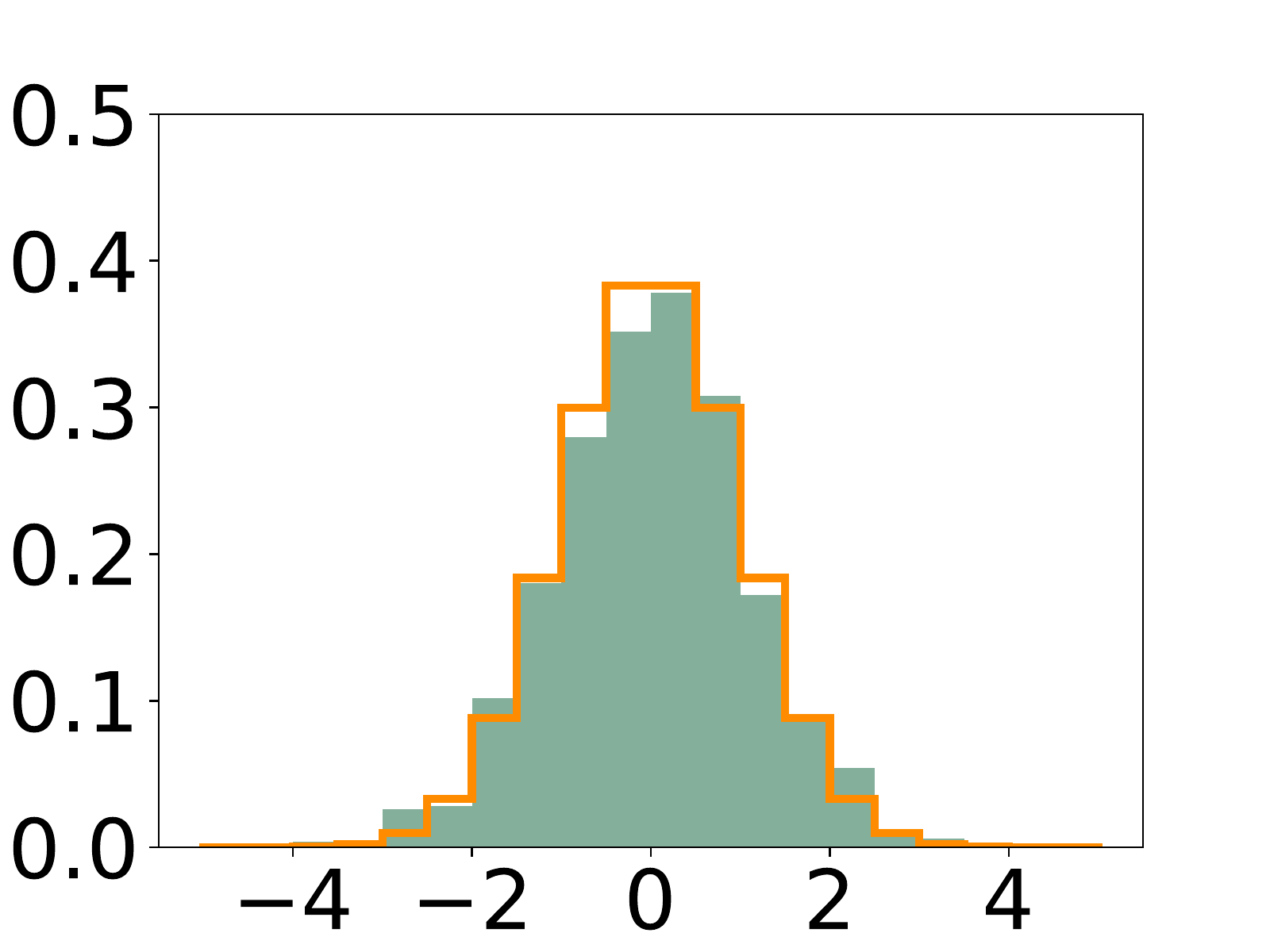}
}
    \subfigure[Ours(L): $w$]{
	\includegraphics[width=0.3\textwidth]{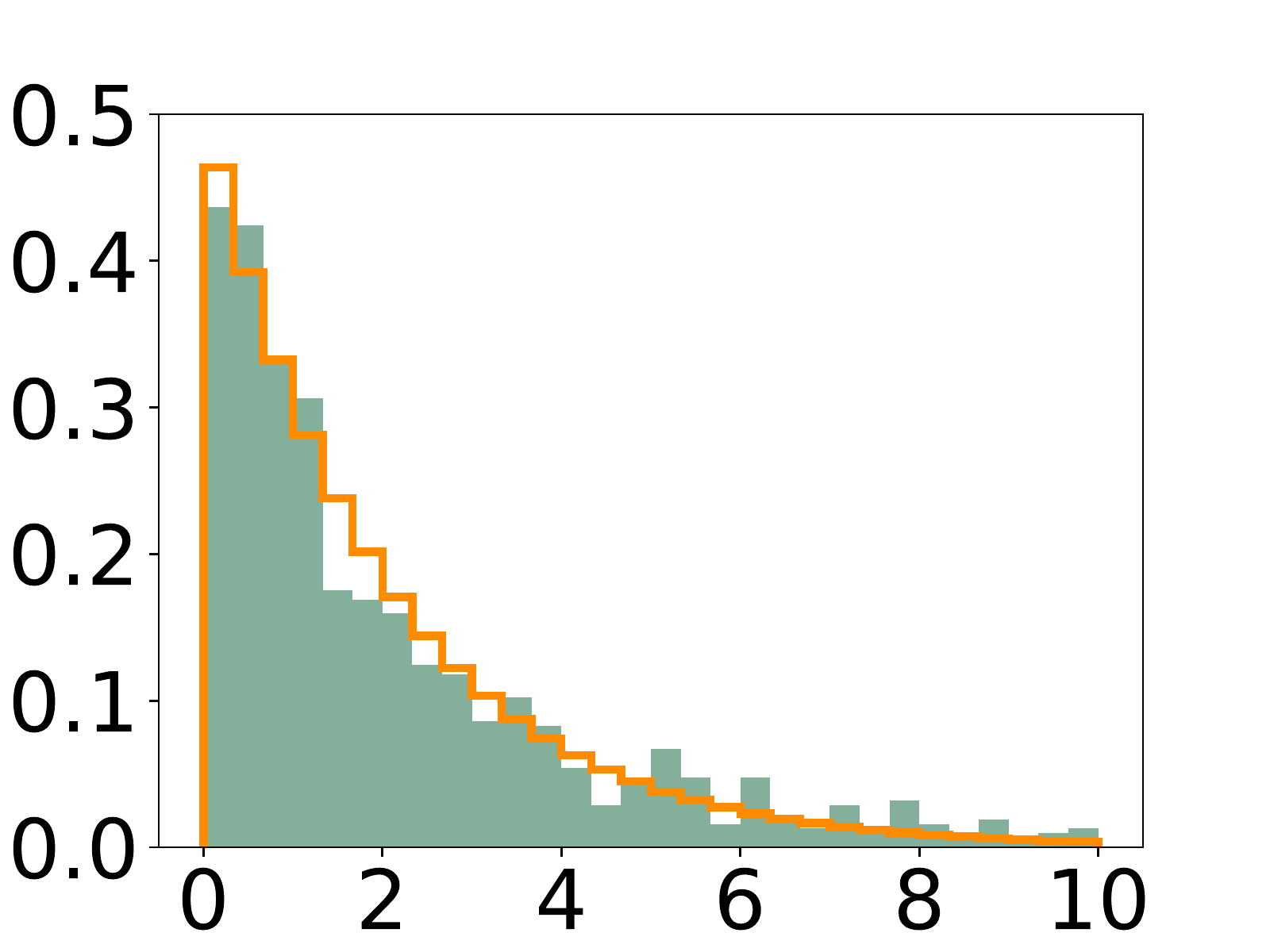}
}
\vspace{-5pt}
    \caption{Test for parameters under non-linear scenarios. The orange line is the theoretical true distribution. The baseline (linear regression) is more fat-tailed compared with Ours(NN) and Ours(L).}
    
\vspace{-20pt}
    \label{Fig: test_x_2} 
\end{figure} 

The phenomenon shown in Figure~\ref{Fig: test_x_2} leads to a fake significance test results! 
This fake fat-tailed distribution will make more variables determined to be significant incorrectly. 
For instance, when we set significance level as $\alpha = 0.05$ (which means that the parameter can be determined incorrectly with probability around 0.05), $w_1$ is determined incorrectly in linear regression (LSE) with probability $0.1263 (\pm 0.0137)$, while Ours(L) with probability $0.0715 (\pm 0.0024)$. We repeat the experiments six times, and the brackets show a $95\%$ confidence interval. 
The results of Ours(L) make the significance test more accurate.

We further show its $95\%$ confidence interval for correctness in Table~\ref{Tab: NonLinearCorrectness} quantitatively, where our newly proposed estimators work better. 
More details about Table~\ref{Tab: NonLinearCorrectness} are shown in Appendix~\ref{Appendix: supplement}.

\begin{table}[htbp]
\vspace{-15pt}
	\centering 
	\caption{Correctness Comparison (Non-Linear)}
	\label{Tab: NonLinearCorrectness} 
	\begin{tabular}{c|c|c|c}  
		\hline 
		\ & $\chi_2$ & normal ($w_0$)& normal ($w_1$) \\
		\hline
		Linear Reg & 0.1150 ($\pm 0.0101$)& 0.0635 ($\pm 0.0062$)& 0.0715 ($\pm 0.0070$)\\
		Ours(NN) & {\color{red} 0.0679 ($\pm 0.0054$)} & {\color{red}0.0326 ($\pm 0.0058$)} & 0.0650 ($\pm 0.0054$)\\		
		Ours(L) & 0.0810 ($\pm 0.0067$) & 0.0489 ($\pm 0.0077$)& {\color{red} 0.0276 ($\pm 0.0046$)} \\
		\hline
    \end{tabular}
\vspace{-5pt}
\end{table}

We compare the efficiency of Ours(NN) and Ours(L) in Table~\ref{Tab: NonLinearEfficiency} with its $95\%$ confidence interval. 
Note that since linear regression returns a wrong asymptotic efficiency, it is not listed here.


\begin{table}[htbp]
\vspace{-15pt}
	\centering 
	\caption{Efficiency Comparison (Non-Linear)}
	\label{Tab: NonLinearEfficiency} 
	\begin{tabular}{c|c|c}  
		\hline 
		\ & $w_0$ &  $w_1$  \\
		\hline
		Ours(NN) & 0.0214($\pm 0.0002$)  & 0.0593($\pm 0.0026$) \\		
		Ours(L) & 0.0328($\pm 0.0002$) & 0.0604 ($\pm 0.0003$) \\		
		\hline
    \end{tabular}
\vspace{-10pt}
\end{table}

\section{Conclusion}
In this paper, we propose a new estimator for the linear coefficient that works well in both linear and non-linear cases.
Unlike traditional statistical inference methods, machine learning models are introduced to the significance test process.
For future work, our new framework may be extended to the more general statistical inference scenarios (e.g. high-dimensional settings), and it will be interesting to show how machine learning models affect the efficiency of our estimator.

\section*{Broader Impact}
Compared with the traditional significance test, our new methods output a more precise significance level (or p-value) when linear assumptions do not hold. Moreover, with small efficiency loss, one can better extract the relationship between explanatory variables and response variables. Therefore, our estimator might be a better tool for the significance test. 



\begin{ack}
We are grateful to Yang Bai, Chenwei Wu for helpful comments on an early draft of this paper.
This work has been partially supported by Shanghai Qi Zhi Institute, Zhongguancun Haihua Institute for Frontier Information Technology, the Institute for Guo Qiang Tsinghua University (2019GQG1002), and Beijing Academy of Artificial Intelligence. 
\end{ack}

\bibliographystyle{named}
\bibliography{references} 
\newpage
\appendix

\section{Proofs}
\label{appendix: proof}

\subsection{The Proof of Lemma~\ref{Lem: BoundForLoss}}
This proof is mainly based on Bayesian Estimation, where we have the prior information that $R \sim U(0,1)$.

For a given $t \geq UB^u_{S_2}-LB^u_{S_2}$, we have

\begin{equation*}
    \begin{split}
        & \mathbb{P}(R\leq t | u_1, u_2, \dots, u_{N_2}) \\
        =& \frac{\int_{[0,t]} \mathbb{P}(u_1, u_2, \dots, u_{N_2} | R=s) \rm ds}{\int_{[0,1]} \mathbb{P}(u_1, u_2, \dots, u_{N_2} | R=s) \rm ds}
        \\
        =& \frac{\int_{UB^u_{S_2}-LB^u_{S_2}}^{t}1/s^{N_2} \rm ds }{\int_{UB^u_{S_2}-LB^u_{S_2}}^{1}1/s^{N_2} \rm ds} \\
        =& \frac{\left(UB_{S_2}^u-LB^u_{S_2}\right)^{-N_2+1} - t^{-N_2+1}}{\left(UB_{S_2}^u-LB^u_{S_2}\right)^{-N_2+1} - 1} 
        \end{split}
\end{equation*}

The first equation is due to its definition. The second equation is because the probability is zero when $s \leq UB_{S_2}^u-LB^u_{S_2}$.
Denote $c = UB_{S_2}^u-LB^u_{S_2}$. 
By setting $t = \frac{c}{[1-(1-\delta_0)(1-c^{N_2-1})]^{1/N_2-1}}$, we have
$$
\mathbb{P}(R\leq \frac{c}{[1-(1-\delta_0)(1-c^{N_2-1})]^{1/N_2-1}} | u_1, u_2, \dots, u_{N_2}) \geq 1-\delta_0
$$

We slightly enlarger $t$ with $\frac{c}{[1-(1-\delta_0)(1-c^{N_2-1})]^{\frac{1}{N_2-1}}} \leq c \delta_0^{-\frac{1}{N_2-1}}$, which leads to the results that

$$
\mathbb{P}(R\leq (UB_{S_2}^u-LB^u_{S_2})\delta_0^{-\frac{1}{N_2-1}} | u_1, u_2, \dots, u_{N_2}) = 1-\delta_0
$$

The proof is done.

\subsection{The Proof of Theorem~\ref{Thm: MSE}}
In this section, we will prove Theorem~\ref{Thm: MSE}. Before the proof, we propose Lemma~\ref{Lem: Independent} first, which focuses on why we need to split the datasets into the training set and the validation set.

\begin{lemma}[Independence Lemma]
\label{Lem: Independent}
If $x^{(i)}$ and $x^{(j)}$ are independent random variables, and $f$ is a fixed function which is independent of $x^{(i)}$ and $x^{(j)}$, then $f\left( x^{(i)} \right)$  is independent of $f\left( x^{(j)} \right)$ . 
\end{lemma}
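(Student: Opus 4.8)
\textbf{Proof proposal for Lemma~\ref{Lem: Independent}.}

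The plan is to reduce the claim to a completely standard fact about functions of independent random variables, namely: if $X$ and $Y$ are independent and $\varphi, \psi$ are (measurable) deterministic functions, then $\varphi(X)$ and $\psi(Y)$ are independent. In our situation we take $\varphi = \psi = f$, $X = x^{(i)}$, $Y = x^{(j)}$; the hypothesis that $f$ is a fixed function independent of $x^{(i)}, x^{(j)}$ is exactly what lets us treat $f$ as a deterministic map rather than a random object, so no joint-distribution subtleties between $f$ and the data arise.

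The key steps, in order, are: first, recall the definition of independence via $\sigma$-algebras, $\sigma(x^{(i)}) \perp \sigma(x^{(j)})$. Second, observe that for a measurable function $f$, the preimage $\sigma$-algebra $\sigma(f(x^{(i)})) = \{ (f \circ x^{(i)})^{-1}(B) : B \text{ Borel}\}$ is a sub-$\sigma$-algebra of $\sigma(x^{(i)})$, since $(f\circ x^{(i)})^{-1}(B) = (x^{(i)})^{-1}(f^{-1}(B))$ and $f^{-1}(B)$ is Borel. Third, conclude that $\sigma(f(x^{(i)})) \subseteq \sigma(x^{(i)})$ and likewise $\sigma(f(x^{(j)})) \subseteq \sigma(x^{(j)})$; since sub-$\sigma$-algebras of independent $\sigma$-algebras are independent, $f(x^{(i)})$ and $f(x^{(j)})$ are independent. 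Alternatively, for a more elementary write-up, I would just verify the factorization $\mathbb{P}(f(x^{(i)}) \in B_1, f(x^{(j)}) \in B_2) = \mathbb{P}(x^{(i)} \in f^{-1}(B_1), x^{(j)} \in f^{-1}(B_2)) = \mathbb{P}(x^{(i)} \in f^{-1}(B_1))\,\mathbb{P}(x^{(j)} \in f^{-1}(B_2)) = \mathbb{P}(f(x^{(i)}) \in B_1)\,\mathbb{P}(f(x^{(j)}) \in B_2)$ for all Borel $B_1, B_2$, where the middle equality is the independence of $x^{(i)}$ and $x^{(j)}$.

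There is essentially no hard part here — the lemma is a textbook measure-theoretic triviality. The only thing worth being careful about is the phrase ``$f$ is a fixed function which is independent of $x^{(i)}$ and $x^{(j)}$'': in the paper's intended application $f$ is itself learned from the \emph{training} set $S_1$, so it is random, but it is independent of the \emph{validation} points $x^{(i)}, x^{(j)} \in S_2$; hence the right reading is to condition on $f$ (equivalently on $S_1$) and apply the deterministic-$f$ statement conditionally, then note the conclusion does not depend on which value of $f$ we conditioned on. I would state this conditioning remark explicitly so the lemma is actually usable in the proof of Theorem~\ref{Thm: MSE}, where it is invoked to treat the validation losses $l^{(i)} = (f^1(x^{(i)}) - y^{(i)})^2$ as i.i.d.\ samples so that Hoeffding's inequality applies.
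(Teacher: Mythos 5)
Your proposal is correct and follows essentially the same route as the paper: the paper's proof is exactly your ``elementary write-up,'' factoring $\mathbb{P}(f(x^{(i)})\le u_i,\ f(x^{(j)})\le u_j)$ through the preimages $C_i=f^{-1}((-\infty,u_i])$ and invoking the independence of $x^{(i)}$ and $x^{(j)}$. Your closing remark about conditioning on the training set is the content the paper defers to its Corollary~\ref{Cor: Independence}, so nothing is missing on either side.
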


\begin{proof}[Proof of Lemma~\ref{Lem: Independent}]
The proof directly follows the definition of independence of random variables. 
\begin{equation}\begin{aligned}
& \mathbb{P}\left(f\left(x^{(i)}\right) \leq u_{i}, f\left(x^{(j)}\right) \leq u_{j}\right)\\
=& \mathbb{P}\left(x^{(i)} \in C_{i}, x^{(j)} \in C_{j}\right)  \\
=& \mathbb{P}\left(x^{(i)} \in C_{i}\right) P\left(x^{(j)} \in C_{j}\right)\\
=& \mathbb{P}\left(f\left(x^{(i)}\right) \leq u_{i}\right) \mathbb{P}\left(f\left(x^{(j)}\right) \leq u_{j}\right)
\end{aligned}\end{equation}
where $C_i$, $C_j$ is the corresponding measurable sets which is decided by $f$ and $u_i$, $u_j$. The second equality follows the independence of $x^{(i)}$ and $x^{(j)}$. By definition, $f\left( x^{(i)} \right)$  is independent of $f\left( x^{(j)} \right)$ . The proof is done.
\end{proof}

The next corollary~\ref{Cor: Independence} is a direct application of Lemma~\ref{Lem: Independent}.
\begin{corollary}[Random Split of Datasets]
\label{Cor: Independence}
Given i.i.d. data $\left\{\left(x^{(i)}, y^{(i)}\right)\right\}_{i=1}^{n}$ which is randomly split into training data $S_1$ and test data $S_2$. If we use $S_1$ to train a machine learning model $f^{1}$, then for two independent samples $\left(x^{(i)}, y^{(i)}\right)$, $\left(x^{(j)}, y^{(j)}\right)$, and loss of $i_{th}$ sample $l^{(i)}=\left(f^{1}\left(x^{(i)}\right)-y^{(i)}\right)^{2}$, $l^{(i)}$ is independent of $l^{(j)}$.
\end{corollary}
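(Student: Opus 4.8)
The plan is to deduce the statement from Lemma~\ref{Lem: Independent} after first \emph{conditioning on the training set} $S_1$. The only real subtlety — and the reason this needs the lemma rather than being a one-line observation — is that $l^{(i)}$ is a function not of $(x^{(i)},y^{(i)})$ alone but of $(x^{(i)},y^{(i)})$ \emph{together with} $S_1$, since $f^1$ is built from $S_1$. So the correct notion here is independence conditional on $S_1$ (which is precisely what is later needed to run a Hoeffding-type bound over the validation set in Theorem~\ref{Thm: MSE}, where $f^1$ is regarded as fixed once trained), and the argument is a conditional version of Lemma~\ref{Lem: Independent}.

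First I would fix an arbitrary realization of $S_1$ and work conditionally on it. On this event the learned predictor $f^1$ is a genuinely fixed, deterministic measurable function, so the map $\phi(x,y) \triangleq \left(f^1(x)-y\right)^2$ is a fixed measurable function of the pair $(x,y)\in\mathbb{R}^{d+1}$, carrying no residual randomness. Next I would observe that, because the $n$ data points are i.i.d.\ and $S_1\cap S_2=\emptyset$, any two distinct validation samples $(x^{(i)},y^{(i)})$ and $(x^{(j)},y^{(j)})$ with $i\neq j$ in $S_2$ are independent of each other, and remain so after conditioning on $S_1$ (they involve coordinates disjoint from those of $S_1$). Then I would invoke Lemma~\ref{Lem: Independent}: its proof uses only the identity $\{\phi(x^{(i)},y^{(i)})\le u_i\}=\{(x^{(i)},y^{(i)})\in C_i\}$ for a measurable set $C_i$ determined by $\phi$ and $u_i$, so it carries over verbatim with the scalar variable of the lemma replaced by the vector $(x^{(i)},y^{(i)})$ and $f$ replaced by $\phi$. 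This gives that $l^{(i)}=\phi(x^{(i)},y^{(i)})$ and $l^{(j)}=\phi(x^{(j)},y^{(j)})$ are independent, conditionally on $S_1$.

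Finally, since this holds for every realization of $S_1$, it holds conditionally on $S_1$ in general, which is the claimed independence in the sense used downstream. I expect the main obstacle to be purely one of careful bookkeeping: one must \emph{freeze} $f^1$ by conditioning before it qualifies as the "fixed function" of Lemma~\ref{Lem: Independent}, and one must separately record that the validation points are independent of $S_1$ (so that conditioning does not disturb their mutual independence); the measure-theoretic heart of the argument is already contained in the proof of Lemma~\ref{Lem: Independent} and requires no new work.
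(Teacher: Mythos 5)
Your proposal is correct and follows essentially the same route as the paper: the paper dispatches the corollary as ``a direct application'' of Lemma~\ref{Lem: Independent}, treating $f^{1}$ as a fixed function that is independent of the validation samples, which is exactly the reduction you perform. The one place where you go beyond the paper is in making the conditioning on $S_1$ explicit, and this is a genuine improvement rather than mere bookkeeping: as literally stated, \emph{unconditional} independence of $l^{(i)}$ and $l^{(j)}$ does not hold in general, since both losses depend on $S_1$ through $f^{1}$ (a poorly trained model inflates both losses simultaneously, inducing positive correlation). The correct and usable statement is independence conditional on $S_1$, which is what you prove and what is actually invoked when Hoeffding's inequality is applied over the validation set in Lemma~\ref{Lem: Approximation} with $f^{1}$ frozen. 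The paper leaves this conditioning implicit; your version supplies it.
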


It should be noted that $f^{1}$ is trained by $S_1$, thus is independent of samples $\left(x^{(i)}, y^{(i)}\right)$ in the validation set. 
That is why the dataset needs to be randomly split. Armed with Corollary~\ref{Cor: Independence}, we can go on to finish the proof.

We split the proof into two parts. In Lemma~\ref{Lem: Approximation}, we give an approximation measure for the machine learning model $f^1$. In Lemma~\ref{Lem: LinearApproach}, we will prove that the linear approximations of two close functions are also close. 
In this part, we use MSE to measure how the machine learning model approaches ground truth functions.
\begin{lemma}[Approximation measure for $f^1$]
\label{Lem: Approximation}
Given an i.i.d. dataset $S=\left\{\left(x_{i}, y_{i}\right)\right\}_{i=1}^{n}$ which is split into training set $S_1$ and validation set $S_2$, where $\left|S_{1}\right|=N_{1},\left|S_{2}\right|=N_{2}$. Suppose we use MSE ($l(x,y)=(f^1(x)-y)^2$) as our loss, and the sample loss is denoted as $l^{(i)}$, the population loss is denoted as $L$. 
Then for a given $\delta \in (0,1)$, we have
$$\mathbb{P}\left(L \geq \frac{1}{N_{2}} \sum_{i=1}^{N_{2}} l^{(i)}+(U B^{l}-L B^{l}) \sqrt{\frac{\log \left(\frac{1}{\delta}\right)}{2 N_{2}}}\right) \leq \delta$$
Namely, $w.p.>1-\delta$, $\left\|f^{1}-f^{*}\right\|_{2}^{2}= L \leq \frac{1}{N_{2}} \sum_{i=1}^{N_{2}} l^{(i)}+(U B^{l}-L B^{l}) \sqrt{\frac{\log \left(\frac{1}{\delta}\right)}{2 N_{2}}}$.
\end{lemma}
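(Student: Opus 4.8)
The plan is to prove Lemma~\ref{Lem: Approximation} as a direct application of Hoeffding's inequality to the i.i.d. sample losses on the validation set, after establishing the crucial independence property. First I would invoke Corollary~\ref{Cor: Independence}: since the machine learning model $f^1$ is trained only on $S_1$, and the validation points $(x^{(i)},y^{(i)}) \in S_2$ are independent of $S_1$, the sample losses $l^{(i)} = (f^1(x^{(i)})-y^{(i)})^2$ for $i = 1,\dots,N_2$ are i.i.d. random variables. This is the conceptual heart of the argument and the reason the train/validation split is needed — without it, $f^1$ would depend on the evaluation points and the concentration argument would fail.

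Next I would note that, conditioning on $S_1$ (equivalently, on the trained model $f^1$), the expectation of each $l^{(i)}$ over the draw of the validation point is exactly the population loss: $\mathbb{E}[l^{(i)} \mid S_1] = \int (f^1(x)-y)^2\,\mathrm{d}\mathbb{Z} = L = \|f^1 - f^*\|_2^2$, the last equality being the definition of the function norm from Definition~\ref{Def: FuncNorm} applied to $f^1 - f^*$ (using $f^*(x)=y$). Since each $l^{(i)}$ takes values in $[LB^l, UB^l]$, i.e.\ an interval of width $UB^l - LB^l$, Hoeffding's inequality applied to the one-sided deviation of the empirical mean $\frac{1}{N_2}\sum_{i=1}^{N_2} l^{(i)}$ below its expectation $L$ gives
\begin{equation*}
\mathbb{P}\left(L - \frac{1}{N_2}\sum_{i=1}^{N_2} l^{(i)} \geq t \,\Big|\, S_1\right) \leq \exp\left(-\frac{2 N_2 t^2}{(UB^l - LB^l)^2}\right).
\end{equation*}
Setting the right-hand side equal to $\delta$ and solving yields $t = (UB^l - LB^l)\sqrt{\frac{\log(1/\delta)}{2N_2}}$, which is precisely the claimed bound. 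Since the conditional bound holds uniformly over $S_1$, taking expectation over $S_1$ gives the unconditional statement.

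I would close by rephrasing the complement event: with probability at least $1-\delta$, $L \leq \frac{1}{N_2}\sum_{i=1}^{N_2} l^{(i)} + (UB^l - LB^l)\sqrt{\frac{\log(1/\delta)}{2N_2}}$, and recalling $L = \|f^1 - f^*\|_2^2$. The only real obstacle is making the independence/conditioning step rigorous — ensuring that Hoeffding applies to the losses (which are functions of the validation data with $f^1$ held fixed) rather than naively to correlated quantities; but this is handled cleanly by Corollary~\ref{Cor: Independence} and conditioning on $S_1$. The rest is a routine invocation of a standard concentration inequality, so I would not belabor it.
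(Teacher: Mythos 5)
Your proposal is correct and follows essentially the same route as the paper: establish independence of the validation losses via the train/validation split (the paper's Lemma~\ref{Lem: Independent} and Corollary~\ref{Cor: Independence}), apply one-sided Hoeffding to the bounded losses $l^{(i)} \in [LB^l, UB^l]$ with $t = (UB^l - LB^l)\sqrt{N_2\log(1/\delta)/2}$, and identify $L = \|f^1 - f^*\|_2^2$ via the function norm. Your explicit conditioning on $S_1$ and integrating out is a slightly more careful rendering of the same argument, not a different one.
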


Note that Lemma~\ref{Lem: Approximation} gives an measure for $\left\|f^{1}-f^{*}\right\|_{2}$, which is a probabilistic upper bound. And the results also show a trade-off between training set and validation set. If more data are split into training set, $\frac{1}{N_{2}} \sum_{i=1}^{N_{2}} l^{(i)}$ decreases theoretically. If more data are split into validation set, $\sqrt{\frac{\log \left(\frac{1}{\delta}\right)}{2 N_{2}}}$ decreases theoretically. 

\begin{proof}[Proof of Lemma~\ref{Lem: Approximation}]
The key of the proof is Hoeffding's inequality, which states that given a series of bounded random variables $X_i, i=1,2,\dots,N$, and $X_i\in[t_1, t_2]$, if $\mathbb{E} X_i = \mathbb{E}X$, then for any $t>0$, we have

$$\mathbb{P}\left(\sum_{i=1}^{N}\left(X_{i}-\mathbb{E} X\right) \geq t\right) \leq \exp \left(-\frac{2 t^{2}}{N(t_2-t_1)^{2}}\right)$$

Plug the bound of loss $l^{(i)}$ into this inequality. By setting $t = (UB^{l}-LB^{l}) \sqrt{\frac{N_{2} \log \frac{1}{\delta}}{2}}$, it holds that
 $$\mathbb{P}\left(l \geq \frac{1}{N_{2}} \sum_{i=1}^{N_{2}} l^{(i)}+(U B^{l}-L B^{l}) \sqrt{\frac{\log \left(\frac{1}{\delta}\right)}{2 N_{2}}}\right) \leq \delta$$
 
Finally, notice that 
\begin{equation*}
    \Vert f^1-f^* \Vert_2^2 = \int (f^1(x)-y)^2 \rm d\mathbb{P} = L
\end{equation*}
The proof is done.
\end{proof}

The next Lemma~\ref{Lem: LinearApproach} shows that when the distance of two functions is bounded, the distance of their linear approximation is also bounded. 

\begin{lemma}
\label{Lem: LinearApproach}
Given oracle model $f^*$ and machine learning model $f^1$, where their linear approximation is $g^{*}= w^*x = \operatorname{argmin}_{g \in \mathcal{G}}\left\|g-f^{*}\right\|_{2}$, $g^{1}=w^1x = \operatorname{argmin}_{g \in \mathcal{G}}\left\|g-f^{1}\right\|_{2}$, respectively, where $\mathcal{G}=\left\{w^{T} x, w \in R^{d}\right\}$ is a linear function family. Given $m I_{d} \leqslant [\mathbb{E}\left(x x^{T}\right)]^{-1} \leqslant M I_{d}$, if $\left\|f^{*}-f^{1}\right\| \leq \epsilon$, then 

\begin{equation*}
\begin{split}
    \left\|g^{*}-g^{1}\right\| &\leq  \sqrt{d} \frac{M}{m} \epsilon \\
    \left\|w^{*}-w^{1}\right\| &\leq  \sqrt{d} \frac{M}{\sqrt{m}} \epsilon
\end{split}    
\end{equation*}

where d is the dimension of $x$.
\end{lemma}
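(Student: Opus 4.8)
\textbf{Proof proposal for Lemma~\ref{Lem: LinearApproach}.}

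The plan is to write both linear approximations in closed form using the normal equations, subtract them, and then carefully bound the resulting expression in the function $2$-norm and in the Euclidean norm on the coefficient vectors. Concretely, since $g^* = (w^*)^T x$ minimizes $\|g - f^*\|$ over the linear family $\mathcal G = \{w^T x : w \in \mathbb R^d\}$, the first-order optimality condition gives $\mathbb E[x(f^*(x) - (w^*)^T x)] = 0$, i.e. $w^* = [\mathbb E(xx^T)]^{-1}\mathbb E[x f^*(x)] = A\,\mathbb E[x f^*(x)]$, and similarly $w^1 = A\,\mathbb E[x f^1(x)]$. Subtracting, $w^* - w^1 = A\,\mathbb E[x(f^*(x) - f^1(x))]$. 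This is the key identity; everything after it is estimation of norms.

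The main steps, in order, are: (1) derive the two normal equations and the difference identity $w^* - w^1 = A\,\mathbb E[x(f^* - f^1)(x)]$; (2) bound $\|w^* - w^1\|_2$ by passing the spectral bound $A \preccurlyeq M I_d$ through, getting $\|w^*-w^1\|_2 \le M\,\|\mathbb E[x(f^*-f^1)(x)]\|_2$, and then control $\|\mathbb E[x(f^*-f^1)(x)]\|_2$ coordinatewise: for each coordinate $j$, $|\mathbb E[x_j(f^*-f^1)(x)]| \le \|x_j\|\cdot\|f^*-f^1\| \le \|x_j\|\,\epsilon$ by Cauchy--Schwarz in the function inner product, and then $\sum_j \|x_j\|^2 = \mathbb E\|x\|_2^2 = \operatorname{tr}(\mathbb E(xx^T)) \le d/m$ since $A = [\mathbb E(xx^T)]^{-1} \succcurlyeq m I_d$ forces $\mathbb E(xx^T) \preccurlyeq (1/m) I_d$; combining gives $\|\mathbb E[x(f^*-f^1)(x)]\|_2 \le \sqrt{d/m}\,\epsilon$, hence $\|w^*-w^1\|_2 \le \sqrt d\,(M/\sqrt m)\,\epsilon$; (3) convert to the function-norm bound: $\|g^* - g^1\|^2 = \mathbb E[((w^*-w^1)^T x)^2] = (w^*-w^1)^T \mathbb E(xx^T) (w^*-w^1) \le (1/m)\|w^*-w^1\|_2^2$, so $\|g^*-g^1\| \le (1/\sqrt m)\|w^*-w^1\|_2 \le \sqrt d\,(M/m)\,\epsilon$, which is exactly the claimed pair of inequalities.

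I expect the main obstacle to be the bookkeeping in step (2): getting the right factor of $\sqrt d$ requires the observation that the naive bound $|\mathbb E[x_j(f^*-f^1)]| \le \|x_j\|\epsilon$ must be aggregated as an $\ell_2$ sum over coordinates and that $\sum_j \|x_j\|^2$ is a trace, together with translating the hypothesis $m I_d \preccurlyeq A \preccurlyeq M I_d$ into the two facts $\mathbb E(xx^T) \preccurlyeq (1/m) I_d$ and $\|A v\|_2 \le M\|v\|_2$. None of these is deep, but the direction of each inequality (when to use $m$, when to use $M$, and that inverting a PSD ordering flips it) is where an error would creep in; I would state each spectral conversion explicitly before combining. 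One should also note the implicit assumption that $f^*, f^1$ have finite function $2$-norm so the inner products are well-defined, which is consistent with the boundedness hypotheses used elsewhere in the paper.
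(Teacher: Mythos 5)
Your proposal is correct and follows essentially the same route as the paper's proof: the normal-equation identity $w^*-w^1=[\mathbb{E}(xx^T)]^{-1}\mathbb{E}[x(f^*-f^1)(x)]$, the coordinatewise Cauchy--Schwarz bound aggregated via $\sum_j\mathbb{E}(x_j^2)=\operatorname{tr}(\mathbb{E}(xx^T))\leq d/m$, and the final conversion $\|g^*-g^1\|^2=(w^*-w^1)^T\mathbb{E}(xx^T)(w^*-w^1)\leq(1/m)\|w^*-w^1\|_2^2$. The only cosmetic difference is that the paper phrases the spectral bounds through the eigenvalues $\lambda_1=M$, $\lambda_d=m$ of $A$ rather than through the ordering $mI_d\preccurlyeq A\preccurlyeq MI_d$ directly.
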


\begin{proof}[Proof of Lemma~\ref{Lem: LinearApproach}]
First, we would like to represent $g$ in a linear form. For simplification, we omit the superscript for a while.
$$g=\operatorname{argmin}_{g \in \mathcal{G}}\|g-f\|_{2}=\operatorname{argmin}_{g \in \mathcal{G}}\|g-f\|_{2}^{2}=w^{T} x $$

Then we can derive an explicit representation of $w$
$$\begin{aligned}
w &=\operatorname{argmin}_{w} \mathbb{E}_{x}\left\|w^{T} x-f\right\|_{2}^{2}  \\
&=\operatorname{argmin}_{w} w^{T} \mathbb{E}\left(x x^{T}\right) w-2 w^{T} \mathbb{E}(x f(x))+f^{T} f  \\
&=\left[\mathbb{E}\left(x x^{T}\right)\right]^{-1} \mathbb{E}[x f(x)]
\end{aligned}$$

Therefore, by adding the superscript, we have
\begin{equation}
\label{Eqn: differOfw}
   w^{*}-w^{1}=\left[\mathbb{E}\left(x x^{T}\right)\right]^{-1} \mathbb{E}\left[x\left(f^{*}(x)-f^{1}(x)\right)\right] 
\end{equation}

It can be further calculated that 
$$\begin{aligned}
\left\|w^{*}-w^{1}\right\|_{2}^{2} & \leq\left\|\left[\mathbb{E}\left(x x^{T}\right)^{-1}\right]\right\|_{2}^{2}\left\|\mathbb{E}\left[x\left(f^{*}(x)-f^{1}(x)\right)\right]\right\|_{2}^{2} \\
&\left.=\left\|\left[\mathbb{E}\left(x x^{T}\right)^{-1}\right]\right\|_{2}^{2}\left[\sum_{i} \mathbb{E}\left(x_{i}\left(f^{*}(x)-f^{1}(x)\right)\right)^{2}\right]\right\} \\
& \leq\left\|\left[\mathbb{E}\left(x x^{T}\right)^{-1}\right]\right\|_{2}^{2}\left(\sum_{i} \int x_{i}^{2} d \mathbb{P} \int\left(f^{*}(x)-f^{1}(x)\right)^{2} d \mathbb{P}\right)^{2} \\
&=\left\|\left[\mathbb{E}\left(x x^{T}\right)^{-1}\right]\right\|_{2}^{2}\left(\sum_{i} \int x_{i}^{2} d \mathbb{P}\left\|f^{*}(x)-f^{1}(x)\right\|_{2}^{2}\right)\\
& \leq \epsilon^{2}\left\|\left[\mathbb{E}\left(x x^{T}\right)^{-1}\right]\right\|_{2}^{2}\left(\sum_{i} \mathbb{E}\left(x_{i}^{2}\right)\right)
\end{aligned}$$
where the first inequality comes from the definition of matrix norm, the third inequality is due to Cauchy-Schwartz inequality, and the final equality is by the bound of functions $f^*$ and $f^1$.

If eigenvalues of $A=[\mathbb{E}\left(x x^{T}\right)]^{-1}$ is denoted by $m = \lambda_d \leq \lambda_{d-1} \leq \dots \leq \lambda_1 =M$, then 
$$\begin{aligned}
&\left\|\mathbb{E}\left(x x^{T}\right)^{-1}\right\|_{2}=\lambda_1\\
&\sum_{i} E\left(x_{i}^{2}\right)=\operatorname{tr}\left[\mathbb{E}\left(x x^{T}\right)\right]=\sum_i \frac{1}{\lambda_{i}}
\end{aligned}$$

That is to say,
\begin{equation}
\label{Eqn: bound for w}
\left\|w^{*}-w^{1}\right\|_{2} \leq \epsilon \lambda_1 \sqrt{\sum_{i} \frac{1}{\lambda_{i}}}
\end{equation}

Plug Equation~\ref{Eqn: bound for w} into $g(x)=w^T x$,
\begin{equation*}\begin{aligned}
\left\|g^{*}-g^{1}\right\|_{2}^{2} &=\left\|\int\left(w^{* T} x-w^{1 T} x\right)^{2} d \mathbb{P}\right\|_{2} \\
&=\left\|\left(w^{*}-w^{1}\right)^{T} \int x x^{T} d \mathbb{P}\left(w^{*}-w^{1}\right)\right\|_{2} \\
& \leq\left\|w^{*}-w^{1}\right\|_{2}^{2}\left\|\mathbb{E}\left(x x^{T}\right)\right\|_{2}
\end{aligned}\end{equation*}

Therefore, we have
\begin{equation}
\label{Eqn: bound for g}
\left\|g^{*}-g^{1}\right\|_{2} \leq\left\|w^{*}-w^{1}\right\|_{2} \sqrt{\frac{1}{\lambda_d}} \leq \epsilon \lambda_1 \sqrt{\sum_{i} \frac{1}{\lambda_{i}}} \sqrt{\frac{1}{\lambda_d}} 
\end{equation}

Plug $\sum_i \frac{1}{\lambda_i} \leq d \frac{1}{\lambda_d}$ into Equation~\ref{Eqn: bound for w} and Equation~\ref{Eqn: bound for g}, we have
\begin{equation}\begin{array}{c}
\left\|w^{*}-w^{1}\right\|_{2} \leq \epsilon \sqrt{d} \frac{M}{\sqrt{m}} \\
\left\|g^{*}-g^{1}\right\|_{2} \leq \epsilon \sqrt{d} \frac{M}{m}
\end{array}\end{equation}

The proof is done.
\end{proof}

Theorem~\ref{Thm: MSE} is a direct combination of Lemma~\ref{Lem: Approximation} and Lemma~\ref{Lem: LinearApproach}. 

\subsection{The Proof of Corollary~\ref{Cor: MSE}}
Corollary~\ref{Cor: MSE} directly follows Theorem~\ref{Thm: MSE}.
From Theorem~\ref{Thm: MSE} we know that 
\begin{equation*}
\begin{split}
\left\|g^{*}-g^{1}\right\|_{2} &\leq \sqrt{d} \frac{M}{m} \epsilon\\
\left\|w^{*}-w^{1}\right\|_{2} &\leq \sqrt{d} \frac{M}{\sqrt{m}} \epsilon 
\end{split}
\end{equation*}
where $\epsilon = \sqrt{\frac{1}{N_{2}} \sum_{i=1}^{N_{2}} l^{(i)}+(U B^{l}-L B^{l}) \sqrt{\frac{\log \left(\frac{1}{\delta}\right)}{2 N_{2}}}}$.
By Lemma~\ref{Lem: BoundForLoss}, we use empirical bounds to replace the real bounds. 
That is to say, we use $(U B_{S_2}^{l}-L B_{S_2}^{l})\delta_0^{-\frac{1}{N_2-1}}$ to replace $U B^{l}-L B^{l}$.
This operation brings loss, the probability becomes $(1-\delta)(1-\delta_0)$.
By the following inequality, we slightly reduce the probability.
$$(1-\delta)(1-\delta_0) \geq 1-\delta-\delta_0$$
So we have, $w.p.\geq 1-\delta-\delta_0$,
\begin{equation*}
\begin{split}
\left\|g^{*}-g^{1}\right\|_{2} &\leq \sqrt{d} \frac{M}{m} \epsilon_{S_2}\\
\left\|w^{*}-w^{1}\right\|_{2} &\leq \sqrt{d} \frac{M}{\sqrt{m}} \epsilon_{S_2} 
\end{split}
\end{equation*}
The proof is done.

\subsection{The Proof of Lemma~\ref{Lem: covariance}}
The key point of the proof comes from \citet{vershynin_2018} (Page 130). 
They states that under the assumptions of Lemma~\ref{Lem: covariance}, the following inequality holds $w.p. \geq 1-\delta/2$
\begin{equation}
\label{Eqn: difference of A}
\Vert \hat{A}_n^{-1} - A^{-1} \Vert \leq C \Vert A^{-1} \Vert \left( \sqrt{\frac{K^2 d \log{(4d/\delta)}}{n}} +\frac{K^2 d \log{(4d/\delta)}}{n} \right) 
\end{equation}

Therefore, under the assumption that $m I_{d} \preccurlyeq A, \hat{A}_{N_t} \preccurlyeq M I_{d}$, we have
\begin{equation}
\label{Eqn: bound for A}
\Vert{A}^{-1} \Vert_2 \leq \frac{1}{m}
\end{equation}

Plug Equation~\ref{Eqn: bound for A} into the Equation~\ref{Eqn: difference of A}, $w.p. \geq 1-\delta/2$:
\begin{equation*}
\begin{split}
\Vert \hat{A}_n - A \Vert_2 &= 
\Vert(\hat{A}_n)(\hat{A}_n^{-1} - A^{-1}) (A) \Vert_2  \\
&\leq \Vert\hat{A}_n \Vert_2  \Vert A \Vert_2  \Vert\hat{A}_n^{-1} - A^{-1} \Vert_2 \\
&\leq C \frac{M^2}{m} \left( \sqrt{\frac{K^2 d \log{(4d/\delta)}}{n}} +\frac{K^2 d \log{(4d/\delta)}}{n} \right)
\end{split}
\end{equation*}

The proof of Lemma~\ref{Lem: covariance} is done.

\subsection{The Proof of Theorem~\ref{Thm: fillingTheBias}}

By denoting $A \triangleq [\mathbb{E}(xx^T)]^{-1}$ and $z \triangleq x e = x\left(f^*\left(x\right) - f^1\left(x\right)\right)$, where $e$ is the residual, the result of Equation~\ref{Eqn: differOfw} can be rewritten as
$$
w^{*} - w^{1} = A \mathbb{E}(z)
$$

Then the $j_{t h}$ coefficient $w_j$ can be represented as
$$
\left( w^*-w^1 \right)_{j} = \mathbb{E}A^j z
$$
where $A^j$ is the $j_{t h}$ row of $A$. Denote the realization of $A^j z$ on $i_{t h}$ sample as $z^{(i)} \triangleq x^{(i)}e^{(i)}$. 
Since $A^j z$ is bounded by $LB^{A^j z} \leq A^j z \leq UB^{A^j z}$, by using two-side Hoeffding's inequality, we have
\begin{equation}
    \label{Eqn: hoff for Ajz}
\mathbb{P}\left(\left|\mathbb{E} A^{j} z-\frac{1}{N_{2}} \sum_{i} A^{j} z^{(i)}\right| \geq(U B^{A^j z}-L B^{A^j z}) \sqrt{\frac{\log \frac{4}{\delta}}{2 N_{2}}}\right) \leq \delta/2    
\end{equation}

We need to further replace $A^j$ with $\hat{A}^j_{N_t}$, which holds that $w.p. \geq 1-\delta/2$
\begin{equation}
\label{Eqn: differ of Az}
    \begin{split}
\left\| \frac{1}{N_{2}} \sum_{i=1}^{N_2} (\hat{A}_{N_t} - {A}) z^{(i)} \right\|_2 &=\left\| \hat{A}_{N_t} - {A} \right\|_2 \left\| \frac{1}{N_{2}} \sum_{i=1}^{N_2} z^{(i)} \right\|_2 \\
&\leq C \frac{M^2}{m} \Vert \frac{1}{N_{2}} \sum_{i=1}^{N_2} z^{(i)} \Vert_2 \left( \sqrt{\frac{K^2 d \log{(4d/\delta)}}{N_t}} +\frac{K^2 d \log{(4d/\delta)}}{N_t} \right)
    \end{split}
\end{equation}

Combining Equation~\ref{Eqn: hoff for Ajz} and Equation~\ref{Eqn: differ of Az}, the following inequality holds $w.p. \geq 1-\delta$

\begin{equation*}
\begin{split}
\left|w^{*}_{j}-w^{1}_{j}-\frac{1}{N_{2}} \sum_{i=1}^{N_2} \hat{A}^{j}_{N_t} z^{(i)}\right| \leq & \left|w^{*}_{j}-w^{1}_{j}-\frac{1}{N_{2}} \sum_{i=1}^{N_2} A^j z^{(i)}\right| + \left|\frac{1}{N_{2}} \sum_{i=1}^{N_2} (\hat{A}^{j}_{N_t} - {A}^j) z^{(i)} \right| \\
\leq& \left| \mathbb{E}A^j z -\frac{1}{N_{2}} \sum_{i=1}^{N_2} A^j z^{(i)}\right| + \left \| \frac{1}{N_{2}} \sum_{i=1}^{N_2} (\hat{A}_{N_t} - {A}) z^{(i)} \right\|_2 \\
\leq & \left( U B_{S_2}^{A^j z}-L B_{S_2}^{A^j z} \right) \sqrt{\frac{\log \frac{4}{\delta}}{2 N_{2}}} \\
& + C \frac{M^2}{m} \Vert \frac{1}{N_2} \sum_{i=1}^{N_2} z^{(i)}\Vert  \left( \sqrt{\frac{K^2 d \log{(4d/\delta)}}{N_t}} +\frac{K^2 d \log{(4d/\delta)}}{N_t} \right)
\end{split}    
\end{equation*}
The first inequality is the triangle inequality, and the second inequality is from the definition of vector norm. 

The proof is done.

\subsection{The Proof of Corollary~\ref{Cor: fillingTheBias}}
We mainly use $UB_{S_2}^{\hat{A}^j_{N_t}z} -LB_{S_2}^{\hat{A}^j_{N_t}z}$ to replace $UB^{A^jz} -LB^{{A}^jz}$.

Firstly, notice that $w.p. \geq 1-\delta_0$,
$$ 
UB^{A^jz}- LB^{A^jz} \leq (UB^{A^jz}_{S_2}- LB_{S_2}^{A^jz})\delta_0^{-\frac{1}{N_2-1}}
$$

We will bound $UB^{A^jz}_{S_2}- LB_{S_2}^{A^jz}$ by $UB^{\hat{A}^j_{N_t}z} -LB^{\hat{A}^j_{N_t}z}$ in the following.
First, we know from  Lemma~\ref{Lem: covariance} that $\hat{A}_{N_t}$ is close to ${A}$. So we have for every $z \in S$ $w.p.\geq 1-\delta$
\begin{equation}
\begin{split}
    \vert \hat{A}_{N_t}^j z - A^jz  \vert &\leq \Vert \hat{A}_{N_t} z - Az \Vert \\
    &\leq \Vert \hat{A}_{N_t} - A \Vert \Vert z\Vert\\
    &\leq b_0 \Vert z \Vert\\
    &\leq b
\end{split}
\end{equation}
where $b_0=C \frac{M^2}{m} \left( \sqrt{\frac{K^2 d \log{(4d/\delta)}}{n}} +\frac{K^2 d \log{(4d/\delta)}}{n} \right)$,
$b = b_0 \max_{S_2} \Vert z \Vert$.

Denote $UB_{S_2}^{A^jz} = A^jz_{(u)}, LB_{S_2}^{A^jz} = A^jz_{(l)}$, then $w.p.\geq 1-2\delta$
\begin{equation*}
\begin{split}
    UB_{S_2}^{A^jz} -LB_{S_2}^{A^jz} &= A^jz_{(u)} - A^jz_{(l)} \\
    &\leq (\hat{A}_{N_t}^jz_{(u)}+b) - (A^j_{N_t}z_{(l)}-b) \\
    &= (\hat{A}_{N_t}^jz_{(u)} - A^j_{N_t}z_{(l)})+2b \\
    &\leq UB_{S_2}^{\hat{A}^j_{N_t}z} -LB_{S_2}^{\hat{A}^j_{N_t}z} + 2b
\end{split}
\end{equation*}
The final inequality is from its definition.

Combining them together, $w.p.\geq 1-\delta_0-2\delta$
\begin{equation*}
    UB^{A^jz}- LB^{A^jz} \leq (UB^{\hat{A}_{N_t}^jz}_{S_2}- LB_{S_2}^{\hat{A}_{N_t}^jz} + 2b)\delta_0^{-\frac{1}{N_2-1}}
\end{equation*}

Moreover, we know that 
\begin{equation*}
    \begin{split}
\Vert \frac{1}{N_2} \sum z^{(i)} \Vert &\leq  \frac{1}{N_2} \sum \Vert z^{(i)} \Vert \\
&\leq max_{S_2}\Vert z \Vert
    \end{split}
\end{equation*}
where $z \in S_2$.

Therefore, 
\begin{equation*}
     b_0 \Vert \frac{1}{N_2} \sum z^{(i)} \Vert \leq b_0 \max_{S_2} \Vert z \Vert = b
\end{equation*}
Plug them into Theorem~\ref{Thm: fillingTheBias}, the proof is done.

\subsection{The Proof of Theorem~\ref{Thm: Asymptotic}}
The key point of this proof is Slutsky's theorem, which is stated as follows

\begin{lemma}[Slutsky's theorem]
Let $\{X_n\}$ and $\{Y_n\}$ be sequence of scalar / vector / matrix random elements. If $X_n$ converges in distribution to a random element $X$, and $Y_n$ converges to a constant $c$, then
\begin{itemize}
    \item $X_n + Y_n \stackrel{d}\to X+c$
    \item $X_n Y_n \stackrel{d} \to c X$
    \item $X_n / Y_n \stackrel{d} \to X/c$ provided that $c$ is invertible
\end{itemize}
\end{lemma}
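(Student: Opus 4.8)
The plan is to reduce the statement to a multivariate central limit theorem composed with Slutsky's theorem, after isolating an exact linear-in-$\bar z$ representation of $w^e - w^*$. First I would recall the identity $w^* - w^1 = A\,\mathbb{E}(z)$ established in Equation~\ref{Eqn: differOfw}, and set $\bar z = \frac{1}{N_2}\sum_{i=1}^{N_2} z^{(i)}$. Substituting the definition $w^e = w^1 + \hat A_{N_t}\bar z$ gives the exact decomposition
\begin{equation*}
w^e - w^* = \hat A_{N_t}\bigl(\bar z - \mathbb{E}(z)\bigr) + \bigl(\hat A_{N_t} - A\bigr)\mathbb{E}(z).
\end{equation*}
The first summand is the stochastic ``fluctuation'' term and the second is a bias term coming from estimating $A$; the whole argument hinges on the fluctuation term driving the limit while the bias term is of strictly smaller order.

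For the fluctuation term I would invoke Corollary~\ref{Cor: Independence}: since $f^1$ is trained only on $S_1$, the vectors $z^{(i)} = x^{(i)}(f^*(x^{(i)}) - f^1(x^{(i)}))$ are i.i.d.\ across the validation set $S_2$, conditionally on $S_1$. The boundedness hypothesis on $z$ guarantees a finite covariance $D = \mathrm{Cov}(z)$, so the multivariate CLT yields $\sqrt{N_2}(\bar z - \mathbb{E}(z)) \stackrel{d}\to \mathcal{N}(0, D)$. In parallel I would record two consistency facts used later as the ``constant'' limits in Slutsky: $\hat A_{N_t} \to A$ in probability, with the quantitative rate $\Vert \hat A_{N_t} - A\Vert = O_p(1/\sqrt{N_t})$ supplied by Lemma~\ref{Lem: covariance}; and $\hat D \to D$ in probability by the weak law of large numbers, where the $N_2 - d$ versus $N_2$ normalization is asymptotically immaterial and boundedness of $z$ supplies the required moments.

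The assembly proceeds through Slutsky's theorem. Writing $\hat\Sigma_e^{-1} = \sqrt{N_2}\,(\hat A_{N_t}\hat D\hat A_{N_t})^{-1/2}$, the fluctuation contribution reads
\begin{equation*}
\hat\Sigma_e^{-1}\hat A_{N_t}(\bar z - \mathbb{E}(z)) = (\hat A_{N_t}\hat D\hat A_{N_t})^{-1/2}\hat A_{N_t}\cdot \sqrt{N_2}(\bar z - \mathbb{E}(z)).
\end{equation*}
By the continuous mapping theorem the matrix prefactor converges in probability to $(ADA)^{-1/2}A$, while the vector factor converges in distribution to $\mathcal{N}(0, D)$; Slutsky then gives convergence to $(ADA)^{-1/2}A\,\zeta$ with $\zeta\sim\mathcal{N}(0,D)$. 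Using that $A$ is symmetric, this Gaussian has covariance $(ADA)^{-1/2}(ADA)(ADA)^{-1/2} = I_d$, which is exactly the claimed limit. It then remains to dispose of the bias term: after multiplying by $\hat\Sigma_e^{-1} = O_p(\sqrt{N_2})$, its size is of order $\sqrt{N_2}\,\Vert\hat A_{N_t}-A\Vert\,\Vert\mathbb{E}(z)\Vert = O_p(\sqrt{N_2/N_t})$, which vanishes precisely in the regime $N_2 = o(N_t)$, the setting where abundant unlabeled data make $N_t \gg N_2$.

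I expect this last point to be the main obstacle and the place where the hypotheses must be read carefully: the normalization $\hat\Sigma_e$ only captures the $O(1/\sqrt{N_2})$ fluctuation of $\bar z$, so the estimation error of $\hat A_{N_t}$ must be of strictly smaller order for the bias term to drop out, and the symmetric-square-root manipulations need care since $\hat A_{N_t}$ and $\hat D$ do not commute. A secondary subtlety is that every convergence above is conditional on $S_1$; since the limit law $\mathcal{N}(0,I_d)$ is free of $S_1$, I would argue it holds unconditionally by averaging over $S_1$, so that the statement as worded follows.
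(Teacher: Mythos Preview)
Your proposal does not address the stated lemma at all. The statement you were asked to prove is Slutsky's theorem itself: a classical probability result asserting that if $X_n \stackrel{d}{\to} X$ and $Y_n \stackrel{P}{\to} c$ then $X_n + Y_n \stackrel{d}{\to} X+c$, $X_nY_n \stackrel{d}{\to} cX$, and $X_n/Y_n \stackrel{d}{\to} X/c$. The paper does not prove this lemma; it merely quotes it as a standard tool inside the proof of Theorem~\ref{Thm: Asymptotic}. A proof of Slutsky's theorem would proceed, for example, by showing that $Y_n \stackrel{P}{\to} c$ implies $(X_n,Y_n) \stackrel{d}{\to} (X,c)$ and then applying the continuous mapping theorem to the maps $(x,y)\mapsto x+y$, $(x,y)\mapsto xy$, $(x,y)\mapsto x/y$.

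What you have written is instead a proof sketch of Theorem~\ref{Thm: Asymptotic} (the asymptotic normality of $w^e$), in which Slutsky's theorem is \emph{invoked} rather than established. Every object you manipulate --- $w^e$, $w^*$, $\bar z$, $\hat A_{N_t}$, $\hat\Sigma_e$ --- is specific to that theorem and has no meaning in the abstract setting of the lemma. So there is a basic mismatch: you have targeted the wrong statement. If the intent was indeed to prove Theorem~\ref{Thm: Asymptotic}, your decomposition and Slutsky-based assembly are close in spirit to the paper's own argument (which also splits off a bias term $(\hat A_{N_t}-A)$-times-something and a fluctuation term handled by the CLT), though the paper organizes the algebra slightly differently and does not isolate the $N_2/N_t$ rate condition you flag. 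But as a proof of the lemma actually quoted, the proposal is simply off-target.
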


Furthermore, consider two estimators $w^a = w^1 + \frac{1}{N_2} \sum_{i=1}^{N_2} A z^{(i)}$ and $w^e = w^1 + \frac{1}{N_2} \sum_{i=1}^{N_2} \hat{A}_{N_t} z^{(i)}$. The aim of Theorem~\ref{Thm: Asymptotic} is to derive the asymptotic distribution of $w^e$. 

The main intuition is: $w^a$ can reach asymptotic normal distribution due to Central Limit Theorem. 
So we want to prove the difference of $w^a$ and $w^e$ is sufficiently small, and converges to zero in probability.
By Slutsky's Theorem, the distribution of $w^e$ is then asymptotic normal distribution. 
But notice that $w^a$ and $w^e$ share different variance (although they converge to the same constant in probability), which will be discussed later. 


We first derive the asymptotic distribution of $w^a$ based on Central Limit Theorem. But before that, we now focus on the distribution of $\sum_i z^{(i)}$. 
By Multidimensional Central Limit Theorem, consider a series of {i.i.d.} random vectors $z^{(i)}$, their mean converges to the normal distribution
$$
\frac{1}{\sqrt{N_2}} \sum_{i=1}^{N_2} \left(z^{(i)}-\mathbb{E}z\right) \stackrel{d} \sim \mathcal{N}(0, \mathbb{D}z)
$$

Furthermore, $w^a$ can be regarded as a transformation of $\sum_i{z^{(i)}}$, with mean
$\mathbb{E}w^a = w^1 + \mathbb{E}A z = w^*$
and variance 
$\mathbb{D}w^a = \frac{1}{N_2}  A \left(\mathbb{D} z \right) A \triangleq \Sigma^2_e $.
We also denote $D \triangleq \mathbb{D} z $ for simplification.

Therefore the following asymptotic property holds
\begin{equation}
    \label{Eqn: asymptotic_wa}
\Sigma^{-1}_e (w^{a} - w^{*}) \stackrel{d} \sim \mathcal{N}\left(0, I_d\right)
\end{equation}

Next we prove the asymptotic property of $w^e$. 
Notice that the relationship between $\hat{\Sigma}^{-1}_e (w^{e} - w^{*})$ and $ \Sigma^{-1}_e (w^{a} - w^{*})$ can be stated as follows, where $\hat{\Sigma}_e^2 = \frac{1}{N_2} \hat{A}_{N_t} \hat{D} \hat{A}_{N_t} $

\begin{equation*}
    \begin{split}
        & \hat{\Sigma}^{-1}_e (w^{e} - w^{*}) - \Sigma^{-1}_e (w^{a} - w^{*}) \\
        =& \hat{\Sigma}_e^{-1}(w^e-w^a) + \left(\hat{\Sigma}_e^{-1} - \Sigma^{-1}_e\right)(w^a-w^*)
    \end{split}
\end{equation*}

First, we would like to show some bounded terms.
Since $x$ is bounded, and the weights in $f^1$ are all bounded, then $f^1(x)$ is bounded. And further we have $y$ is bounded, so $z$ is bounded.
Besides, since we assume $z$ is not always zero, $\mathbb{D}z$ and $\hat{\mathbb{D}}z$ are both two-sided bounded.
Since $\hat{A}_{N_t}$ and $A$ are both two-sided bounded, $\hat{\Sigma}_e^2$ and ${\Sigma}_e^2$ are two-sided bounded.

Besides, since $x$, $y$ are bounded, $w^*$ is bounded. And due to bounded $A, \hat{A}_{N_t}$, $w^a, w^e$ are also bounded.

We will prove this difference converges to zero in probability in the following. Consider the part $\hat{\Sigma}_e^{-1}(w^e-w^a)$ first. We know that 
$$
w^e - w^a = \left(\hat{A}_{N_t}-A \right) \left(\frac{1}{N_2} \sum_{i=1}^{N_2} z^{(i)}\right)
$$

By Lemma~\ref{Lem: covariance}, we know that $\lim_{N_t \to \infty} \mathbb{P} \left(\left\| \hat{A}_{N_t}-A \right\|\right)=0$ Therefore $\hat{A}_{N_t}$ converge to $A$ in probability.
$\hat{A}_{N_t} \stackrel{P}\to A$. Since $\left(\frac{1}{N_2} \sum_{i=1}^{N_2} z^{(i)}\right)$ is bounded, $w^e-w^a$ converges to zero in probability.
\begin{equation*}
    \hat w^e-w_a \stackrel{P}\to 0
\end{equation*}

Furthermore, since we assume $\hat{\Sigma}_e$ is double-side bounded in probability, then $\hat{\Sigma}_e^{-1}$ is also bounded in probability. So we have 
\begin{equation}
\label{Eqn: convergence of 1}
\hat{\Sigma}_e^{-1}(w^e-w^a) \stackrel{P} \to 0
\end{equation}

Next we consider $\left(\hat{\Sigma}_e^{-1} - \Sigma_e^{-1}\right)(w^a-w^*)$. Since the sample covariance converges to population covariance in probability according to Law of Large Numbers, we have
$$
\hat{D} \stackrel{P}\to D
$$

Combining the covergence of $\hat{A}_{N_t}$ and $\hat{D}$, by Slutsky's theorem,

$$\hat{A}_{N_t} \hat{D} \hat{A}_{N_t} \stackrel{P} \to A D A$$
which means that $\hat{\Sigma}^2_e \stackrel{P} \to \Sigma_e^2$. 
Since $\Sigma_e^2$, $\hat{\Sigma}_e^2$ are assumed to be two-side bounded, we can apply Continuous Mapping Theorem to get
$$
\hat{\Sigma}_e^{-1} \stackrel{P} \to \Sigma_e^{-1} 
$$

Since we further assume that $w^a$, $w^e$ are bounded in probability, we can say
\begin{equation}
    \label{Eqn: convergence of 2}
\left(\hat{\Sigma}_e^{-1} - \Sigma_e^{-1}\right) (w^a-w^*) \stackrel{P} \to 0
\end{equation}

Recall that 
$$
\hat{\Sigma}^{-1}_e (w^{e} - w^{*}) = \Sigma^{-1}_e (w^{a} - w^{*}) + \hat{\Sigma}_e^{-1}(w^e-w^a) + \left(\hat{\Sigma}_e^{-1} - \Sigma^{-1}_e\right)(w^a-w^*)
$$

Plug Equation~\ref{Eqn: convergence of 1} and Equation~\ref{Eqn: convergence of 2}, and apply Slutsky's theorem,
$$
\hat{\Sigma}_e^{-1} \left(w^e-w^*\right) \stackrel{d}\sim \mathcal{N}{(0, I_d)}
$$

The proof is done.

\section{Significance Test}
\label{appendix:test}

The significance test is a special type of hypothesis test in statistics, which focuses on whether a parameter is significantly different from zero.
In the hypothesis test process, we first assume $H_0$ holds, and then derive the corresponding distribution of certain statistics.
The asymptotic properties of an estimator can directly lead to the significance test.
For example, we can construct asymptotic normal or asymptotic $\chi^2$ distribution based on these asymptotic properties.

The significance test can be divided into \emph{model test} and \emph{coefficient test}.
The model test is used to determining whether a model is significantly different from a constant model, where a constant model means the model predicts each sample as a constant whatever its features are.
The model test can also be regarded as the test for a set of features.
The coefficient test is used to determining whether a feature can significantly affect its response variable.
Usually, we first do the model test. Once the model is significant, we then do the coefficient test.

We show the model test and coefficient test under significance level $\delta$ as follows, where $w^*$ represents the true parameter, $w^e$ represents the estimator of $w^*$, and $\hat{\Sigma}_e^2$ represents the estimator of the variance of $w^e$. 
Furthermore, $\chi^2_\delta$ is the $\delta$ quantile of $\chi^2$ distribution, and $u_{\delta/2}$ is the $\delta/2$ quantile of normal distribution.
As in the main text, we use $x_j$ to represent $j_{th}$ feature of $x$, and $\Sigma_{ij}$ to represent $i_{th}$ row and $j_{th}$ of a matrix $\Sigma$.

The following statements are all based on the asymptotic properties that
$$
\hat{\Sigma}_e^{-1} \left(w^e-w^*\right) \stackrel{d}\sim \mathcal{N}{(0, I_d)}
$$

\textbf{1. Model Test}
$$H_0: w^*=0\quad \text{v.s.} \quad H_1: w^* \not= 0$$
\begin{itemize}[leftmargin=0.3in]
    \item 
If $w^{e^{T}} (\hat{\Sigma}_{e}^2)^{-1} w^{e} \leq \chi_{\delta}^{2}$, we do not have enough confidence to reject the null hypothesis, which means the model is not significant.
\item 
If $w^{e^{T}} (\hat{\Sigma}_{e}^2)^{-1} w^{e} > \chi_{\delta}^{2}$, we have enough confidence to reject the null hypothesis, which means the model is significant.
\end{itemize}

\textbf{2. Coefficient Test}
$$H_0: w_j^*=0\quad \text{v.s.} \quad H_1: w_j^* \not= 0$$
\begin{itemize}[leftmargin=0.3in]
    \item 
If $\left|\frac{w_{j}^{e}}{\sqrt{(\hat{\Sigma}^2_{e})_{jj}}}\right| \leq u_{\frac{\delta}{2}}$, we do not have enough confidence to reject the null hypothesis, which means the coefficient is not significant.
\item 
If $\left|\frac{w_{j}^{e}}{\sqrt{(\hat{\Sigma}^2_{e})_{jj}}}\right| > u_{\frac{\delta}{2}}$, we have enough confidence to reject the null hypothesis, which means the coefficient is significant.
\end{itemize}

Notice that we use $\hat\Sigma^2$ instead of the true variance $\Sigma^2$, so the better choice of distribution in the coefficient test is \emph{t distribution}. But we still use the normal distribution for simplification, since when the number of samples is large enough, these two distributions are almost equivalent.

\section{Experiment Details}
\label{Sec: Experiment Details}

\subsection{Experiment Settings}
\label{Sec: Experiment Settings}
We will compare our proposed method with traditional Linear Regression. 
In linear regression, we focus on its LSE estimator, and use gradient descent to get $w^{LSE}$. 
In our new proposed method, we first split the datasets into training set (70\%) and validation set (30\%) and finally get $w^e$ based on the trained machine learning model (see Section~\ref{Sec: AsymptoticProperties}). 

In each case, we test whether the practical asymptotic distribution meets theory. We first propose the hypothesis test
$$H_0: w^*=w_t\quad \text{v.s.} \quad H_1: w^* \not= w_t$$
where $w^*$ is the true linear approximation parameter.

\textbf{Remark:} Here we use $w_t$ instead of $0$ (in Section~\ref{Sec: AsymptoticProperties}). 
That is because we aim to test whether $w^*=0$ in Section~\ref{Sec: AsymptoticProperties}, representing whether the parameter is significantly different from $0$.
And here we want to test whether the parameter is significantly different from the true parameter $w_t$.
They are fundamentally the same.

The asymptotic theory tells that $w^{LSE}$ and $w^e$ both reach some asymptotic distribution under $H_0$ (at least, we use this in practice).
Namely, as Section~\ref{Sec: AsymptoticProperties} shows, $\chi^2 = (w-w_t) \Sigma^{-1} (w-w_t) \stackrel{d}\sim \chi^2_{p}$ and $u_j = \frac{w_j-w_{tj}}{\sqrt{\Sigma_{jj}}} \stackrel{d}\sim \mathcal{N}(0,1)$, where $w$ is $w^{LSE}$ or $w^{e}$, and $\Sigma$ is the covariance of $w$.
We will test whether it is true in practice.

We conduct 1000 trials in each experiments.
Each time, we generate a new datasets $S$ with $|S| = 100$ and unlabelled data $S^u$ with $|S^u|=50000$. 
And then calculate $\hat{A} = [\hat{\mathbb{E}}(xx^T)]^{-1}$ with unlabelled data $S^u$, but in practice we can also use $S \cup S^u$ together.
We compute the correctness (KS statistics) and the efficiency (the average std) based on these 1000 trials.

Furthermore, we conduct each experiment repeatedly six times, and calculate the $95\%$ confidence interval of each statistics.
This will help reduce the randomness caused in the experiments. 

\subsection{Experimental Settings for Unbiasedness Test}
\label{appendix: unbiased}
In unbiasedness test (Section~\ref{Sec: Unbiaedness}), we would like to set $|S|=20$ with split rate $r = 0.5$, which means that we use 10 samples for training and 10 samples for validation.
We use a smaller dataset compared with Appendix~\ref{Sec: Experiment Settings} for better showing the bias.
The other hyperparameters will retain as in Appendix~\ref{Sec: Experiment Settings} (non-linear cases).

Each time, we calculate the mean of these 1000 trials.
After repeating this experiment six times, we calculate its $95\%$ confidence interval.
As shown in Section~\ref{Sec: Experiment}, LSE-based methods contain more biasness compared with our methods.

\section{Experiments Under Linear Settings}
\label{Sec: Exp_linear}
For completeness, we here show the comparison under linear scenarios.
We would show that our proposed method also works in a traditional linear scenario, which is 
\begin{equation*}
    f^*(x) = x + \epsilon, \epsilon\sim \mathcal{N}(0,\sigma^2), \epsilon \ \text{is independent of} \ x
\end{equation*}

Our aim is to estimate $w^* = (0,1)$, which can be written as the following hypothesis test
\begin{equation*}
    H_0: w^*=(0, 1)\quad \text{v.s.} \quad H_1: w^* \not= (0, 1)
\end{equation*}

Similar to Figure~\ref{Fig: test_x_2}, Figure~\ref{Fig: test_x} visualizes the simulation results and theoretical distribution here under linear case, and Table~\ref{Tab: LinearCorrectness} shows the same observations quantifiably. We see that three methods can all reach the true asymptotic distribution successfully. Also, the figure here is one group of the six.

\begin{figure}
    \centering
    \subfigure[Baseline: $w_0$]{
        \includegraphics[width=0.3\textwidth]{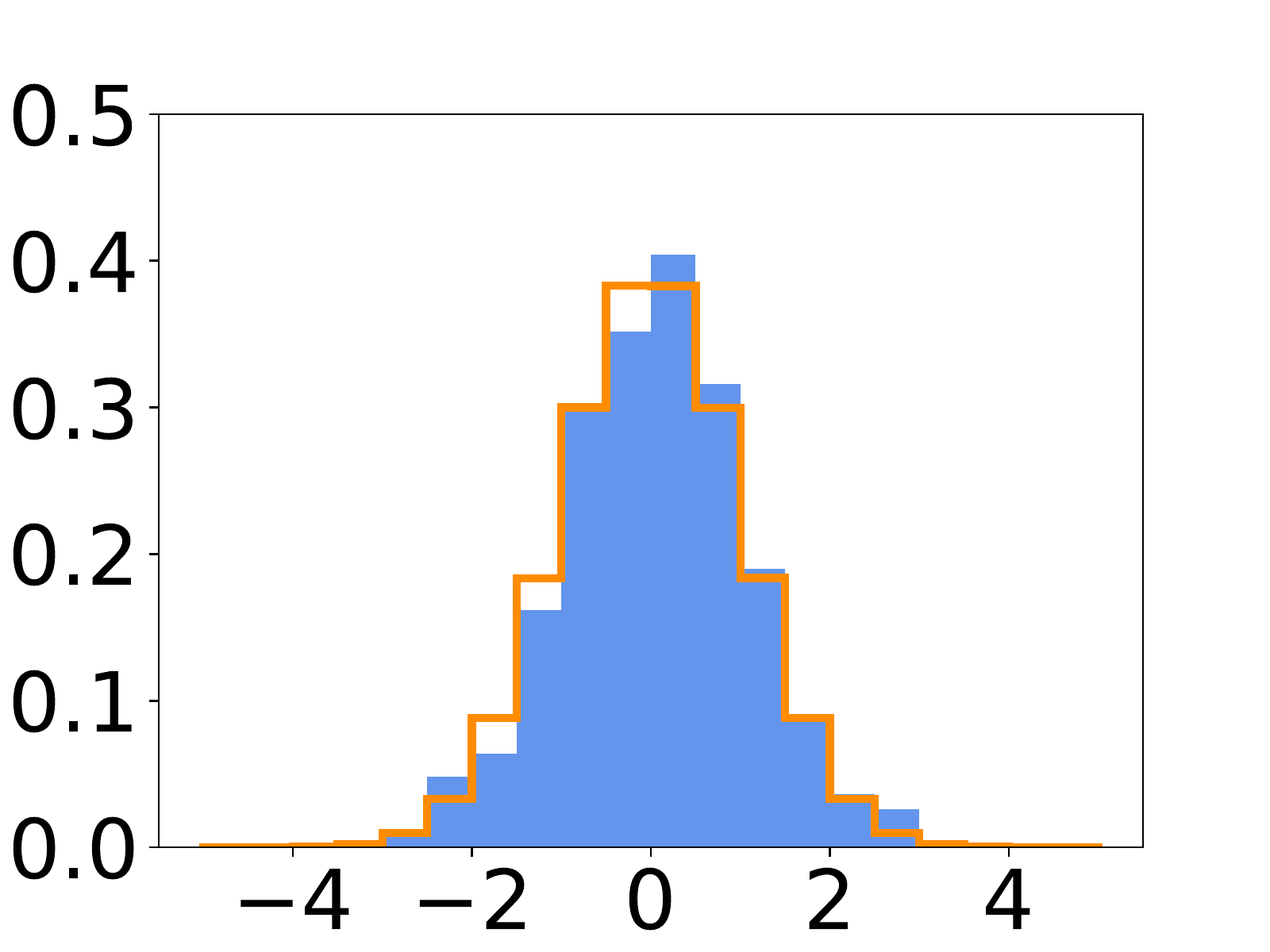}
    }
    \subfigure[Baseline: $w_1$]{
	\includegraphics[width=0.3\textwidth]{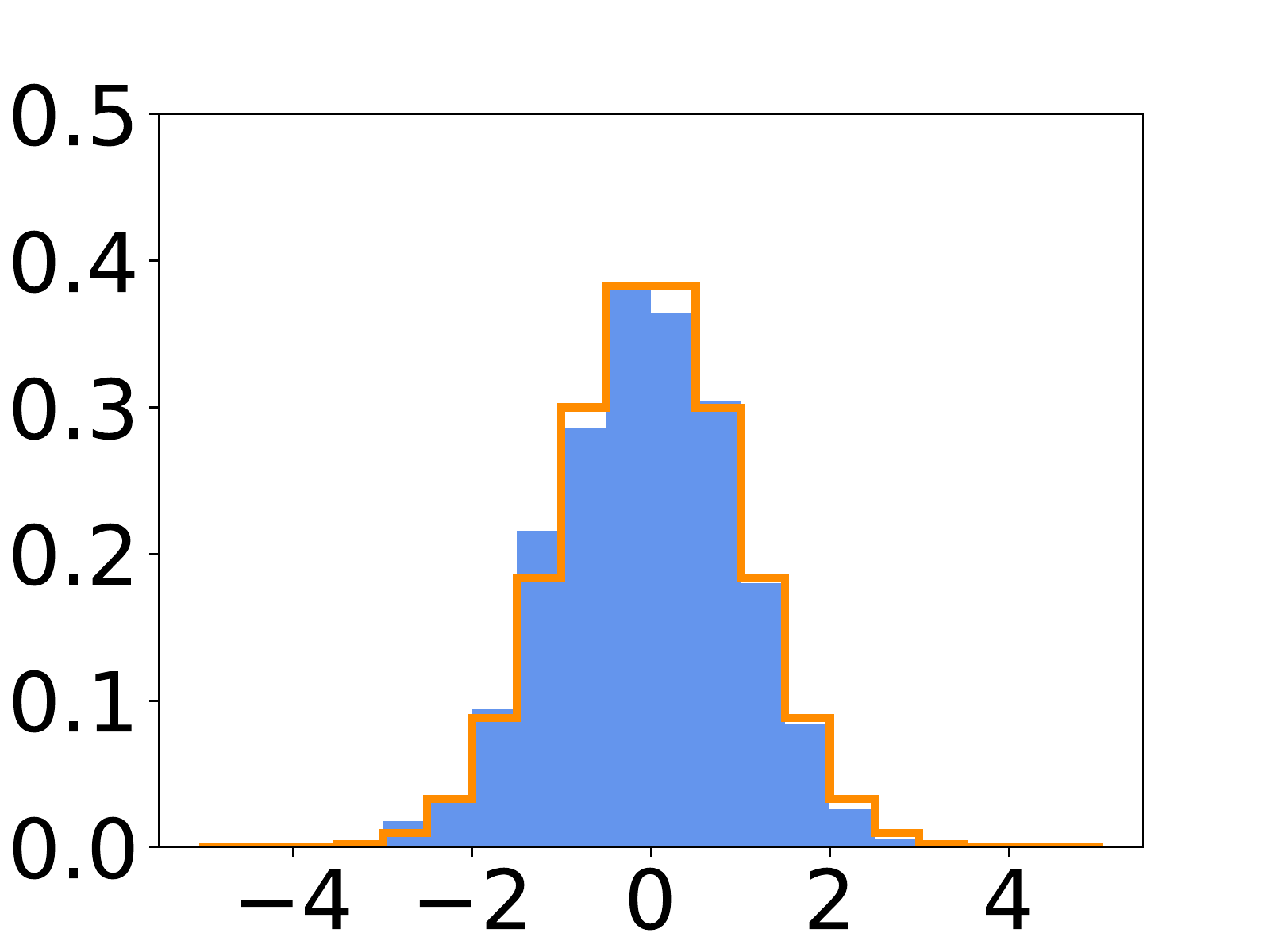}
}
    \subfigure[Baseline: $w$]{
	\includegraphics[width=0.3\textwidth]{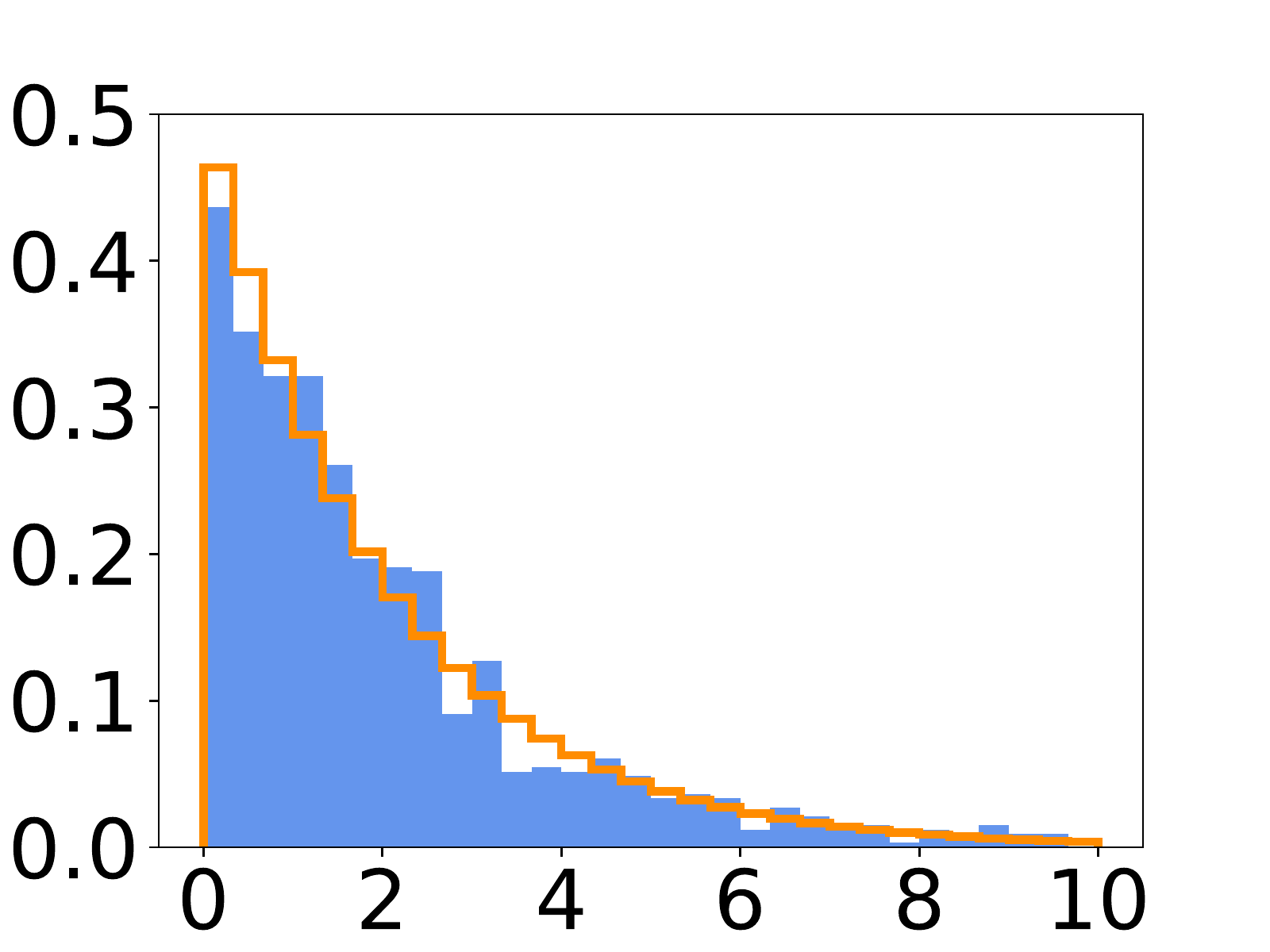}
}

    \subfigure[Ours(NN): $w_0$]{
        \includegraphics[width=0.3\textwidth]{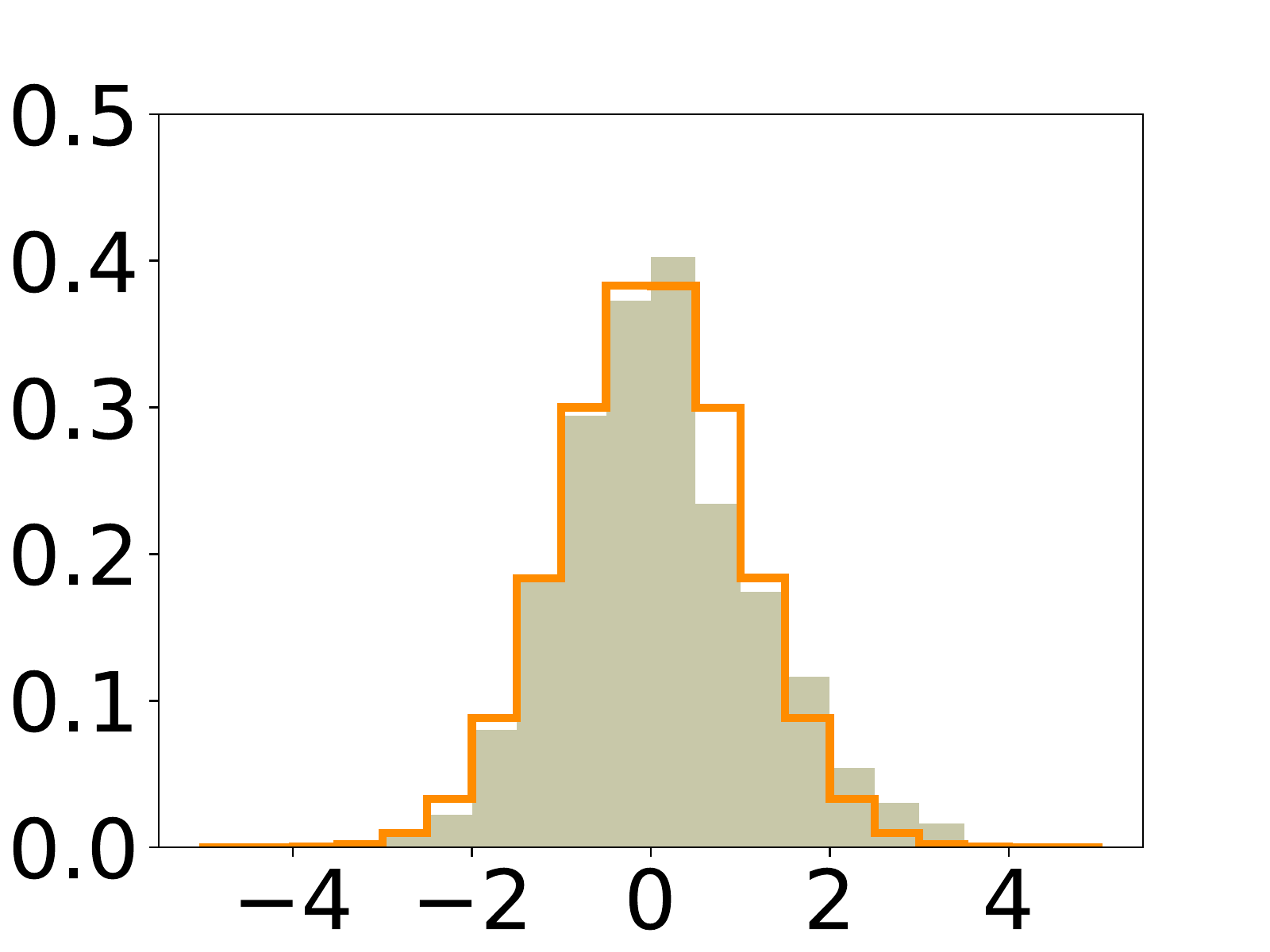}
    }
    \subfigure[Ours(NN): $w_1$]{
	\includegraphics[width=0.3\textwidth]{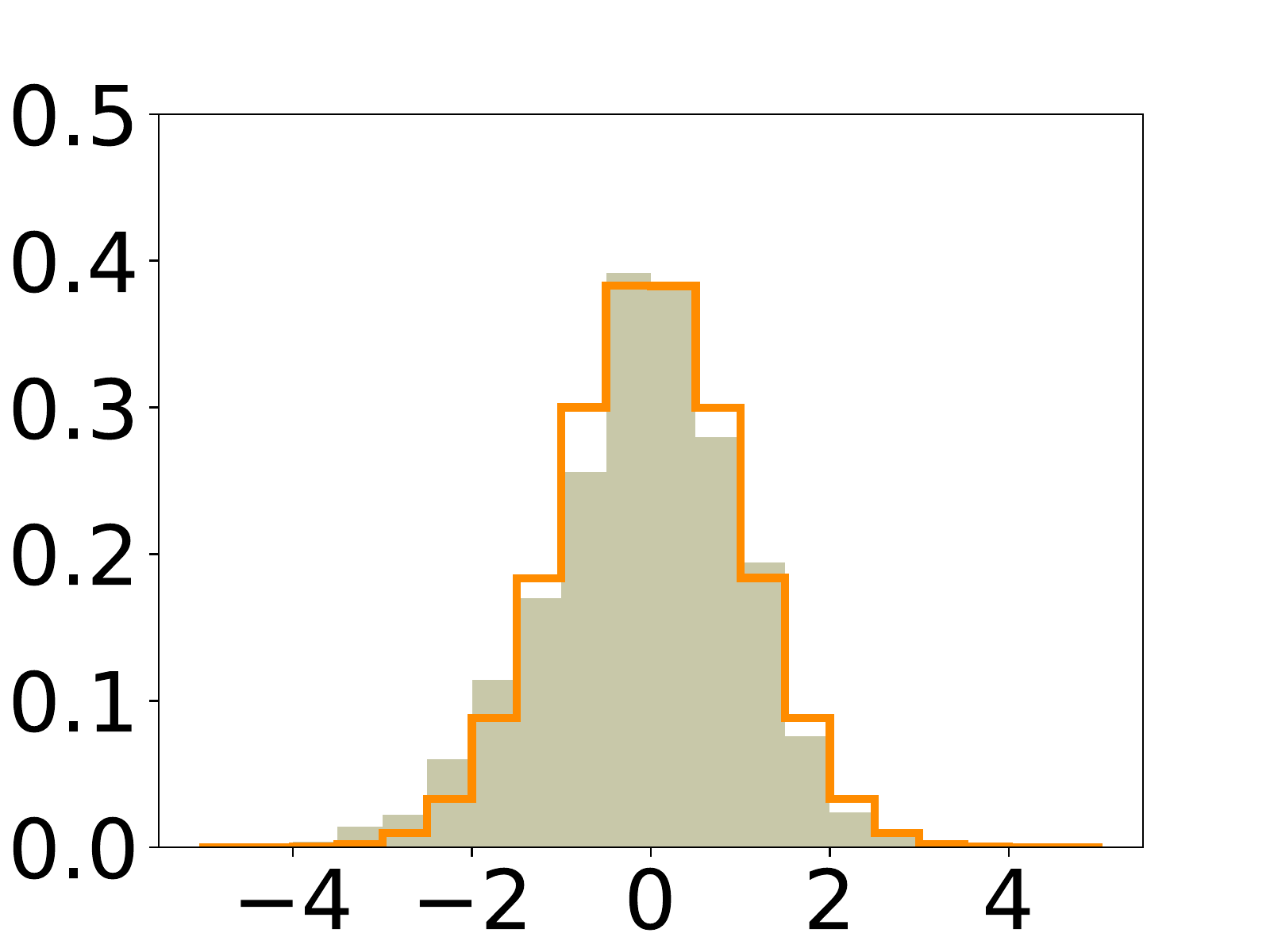}
}
    \subfigure[Ours(NN): $w$]{
	\includegraphics[width=0.3\textwidth]{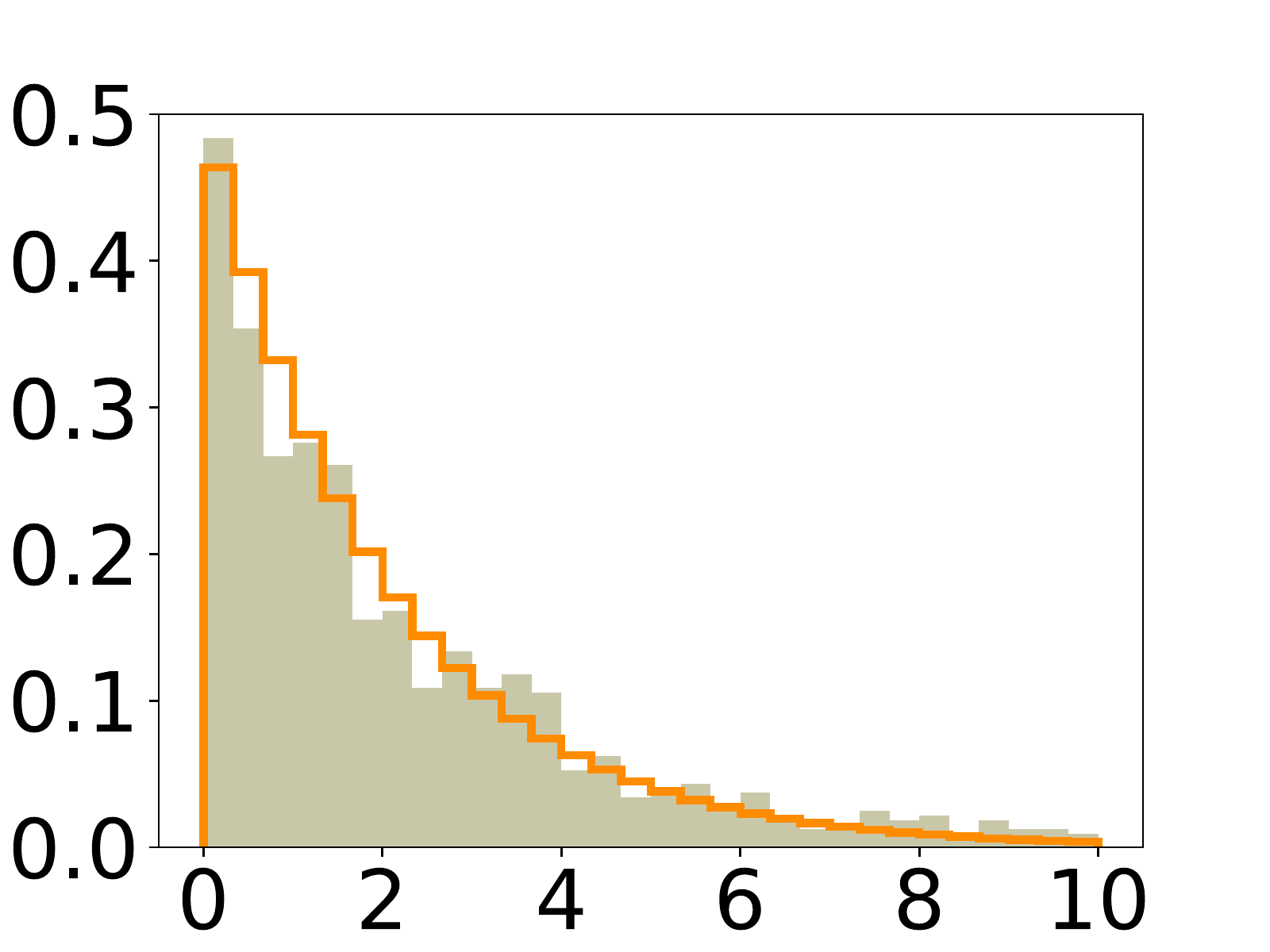}
}

    \subfigure[Ours(L): $w_0$]{
        \includegraphics[width=0.3\textwidth]{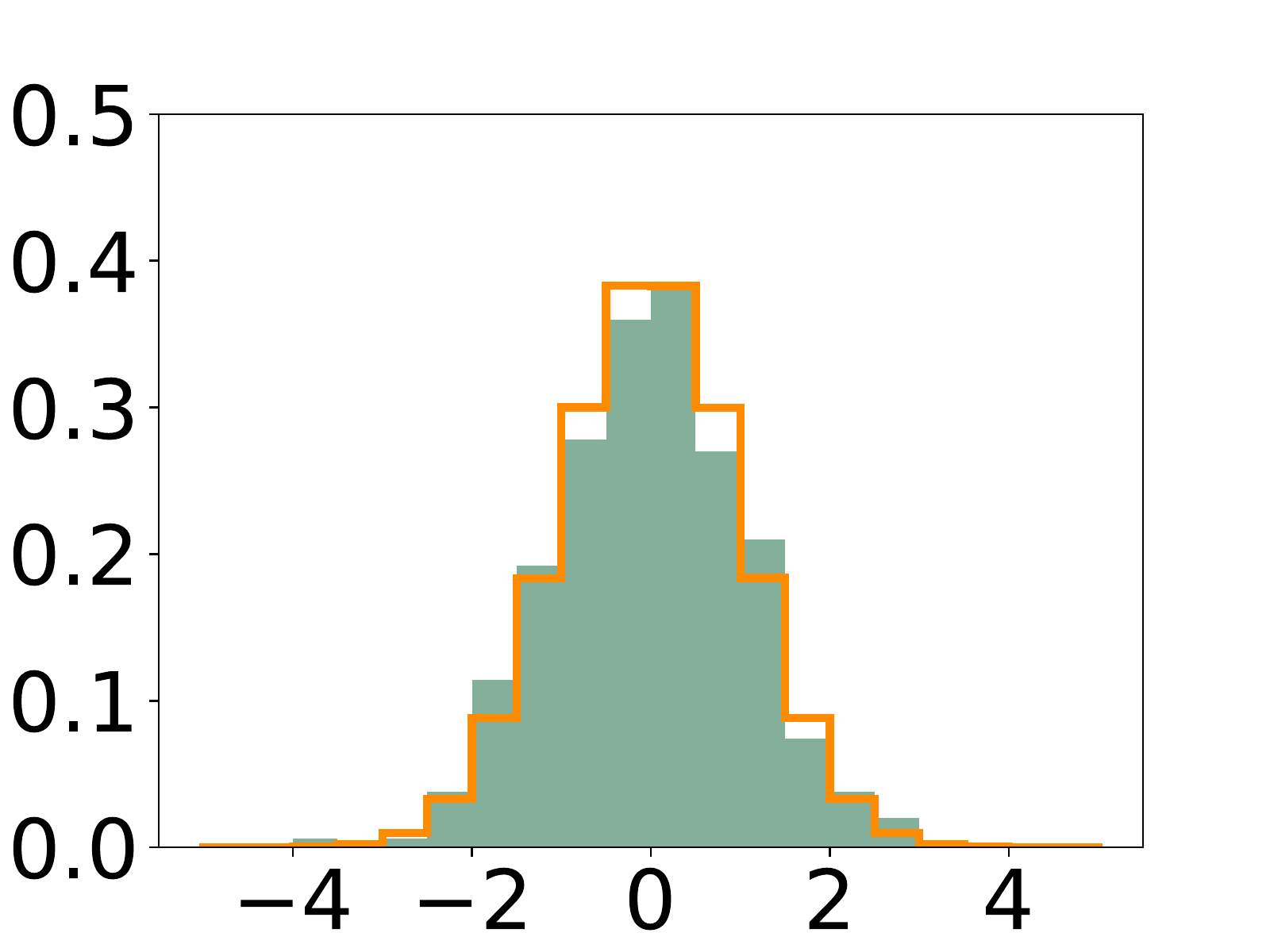}
    }
    \subfigure[Ours(L): $w_1$]{
	\includegraphics[width=0.3\textwidth]{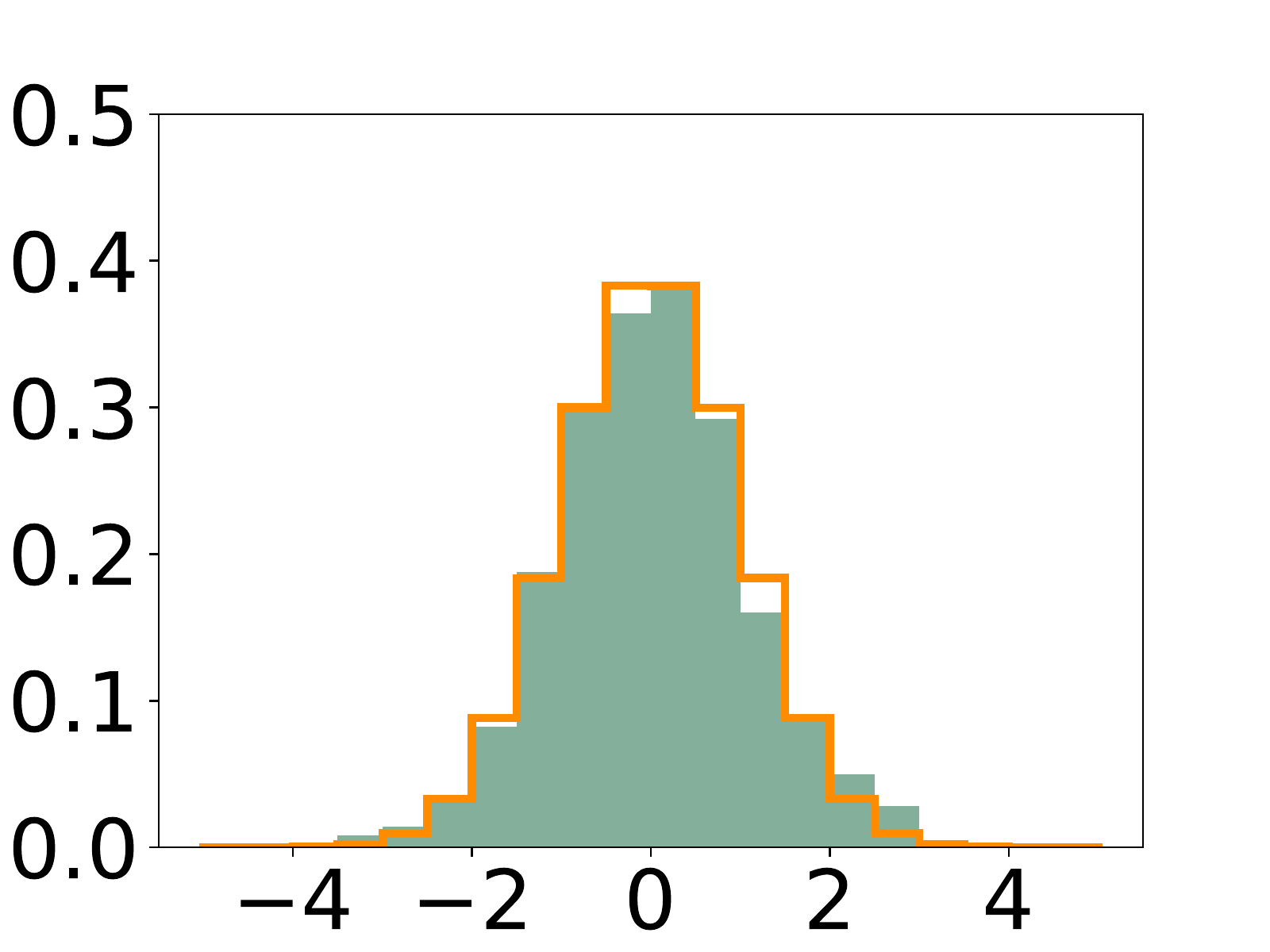}
}
    \subfigure[Ours(L): $w$]{
	\includegraphics[width=0.3\textwidth]{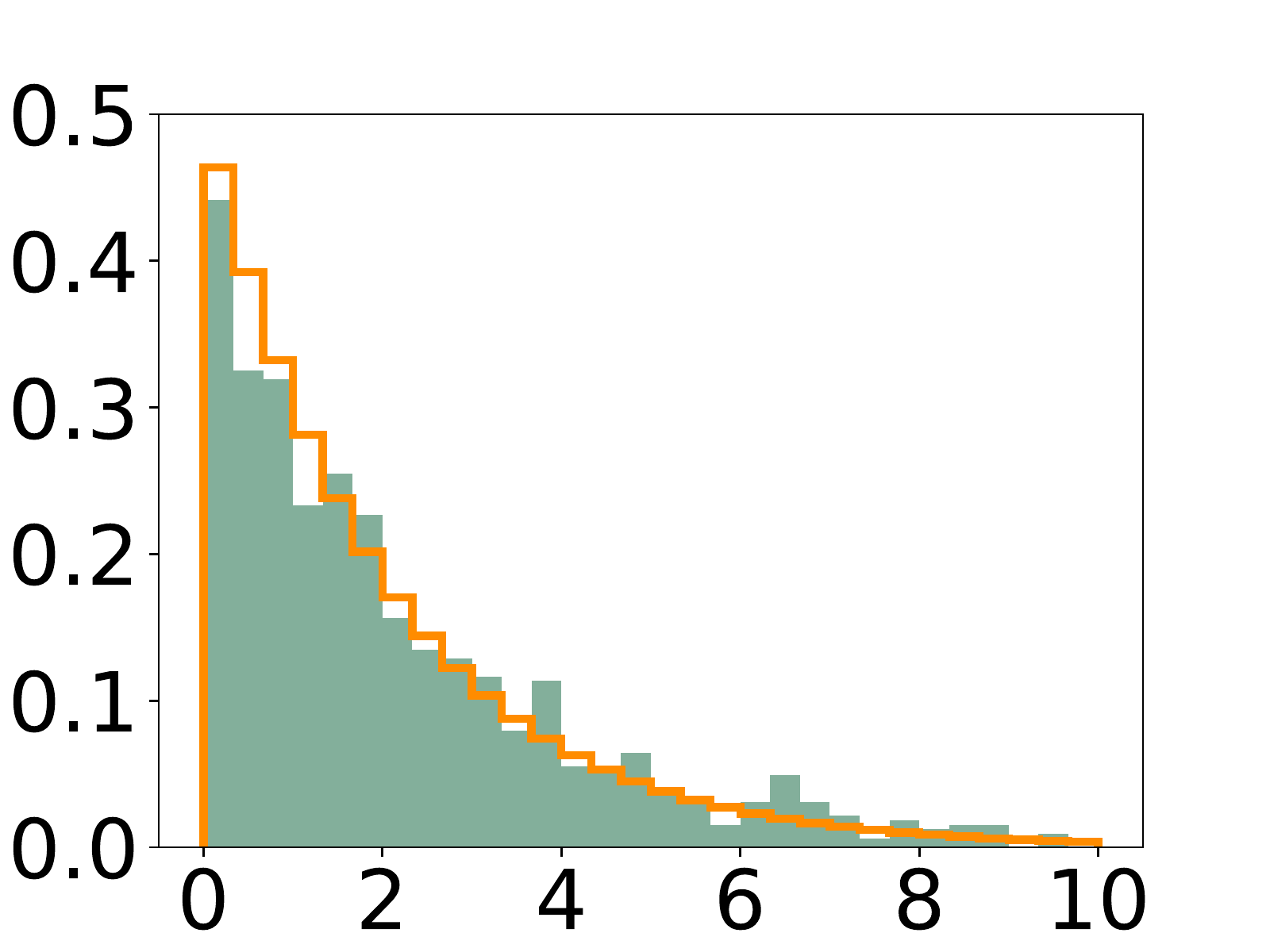}
}
    \caption{Test for parameters under linear scenarios. Orange Line shows the true distribution. Three methods can all reach theoretical distributions successfully.}
    \label{Fig: test_x}
\end{figure} 

\begin{table}[htbp]
	\centering 
	\caption{Correctness Comparison (Linear)}
	\label{Tab: LinearCorrectness} 
	\begin{tabular}{c|c|c|c}  
		\hline 
		\ & $\chi_2$ & normal ($w_0$)& normal ($w_1$) \\
		\hline
		Linear Reg & {\color{red} 0.0288($\pm 0.0059$)}& 0.0368($\pm 0.0050$)& 0.0367($\pm 0.0066$)\\
		Ours(NN) & 0.0643($\pm 0.0081$) & 0.0479($\pm 0.0041$)& 0.0427($\pm 0.0046$) \\		
		Ours(L) & 0.0474($\pm 0.0127$) & {\color{red} 0.0316($\pm 0.0067$)} & {\color{red} 0.0288($\pm 0.0062$)} \\
		\hline
    \end{tabular}
\end{table}

Moreover, Table~\ref{Tab: LinearEfficiency} shows the average standard deviation of each methods with their efficiency. 

\begin{table}[htbp]
	\centering 
	\caption{Efficiency Comparison (Linear)} 
	\label{Tab: LinearEfficiency} 
	\begin{tabular}{c|c|c}  
		\hline 
		\ & $w_0$ &  $w_1$  \\
		\hline
		Linear Reg & 0.0020 ($\pm 0.0000$)& 0.0035 ($\pm 0.0001$)\\
		Ours(NN) & 0.0196($\pm 0.0002$)  & 0.0323 ($\pm 0.0003$) \\		
		Ours(L) & 0.0037($\pm 0.0000$) & 0.0064 ($\pm 0.0000$)\\		
		\hline
	\end{tabular}
\end{table}

\section{Supplementary Experimental Results}
\label{Appendix: supplement}
For reducing the randomness in experiments, we conduct each experiment repeatedly six times.
We calculate the corresponding mean and average with respect to each trial, which is used to constructing confidence interval.

The $95\%$ confidence interval can be calculated as 
$$
\text{mean} + 1.96 * \text{std}
$$

We do not show the efficiency of linear regression in main text, since it returns fake significance level, and its efficiency is shown here just for completeness.
The full results are listed as follows. They corresponds to Table~\ref{Tab: Unbiaseness}, Table~\ref{Tab: NonLinearCorrectness}, Table~\ref{Tab: NonLinearEfficiency}, Table~\ref{Tab: LinearCorrectness}, Table~\ref{Tab: LinearEfficiency}. respectively.

\begin{table}[htbp]
	\centering 
	\caption{Bias Comparison (Non-Linear)}
	\begin{tabular}{c|c|c|c|c|c|c|c|c}  
		\hline 
		\ & Trial 1 & Trial 2 & Trial 3 & Trial 4 &Trial 5 &Trial 6 &average & std\\
		\hline
		Linear Reg ($w_0$)& -0.1718 & -0.1714 & -0.1695 & -0.1708 & -0.1705 & -0.1747 & -0.1715 & 0.0016\\
		Linear Reg ($w_1$)& 1.0033 & 1.0020 & 0.9976 & 1.0021 & 0.9989 & 1.0048 & 1.0015 & 0.0025\\
		Ours(NN) ($w_0$) & -0.1718 & -0.1644 & -0.1681 & -0.1746 & -0.1649& -0.1616 &-0.1676 & 0.0045\\
		Ours(NN) ($w_1$) &  1.0193 & 0.9911 & 1.004 & 1.0267 & 0.995 & 0.9924 & 1.0048 & 0.0137\\	
		Ours(L) ($w_0$) &  -0.1702 & -0.1673 & -0.1624 & -0.1667 & -0.1651& -0.169 & -0.1668 & 0.0025\\
		Ours(L) ($w_1$) &  1.0093 & 1.0018 & 0.9958 & 1.0055 & 0.9972 & 1.0029 & 1.0021 & 0.0046\\	
		\hline
    \end{tabular}
\end{table}

\begin{table}[htbp]
	\centering 
	\caption{Correctness Comparison (Non-Linear)}
	\begin{tabular}{c|c|c|c|c|c|c|c|c}  
		\hline 
		\ & Trial 1 & Trial 2 & Trial 3 & Trial 4 &Trial 5& Trial 6 &average & std\\
		\hline
		Linear Reg ($w$)& 0.1289 &  0.1119 & 0.0928 & 0.1176 & 0.1297& 0.1090   &0.1150 &0.0126\\
		Linear Reg ($w_0$)& 0.0669 &0.0602 & 0.0530& 0.0585 & 0.0648 & 0.0774 & 0.0635 &0.0077 \\
		Linear Reg ($w_1$)&  0.0727& 0.0657 & 0.0688 & 0.0583 & 0.0782 & 0.0853 & 0.0715 &0.0087\\
		Ours(NN) ($w$) &  0.0759 & 0.0763 & 0.0649 & 0.0570& 0.0687& 0.0648 & 0.0679 &0.0067\\
		Ours(NN) ($w_0$) & 0.0376 &0.0343 & 0.0217& 0.0266 & 0.0440 & 0.0314 & 0.0326 &0.0072\\
		Ours(NN) ($w_1$) &  0.0702& 0.0645 & 0.0580 & 0.0555 & 0.0756& 0.0661 & 0.0650&0.0068\\		
		Ours(L) ($w$) &  0.0919& 0.0907 & 0.0688 & 0.0729 & 0.0801 & 0.0814 & 0.0810 &0.0084\\
		Ours(L) ($w_0$) &  0.0461& 0.0683 & 0.0394 & 0.0411 &0.0526 & 0.0459 & 0.0489&0.0096\\
		Ours(L) ($w_1$) &  0.0264 & 0.0395 & 0.0228 & 0.0273 & 0.0271 & 0.0224 & 0.0276&0.0057\\	
		\hline
    \end{tabular}
\end{table}


\begin{table}[htbp]
	\centering 
	\caption{Efficiency Comparison (Non-Linear)}
	\begin{tabular}{c|c|c|c|c|c|c|c|c}  
		\hline 
		\ & Trial 1 & Trial 2 & Trial 3 & Trial 4 &Trial 5 &Trial 6 &average & std\\
		\hline
		Linear Reg ($w_0$)& 0.0148 & 0.0148 & 0.0148 & 0.0148 & 0.0148 & 0.0147 & 0.0148 & 0.0000\\
		Linear Reg ($w_1$)& 0.0257 & 0.0256 & 0.0256 & 0.0256 & 0.0256 & 0.0256 & 0.0256 & 0.0000\\
		Ours(NN) ($w_0$) & 0.0219 & 0.0212 & 0.0210 & 0.0216 & 0.0211& 0.0213 & 0.0214 & 0.0003\\
		Ours(NN) ($w_1$) &  0.0594 & 0.0574 & 0.0570 & 0.0590 & 0.0569 & 0.0661 & 0.0593 & 0.0032\\	
		Ours(L) ($w_0$) &  0.0333 & 0.0325 & 0.0326 & 0.0331 & 0.0329& 0.0324 & 0.0328 & 0.0003\\
		Ours(L) ($w_1$) &  0.0610 & 0.0600 & 0.0600 & 0.0609 & 0.0604 & 0.0600 & 0.0604 & 0.0004\\	
		\hline
    \end{tabular}
\end{table}

\begin{table}[htbp]
	\centering 
	\caption{Correctness Comparison (Linear)}
	\begin{tabular}{c|c|c|c|c|c|c|c|c}  
		\hline 
		\ & Trial 1 & Trial 2 & Trial 3 & Trial 4 &Trial 5  &Trial 6 & average &std\\
		\hline
		Linear Reg ($w$)& 0.0234 & 0.0339 & 0.0244 & 0.0232 & 0.0248 & 0.0433 & 0.0288 &0.0074\\
		Linear Reg ($w_0$)& 0.0361 & 0.0395 & 0.0380 &0.0478 &0.0301 & 0.0294&0.0368 & 0.0062\\
		Linear Reg ($w_1$)&  0.0429 & 0.0341 & 0.0384& 0.0501&0.0278 & 0.0270 & 0.0367 & 0.0082\\
		Ours(NN) ($w$) &  0.0708 & 0.0746 & 0.0496 & 0.0727& 0.0512 & 0.0666 & 0.0643 & 0.0101\\
		Ours(NN) ($w_0$) & 0.0511 & 0.0539 & 0.0519&0.0431 & 0.0479 & 0.0395 & 0.0479 & 0.0051\\
		Ours(NN) ($w_1$) &  0.0487 & 0.0412 & 0.0426&0.0313& 0.0443 & 0.0479 & 0.0427 & 0.0057\\
		Ours(L) ($w$) &  0.0435 & 0.0618 & 0.0211 & 0.0657 & 0.0347 & 0.0576& 0.0474 & 0.0159\\
		Ours(L) ($w_0$) &  0.0314 & 0.0250 & 0.0218 &0.0336&0.0481 & 0.0295 & 0.0316 & 0.0084\\
		Ours(L) ($w_1$) &  0.0227 & 0.0212 & 0.0257 & 0.0263&0.0328 & 0.0441& 0.0288 & 0.0078\\
		\hline
    \end{tabular}
\end{table}

\begin{table}[htbp]
	\centering 
	\caption{Efficiency Comparison (Linear)}
	\begin{tabular}{c|c|c|c|c|c|c|c|c}  
		\hline 
		\ & Trial 1 & Trial 2 & Trial 3 & Trial 4 &Trial 5&Trial 6 & average & std\\
		\hline
		Linear Reg ($w_0$)& 0.0020& 0.0020 & 0.0020 & 0.0020 & 0.0020 & 0.0020&0.0020 & 0.000\\
		Linear Reg ($w_1$)& 0.0034 & 0.0035 & 0.0035 & 0.0035 & 0.0034 & 0.0034&0.0035 & 0.0001\\
		Ours(NN) ($w_0$) & 0.0138& 0.0135 & 0.0136 & 0.0133 & 0.0139 & 0.0138&0.0137 & 0.0002\\
		Ours(NN) ($w_1$) & 0.0230& 0.0225 & 0.0227&0.0222 & 0.0234 & 0.0228&0.0228 & 0.0004\\	
		Ours(L) ($w_0$) & 0.0036& 0.0037 & 0.0036 & 0.0036& 0.0036 & 0.0037&0.0036 & 0.0000\\
		Ours(L) ($w_1$) & 0.0063& 0.0063 & 0.0064& 0.0063&  0.0063 & 0.0063&0.0063 & 0.0000\\		
		\hline
    \end{tabular}
\end{table}


\end{document}